\newtheorem{theorem}{Theorem}
\newtheorem{lemma}[theorem]{Lemma}
\newtheorem{definition}{Definition}
\newtheorem{claim}[theorem]{Claim}
\newcommand{\X}{\mathcal{X}}
\renewcommand{\H}{\mathcal{H}}
\newcommand{\C}{\mathcal{C}}
\newcommand{\Q}{\mathcal{Q}}
\newcommand{\D}{\mathcal{D}}
\newcommand{\T}{\mathcal{T}}
\DeclareMathOperator*{\E}{\mathbb{E}}
\newcommand {\ignore} [1] {}
\def \eqdef {:=}
\DeclareMathOperator{\supp}{supp}
\providecommand{\eqdef}{:=}
\newcommand{\etal}{{\em et al.\ }\xspace}
\def\moverlay{\mathpalette\mov@rlay}
\def\mov@rlay#1#2{\leavevmode\vtop{%
   \baselineskip\z@skip \lineskiplimit-\maxdimen
   \ialign{\hfil$\m@th#1##$\hfil\cr#2\crcr}}}
\newcommand{\charfusion}[3][\mathord]{
    #1{\ifx#1\mathop\vphantom{#2}\fi
        \mathpalette\mov@rlay{#2\cr#3}
      }
    \ifx#1\mathop\expandafter\displaylimits\fi}
\newcommand{\bigcupdot}{\charfusion[\mathop]{\bigcup}{\cdot}}
\renewcommand{\H}{\mathcal{H}}
\renewcommand{\H}{\mathcal{H}}
\newcommand{\Xs}{\mathcal{X}}
\newcommand{\Hsp}{\hat{\mathcal{H}}}
\newcommand{\eps}{\varepsilon}
\renewcommand{\L}{\mathcal{L}}
\newcommand{\Loss}{\mathcal{L}}
\DeclareMathOperator{\sign}{sign}
\title{Margins are Insufficient for Explaining Gradient Boosting}
\author{Allan Gr{\o}nlund \thanks{Computer Science Department. Aarhus University. \texttt{jallan@cs.au.dk}.} \qquad 
Lior Kamma \thanks{Computer Science Department. Aarhus University. Supported by a Villum Young Investigator Grant \texttt{lior.kamma@cs.au.dk}.} 
\qquad 
Kasper Green Larsen \thanks{Computer Science Department. Aarhus University. Supported by a Villum Young
    Investigator Grant, an AUFF Starting Grant and a DFF Sapere Aude Starting Grant. \texttt{larsen@cs.au.dk}. }
}
\begin{document}
\date{}
\maketitle

\begin{abstract}
  Boosting is one of the most successful ideas in machine learning, achieving great practical performance with little fine-tuning.
  The success of boosted classifiers is most often attributed to improvements in margins. 
  The focus on margin explanations was pioneered in the seminal work by Schapire et al. (1998) and has culminated in the $k$'th margin generalization bound by Gao and Zhou (2013),
  which was recently proved to be near-tight for some data distributions (Gr\o nlund et al. 2019).
  In this work, we first demonstrate that the $k$'th margin bound is inadequate in explaining the performance of state-of-the-art gradient boosters.
  We then explain the short comings of the $k$'th margin bound and prove a stronger and more refined margin-based generalization bound for boosted classifiers
  that indeed succeeds in explaining the performance of modern gradient boosters.
  Finally, we improve upon the recent generalization lower bound by Gr\o nlund et al. (2019).
\end{abstract}

\section{Introduction}
Boosting is a powerful technique for producing highly accurate voting classifiers by combining less accurate base learners.
Boosting algorithms are typically easy to fine tune and obtain state-of-the-art performance on many learning tasks.
Boosting dates back to the seminal work introducing the AdaBoost algorithm~\cite{freund_jcsc} and much work has gone into understanding and developing better boosting algorithms.
The best performing boosting algorithms are typically variants of gradient boosters~\cite{Friedman00greedyfunction}, such as LightGBM~\cite{lightgbm} and XGBoost~\cite{xgboost}, using Regression Trees as base learners.

Classic experiments~\cite{SFBL98} showed that boosting algorithms tend to improve their test accuracy even when training past the point of perfectly classifying the training data.
This may seem counter-intuitive, as adding more base learners, results in a more complex model, that hence might be more prone to overfitting.
This phenomenon is often explained by observed improvements in margins.
 For binary classification with a sample space $\Xs$, labels in $\{-1,1\}$ and a class of base learners $\H \subseteq \Xs \to [-1,1]$, a voting classifier $f : \Xs \to \{-1,1\}$ has the form $f(x) = \sign(\sum_{h \in \H} \alpha_h h(x))$ with all $\alpha_h \geq 0$. A voting classifier thus takes a weighted ``vote'' among the base learners to obtain its prediction.
When speaking of margins, we assume $\sum_h \alpha_h = 1$, which can always be achieved by rescaling the $\alpha$'s by their sum without changing $f$.
The margin of a training point $(x,y)$ with $x \in \Xs$ and $y \in \{-1,1\}$ is then defined as $y \sum_h \alpha_h h(x)$. The margin is thus a value in $[-1,1]$ which is positive when $f(x) = y$ and negative otherwise.
Intuitively, large (positive) margins mean that $f$ is not only correct but very certain in its predictions.
Margin theory, starting with the work of Schapire et al.~\cite{freund_jcsc}, formalized this by proving generalization bounds demonstrating that large margins imply better generalization. It was also shown that the theoretical generalization bounds fit very well with the observed behavior of AdaBoost that tends to keep improving margins even when training past the point of perfectly classifying the training data ~\cite{SFBL98}.

However, shortly after ~\cite{freund_jcsc} and~\cite{SFBL98} was published, Breiman \cite{arc_gv} proved a generalization bound based on the minimal margin (the smallest margin achieved by a training point) that was sharper than the generalization bound in Schapire et al.~\cite{freund_jcsc}. He then designed a new boosting algorithm, named \textit{Arc-GV}, that provably optimizes the minimal margin, which AdaBoost does not (see~\cite{gronlund_icml_19} for the full story of maximizing the minimal margin).
In the same paper, Breiman experimentally showed that Arc-GV produced not just a better minimal margin, but better margins overall, than AdaBoost. However, AdaBoost still obtained a better generalization and test error.
This seemed to contradict margin theory, as according to margin theory, all other things being equal, then larger margins should imply better generalization. 
Later it was shown by Reyzin and Schapire~\cite{reyzin_2006} that Breiman's experiments did not accurately take into account the complexity of the base learner trees created by AdaBoost and Arc-GV, as repeating the experiments showed that Arc-GV produced trees of larger depth than AdaBoost, and deeper trees may be more prone to overfitting.
Reyzin and Schapire then considered the same experiments using  stumps as base learners, forcing identical depth trees between the algorithms, and in this case, AdaBoost produced better margin distributions than Arc-GV and also generalized better.
These findings support the view that better margins provide better generalization as presented in \cite{freund_jcsc, SFBL98}.

Later, \cite{emargin, koltchinskii2002, GZ13} showed improved generalization bounds that subsumed both the generalization bounds by Schapire et al., and Breiman, providing further theoretical support for margin theory.
The current strongest generalization bounds are as follows. Let $\D$ be any distribution over $\Xs \times \{-1,1\}$ and define $\Loss_\D(f) = \Pr_{(x,y) \sim \D}[f(x) \neq y]$ as the out-of-sample error of a voting classifier $f$. Also, for a set $S = \{(x_i,y_i)\}_{i=1}^m$ of $m$ labeled samples drawn i.i.d. from $\D$, define $\Loss_S^\theta(f) = \Pr_{(x,y) \sim S}[yf(x) < \theta]$ as the fraction of points in $S$ with margin less than $\theta$ (the notation $(x,y) \sim S$ denotes a uniform random point $(x,y)$ in $S$). With this notation, there are two strongest current generalization bounds. The first~\cite{koltchinskii2002} uses Rademacher complexity to show that with high probability over the sample set $S$, it holds for every margin $\theta \in (0,1]$ and every voting classifier $f$ that:
\begin{equation}
  \label{eq:rademacher}
\Loss_\D(f) \leq \Loss_S^\theta(f)+ O\left(\sqrt{\frac{\lg |\H|}{\theta^2 m}}\right).
  \end{equation}
The $k$'th margin bound by Gao and Zhou~\cite{GZ13} improves  this for $\Loss_S^\theta(f) = o(1/\lg m)$ and is as follows:
\begin{equation}
  \label{eq:kthmargin}
\Loss_\D(f) \leq \Loss_S^\theta(f)+ O\left(\frac{\lg |\H| \lg m}{\theta^2 m} + \sqrt{\Loss_S^\theta(f) \cdot \frac{\lg |\H| \lg m}{\theta^2 m}}\right).
\end{equation}
The $k$'th margin bound subsumes both Breiman's min margin generalization bound and the original generalization bound by Schapire et al.
For infinite $\H$, one may replace $\lg |\H|$ in the above bounds with the VC-dimension of $\H$ times a $\lg m$ factor (as is standard). For simplicity, we focus on the case of finite $\H$ throughout the paper.
Moreover, recent work by Gr\o nlund et al.~\cite{gronlund_nips19} shows that the margin bounds above are near-tight. Formally, they show that for (almost) all margins $\theta$, there exists a data distribution $\D$ and a set of base learners $\H$, such that with constant probability over the sample set $S$, there is a voting classifier $f$ such that
\begin{equation}
  \label{eq:lower}
\Loss_\D(f) \geq \Loss_S^\theta(f) + \Omega\left( \frac{\lg |\H| \lg m}{\theta^2 m} + \sqrt{\Loss_S^\theta(f) \cdot \frac{\lg |\H|}{\theta^2 m}}\right).
\end{equation}
Moreover, the lower bound holds for any  value of $\Loss_S^\theta(f) \leq 49/100$ and any value of $\lg|\H|$~\cite{gronlund_nips19}.

\textbf{Remark.} Many boosting algorithms produce classifiers $f = \sum_h \alpha_h h$ where $\sum_h \alpha_h \neq 1$ or where base learners output values in  $\mathbb{R}$ rather than $[-1,1]$.
To apply margin theory, following \cite{singer_confidence}, such classifiers are rescaled as follows: For each $h$ with output range $[a_h,b_h]$ and coefficient $\alpha_h$,  divide all outputs of $h$ by $\Delta_h = \max\{|a_h|,|b_h|\}$, multiply $\alpha_h$ by $\Delta_h$, ans then divide all $\alpha_h$ by $\sum_h \alpha_h$.

\subsection{Our contribution.}
\paragraph{A new margin lower bound:}
Comparing the current best upper and lower bounds, we see that~\eqref{eq:kthmargin} and~\eqref{eq:lower} match when $\Loss_S^\theta(f)$ approaches $0$. Similarly, we see that~\eqref{eq:kthmargin} and~\eqref{eq:rademacher} match as $\Loss_S^{\theta}(f)$ approaches a constant. But what is the true behavior in-between? The $k$'th margin bound~\eqref{eq:kthmargin} gained the factor $\Loss_S^\theta(f)$ inside the $\sqrt{\cdot}$ but lost a factor $\lg m$ compared to~\eqref{eq:rademacher}. Can the $\lg m$ factor be removed? What is the correct behavior as $\Loss_S^\theta(f)$ goes from $0$ towards $1$? In this work, we show an improved generalization lower bound of:
\begin{equation}
  \label{eq:our}
\Loss_\D(f) \geq \Loss_S^\theta(f) + \Omega\left( \frac{\lg |\H| \lg m}{\theta^2 m} + \sqrt{\Loss_S^\theta(f) \cdot \frac{\lg |\H| \lg(\Loss_S^\theta(f)^{-1})}{\theta^2 m}}\right).
\end{equation}
Our lower bound shows that the $\lg m$ factor inside the $\sqrt{\cdot }$ has to show up as $\Loss_S^\theta(f)$ drops to $m^{-\eps}$ for any constant $\eps > 0$. Moreover, our new lower bound completely settles the generalization performance of boosting in terms of margins whenever $\Loss_S^\theta(f)$ is outside the range $m^{-o(1)}$ to $o(1)$. It also nicely interpolates between the $\Loss_s^\theta(f) = 0$ and $\Loss_S^\theta(f) = 1$ case. We conjecture that the lower bound gives the correct margin-based tradeoff, i.e. that it is possible to improve the upper bounds~\eqref{eq:rademacher} and~\eqref{eq:kthmargin} to match~\eqref{eq:our}. Our proof is based on the work in \cite{gronlund_nips19}, and the recent near-tight generalization lower bound proof for Support Vector Machines shown in \cite{gronlund_icml_20}.
\paragraph{A new refined margin generalization bound:}
The main part of our paper considers a new refined margin based generalization bound for voting classifiers (boosting algorithms).
First, we present experiments showing that the classic margin bounds alone fail to explain the performance of state-of-the art gradient boosting algorithms. More concretely, we show that gradient boosters actually may produce smaller and smaller margins when run for many iterations, despite the test accuracy staying the same or even improving. We additionally demonstrate that the classic version of AdaBoost may produce significantly better margins than gradient boosters, despite gradient boosters obtaining similar or even better test accuracy and generalization error than AdaBoost. To explain this inconsistency, we observe experimentally that the trees produced by gradient boosters return very small values on all but a few training points, thus making minimal changes to most predictions when added to the voting classifier. We then use this insight to prove a new margin-based generalization bound for boosting algorithms which also take into account the magnitude of predictions by base learners. Finally, we run experiments demonstrating that our refined generalization bounds in fact succeed in explaining and predicting the performance of boosting algorithms.
In addition to achieving a better theoretical understanding of boosting algorithms, in particular gradient boosters, these new insights may potentially lead to new algorithms with better accuracy by using regularization inspired by our new generalization bound or more directly optimizing it. 

\section{Insufficiency of current margin bounds}
\label{sec:insufficient}
From the margin-based upper and lower bounds, it may seem that we have all the theory necessary for understanding the generalization performance of boosters.
To confirm the theory, we ran experiments with AdaBoost and the state-of-the-art gradient booster LightGBM on standard data sets with the same size trees as base learners.
For all experiments we only change the tree size and learning rate of the LightGBM hyperparameters. 
For AdaBoost we allow the same tree size, unlimited depth, as well as forcing a minimum number of elements in each tree learner to be 20 as is default in LightGBM.

Figure~\ref{fig:simple_margin} shows a plot of the margin distributions for the two boosters trained on the Forest Cover dataset.
From this plot, it is obvious that AdaBoost achieves significantly better margins than LightGBM. Indeed, the $k$'th smallest margin of AdaBoost, is much larger than the $k$'th smallest margin of LightGBM for all $k$ where at least one of the two margins are non-negative. Thus, from the generalization bounds \eqref{eq:rademacher} and \eqref{eq:kthmargin}, AdaBoost should have a much smaller out-of-sample error than LightGBM. However, the corresponding test errors in Figure~\ref{fig:simple_test} show a very different story, with LightGBM slightly outperforming AdaBoost. Furthermore, as shown in Section \ref{sec:refined}, the trees produced by LightGBM are in fact deeper than the trees produced by AdaBoost. This gives rise to some concerns regarding the explanatory power of margins.  
\begin{figure}[h]
  \begin{subfigure}[t]{0.5\textwidth}            
    \centering
    \includegraphics[width=.9\linewidth]{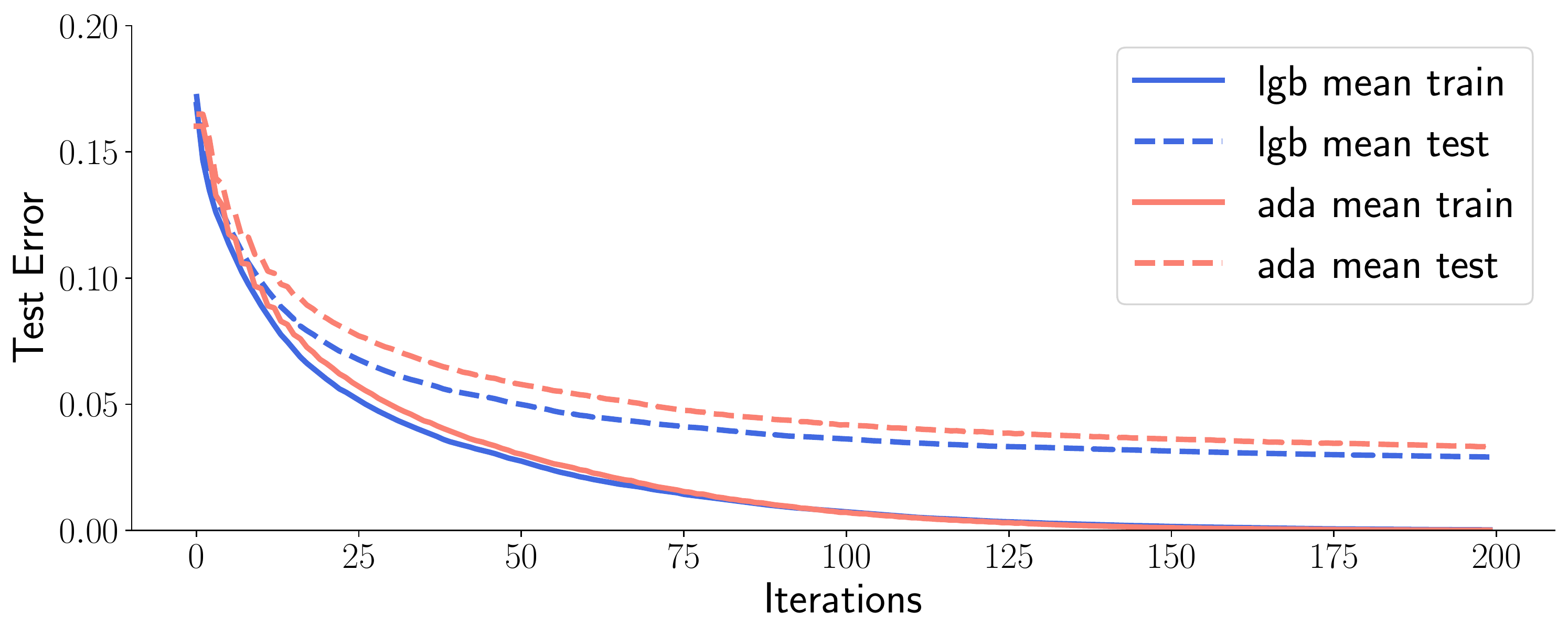}
    \caption{Mean training and test error over five runs. The standard deviation of the final test error is 0.00037 for AdaBoost and smaller for LightGBM.}
    \label{fig:simple_test}
  \end{subfigure}%
  $\quad$
  \begin{subfigure}[t]{0.5\textwidth}
    \centering
    \includegraphics[width=.9\linewidth]{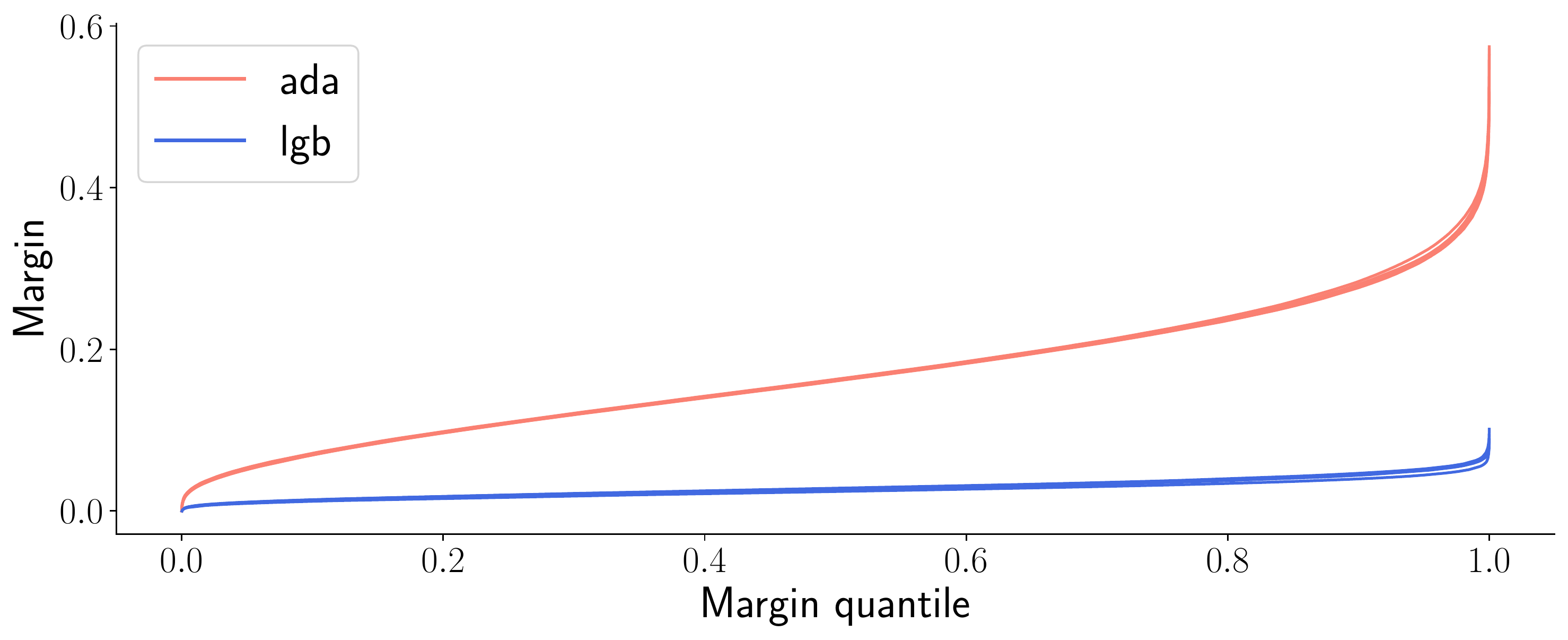}
    \caption{Sorted margin values.}
      \label{fig:simple_margin}
  \end{subfigure}
  \caption{Accuracy and margin plots for AdaBoost and LightGBM on the Forest Cover data set.}
\end{figure}
To further underline the theoretical inconsistency, we examine the two generalization bounds \eqref{eq:rademacher} and \eqref{eq:kthmargin}. When applying the generalization bounds to AdaBoost and LightGBM, then for any choice of $p = \Loss_S^\theta(f) \in [0,1]$, the only parameter that vary between AdaBoost and LightGBM is $\theta^{-2}$. That is, if we choose $\theta$ as the $(pm)$'th smallest margin, i.e. fix $\Loss_S^\theta(f) = p$, then only the value of $\theta$ differ between the two boosters and the generalization error grows as $\theta^{-2}$. Figure~\ref{fig:thetasquare} shows a plot of $\theta^{-2}$ as a function of $\Loss_S^\theta(f)$ for the two boosters. Clearly the penalty in the generalization error is much smaller for AdaBoost, suggesting that AdaBoost should perform much better than LightGBM, despite the test errors in Figure~\ref{fig:simple_test} showing that LightGBM outperforms AdaBoost.
\begin{figure}[h]
  \begin{subfigure}[t]{0.5\textwidth}
    \centering
    \includegraphics[width=.9\linewidth]{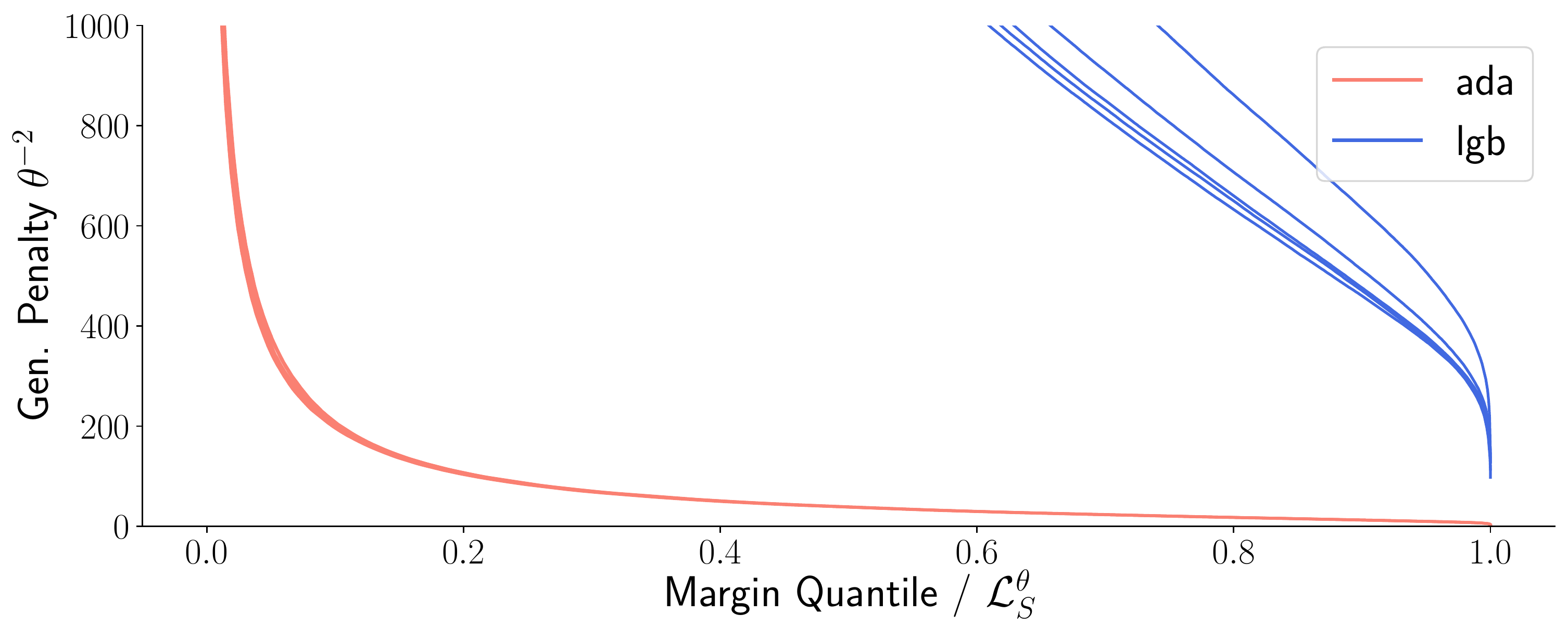}
    \caption{Plot of $\theta^{-2}$ when choosing $\theta$ as the $(pm)$'th smallest margin for $p \in [0,1]$. The margins are those also shown in Figure~\ref{fig:simple_margin}.}
         \label{fig:thetasquare}
  \end{subfigure}
    $\quad$
    \begin{subfigure}[t]{0.5\textwidth}    \centering
    \includegraphics[width=.9\linewidth]{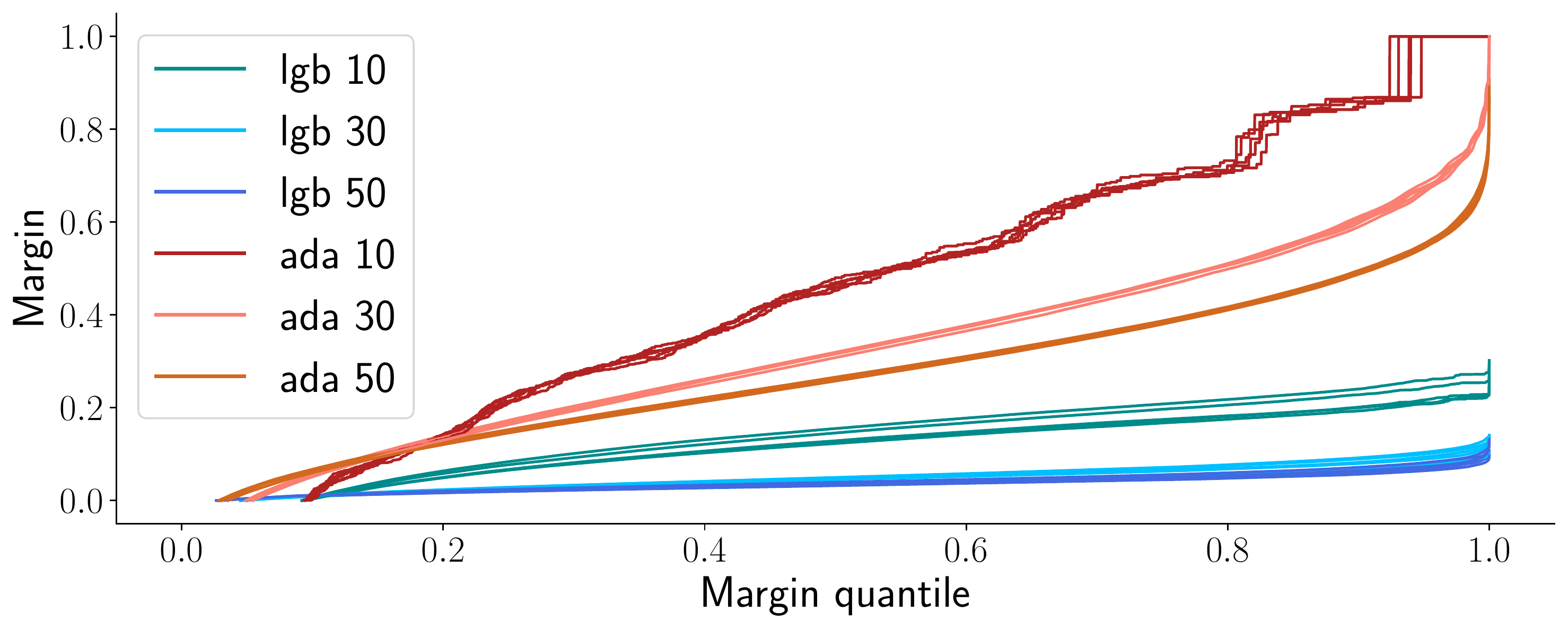}
    \caption{Development in margin distributions for AdaBoost and LightGBM.}
    \label{fig:margin_dev}
  \end{subfigure}%
  \caption{Generalization penalties and margin distributions on the Forest Cover data set.}
\end{figure}
To investigate this phenomenon further, we have plotted the margin distribution of the two boosters after $t=10,20$ and $50$ iterations of training, see Figure~\ref{fig:margin_dev}. It is clear from this plot that the margins of the gradient booster, learned by LightGBM, deteriorate quickly with the number of training iterations.  To explain why the margins quickly drops towards $0$ for the gradient booster, we take a closer look at the  trees produced by LightGBM compared to AdaBoost. Figure~\ref{fig:histogram} shows a histogram of the predictions made by the trees produced by LightGBM. It is very striking from this histogram that the trees making up the LightGBM gradient booster makes very small (in absolute value) predictions on most data points, whereas AdaBoost always makes predictions in $\{-1,1\}$. Note that each tree always has its largest prediction among $\{-1,1\}$. Thus, LightGBM produces  trees that only significantly change the predictions of very few data points, while leaving almost all others unchanged. When training more and more trees, this causes the margins to diminish. To see this, consider as an example a training point $(x,1)$ and assume the first trained  tree $h$ makes a (correct) prediction of $h(x)=1$ and is assigned a weight of $\alpha_h=1$. After the first training iteration, the margin of $(x,1)$ is $1$. However, as training progresses, many more trees may be produced that all predict $0$ on $x$ while being assigned a weight of $1$. Since margins are normalized, $\sum_{h \in \H} \alpha_h = 1$, this means that the margin of $x$ drops to $1/t$ after $t$ rounds of training. The drop in predicted accuracy by the generalization bounds~\eqref{eq:rademacher} and~\eqref{eq:kthmargin} seem unreasonable if we think about the data point $x$ (the error is expected to grow as $t^2$ or $t$).
\begin{figure}[h]
    \centering
    \includegraphics[width=.7\linewidth]{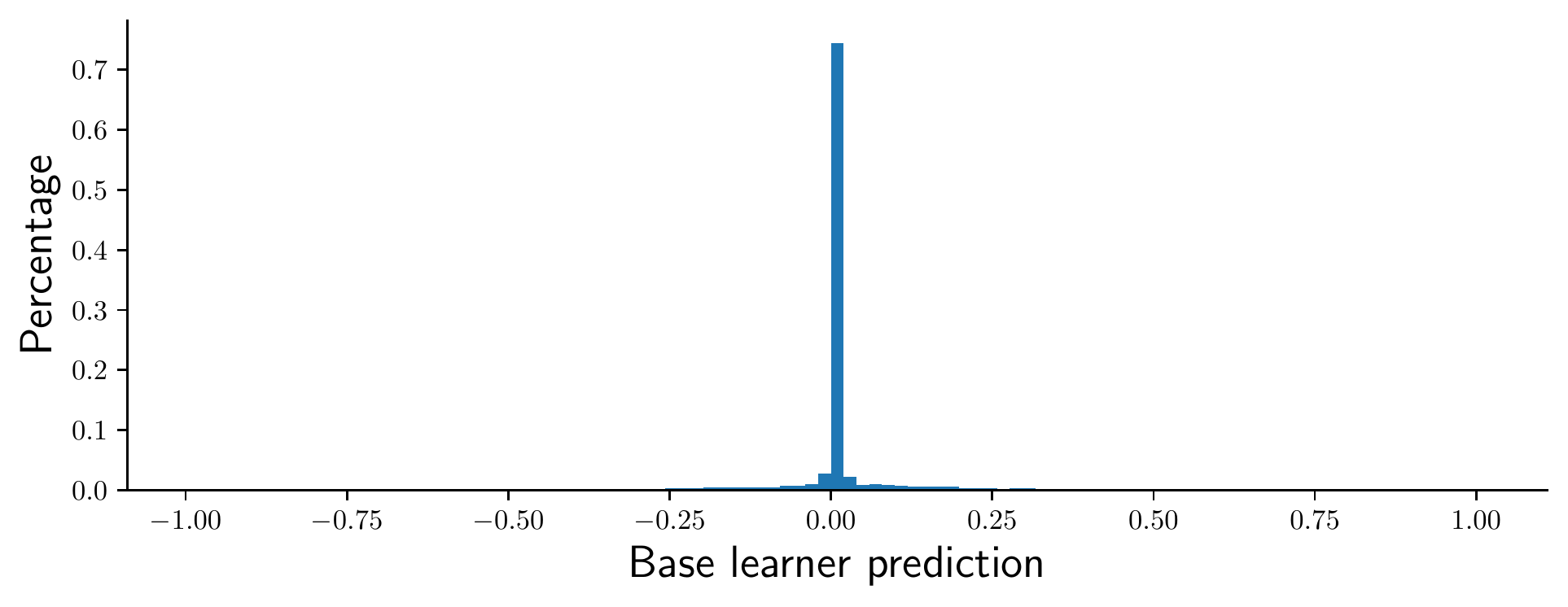}
    \caption{Histogram of base learner predictions for LightGBM on the Forest Cover data set. Only about 1 in 5000 predictions are larger than 0.95 in absolute value.}
    \label{fig:histogram}  
\end{figure}
A possible explanation of the shortcomings of current generalization bounds is thus that they simply treat base learners as arbitrary functions in $\Xs \to [-1,1]$. That is, they pay no attention to the fact that base learners trained by gradient boosters make very small predictions on almost all data points.  
To further support this claim, we note that the proof of the previous generalization lower bound~\eqref{eq:lower} as well as our improved bound~\eqref{eq:our} construct a set of base learners $\H$ where all $h \in \H$ make predictions among $\{-1,1\}$, i.e. they make no predictions of small magnitude. This further supports the belief that an explanation based on the magnitude of predictions may be found, which is the focus of the next section. 
We have used a tree size of 256 as large tree sizes are used in practice and provide better test errors. Furthermore, the phenomena we are studying is clearer for large tree sizes.
In Section \ref{sec:refined} we show results for both large trees and stumps.
We note that base learners with real valued predictions were first considered by Schapire and Singer \cite{singer_confidence} that generalized the generalization bound of Schapire et al. \cite{SFBL98} to work with real values but without otherwise changing the bound.

\section{Refined margin bounds}
\label{sec:refined}
Motivated by the empirical observations in the previous section, we prove a more refined margin based generalization bound for voting classifiers. Define from a voting classifier $f$ the notation $\Delta(x,h) \eqdef |f(x) - h(x)|$. Intuitively, if a voting classifier $f$ has a small margin on a training point $x$, but this is the result of using mostly base learners $h$ that make small predictions (in absolute value), then $\Delta(x,h)$ will be small for most $h$ in $f$. Also define from a voting classifier $f = \sum_h \alpha_h h$ the distribution $\Q(f)$ over base learners, which simply returns $h$ with probability $\alpha_h$. With this notation, our new generalization bound states that for any distribution $\D$ over $\Xs \times \{-1,1\}$ and for any margin $\theta$, it holds with high probability over a set $S \sim \D^m$ that all voting classifiers $f$ satisfy:
\begin{equation}
  \label{eq:newupper}
\Loss_{\D}(f) \le \Loss_S^{\theta}(f) + O\left(\frac{N\lg|\H| \lg m}{m} + \sqrt{\Loss_S^{\theta}(f) \cdot \frac{N\lg|\H| \lg m}{m}}\right)\;,
\end{equation}
where $N = \max\{\theta^{-2}\cdot \left(\E_{(x,y)\sim S}\left[\E_{h \sim \Q(f)}\left[\Delta(x,h)^2\right]^{(\lg(16m))/2}\right]\right)^{2/(\lg(16m)} , \theta^{-1}\}$.

\paragraph{Never worse.}
Comparing our bound to the $k$'th margin bound~\eqref{eq:kthmargin}, we see that~\eqref{eq:newupper} equals the $k$'th margin bound when $N = \Theta(\theta^{-2})$. First, we argue that we always have $N = O(\theta^{-2})$, i.e.~\eqref{eq:newupper} is never worse than the $k$'th margin bound. To see this, observe that $\Delta(x,h) \leq 2$ since all $h \in \H$ produce values in $[-1,1]$. Thus, $\Delta(x,h)^2 \leq 4$ and $\E_{h \sim \Q(f)}\left[\Delta(x,h)^2\right] \leq 4$. This implies $\left(\E_{(x,y)\sim S}\left[\E_{h \sim \Q(f)}\left[\Delta(x,h)^2\right]^{(\lg(16m))/2}\right]\right)^{2/\lg(16m)} \leq 4$, hence we always have $N = O(\theta^{-2})$.

\paragraph{Potentially much better.}
Next, we demonstrate that our new bound may be significantly better than previous generalization bounds for very natural voting classifiers. For any desired margin $\theta \in (0,1]$, consider an example of a voting classifier $f(x) = \sum_{i=1}^{1/\theta}\theta h_i(x)$ such that for each training point $(x,y)$, there is exactly one hypothesis $h_i$ with $h_i(x) = y$ and all others have $h_j(x)=0$. This example is quite similar to the empirical performance of LightGBM seen in Section~\ref{sec:insufficient}, where most hypotheses make small predictions on most training points. The voting classifier $f$ has a margin of $\theta$ on all training points and thus the $k$'th margin bound predicts a generalization error of $O(\lg |\H| \lg m /(m \theta^2))$ (since $\Loss_S^\theta(f) = 0$ when all points have margin $\theta$). Let us now estimate $N$ in~\eqref{eq:newupper}. First, fix an $(x,y) \in S$ and consider the expression $\E_{h \sim \Q(f)}\left[\Delta(x,h)^2\right]^{(\lg(16m))/2} = \left(\sum_{i=1}^{1/\theta}\theta \cdot \Delta(x,h_i)^2 \right)^{(\lg(16m))/2} =\left(\theta \cdot (1-\theta)^2 + (1-\theta)\theta^2 \right)^{(\lg(16m))/2} < \theta^{(\lg(16m))/2}$. Since this holds for every $(x,y)$, we have $\left(\E_{(x,y)\sim S}\left[\E_{h \sim \Q(f)}\left[\Delta(x,h)^2\right]^{(\lg(16m))/2}\right]\right)^{2/\lg(16m)} < \theta$. Plugging that into the definition of $N$, we see that $N \leq \max\{\theta^{-2} \cdot \theta, \theta^{-1}\} = \theta^{-1}$. That is, the dependency on the margin has improved by a factor $\theta$ and our new generalization bound predicts $\Loss_\D(f) = O(\lg |\H| \lg m/(m \theta))$.

\begin{figure}[htb]
    \centering
    \includegraphics[width=.7\linewidth]{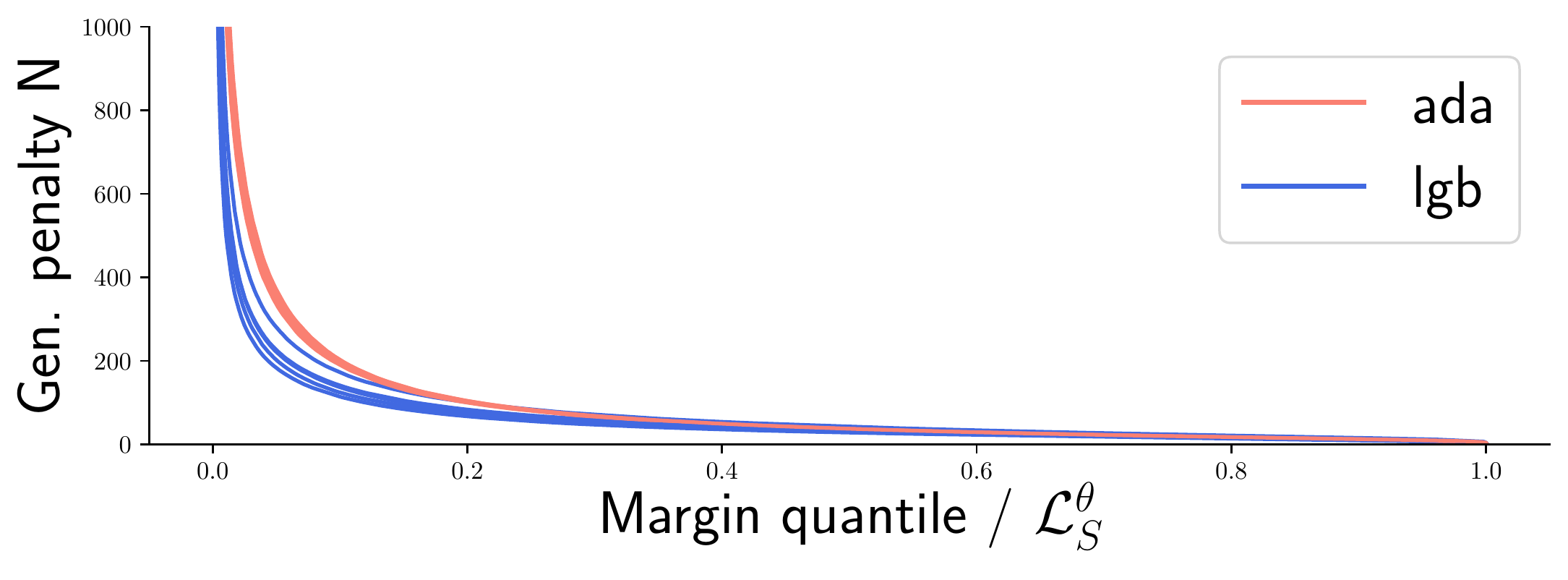}
    \caption{Generalization penalty $N$ on the Forest Cover data set when choosing $\theta$ as the $(pm)$'th smallest margin for $p \in [0,1]$.}
    \label{fig:new_gen_error}  
\end{figure}
\paragraph{Comparison to earlier work.}  
In recent work, Cortes et al. \cite{cortes_regboost}, also proved refined generalization bounds for gradient boosters.
Their works shows, that if the $q$-norm of the vector of leaf predictions for each tree trained by a gradient booster is small, then the trees have smaller VC-dimension and hence the voting classifier has better generalization performance (by using previous generalization bounds).
Note that their bound only depends on the leaf predictions and does not take into account the number of training points in each leaf. Our experiment in Figure \ref{fig:histogram} shows that for each base learner, only a tiny fraction (about 1 in 5000) of training points end in a leaf with large prediction, which our bound takes into account.

\begin{table}[ht]
  \caption{Comparing AdaBoost with LightGBM.
    In this experiment the trees used as bare learners are of increasing size relative to the data size.
    Each value shown is the average over several runs and each run use 200 rounds of boosting.
    Moment is  $\left(\E_{(x,y)\sim S}\left[\E_{h \sim \Q(f)}\left[\Delta(x,h)^2\right]^{(\lg(m))/2}\right]\right)^{2/\lg(m)}$.
  }
  \begin{center}
    \centering
    \begin{tabular}{lllllllll}
  \toprule
  Data Set & Alg. &  Train Err & Test Err &  Mean Margin & Max Depth & Mean Depth & Moment\\
  \midrule
  \multirow{2}{*}{Forest}
    & ada  &  0.0001 & 0.0331 & 0.1696 & 22.0 & 12.4 & 0.969 \\
  & lgb  &  0.0002 & 0.0291 & 0.0280 & 23.7 & 13.9 & 0.025 \\

  \midrule
  \multirow{2}{*}{Boone}   & ada  & 0.00009 & 0.0589 & 0.311 & 17.5 & 10.2 & 0.917 \\
    & lgb  & 0.00009 & 0.0552 & 0.0818 & 17.6 & 10.4 & 0.0564 \\
  \midrule
   \multirow{2}{*}{Higgs}  & ada  &  0.178 & 0.277 & 0.0747 & 24.9 & 13.5 & 0.99 \\
    & lgb  &  0.185 & 0.251 & 0.018 & 26 & 14.7 & 0.0289 \\
  \midrule
  \multirow{2}{*}{Diabetes}  & ada  &  0 & 0.268 & 0.148 & 3.5 & 2.63 & 0.973 \\
    & lgb  &  0.0264 & 0.26 & 0.142 & 3.5 & 2.63 & 0.214 \\
  \bottomrule
\end{tabular}
  \end{center}
\label{tab:large_trees}
\end{table}
\textbf{Empirical evaluation.}
Our new generalization bound carefully takes the magnitude of predictions made by the base learners into account, thus there is hope that~\eqref{eq:newupper} may better explain the experiments in the previous section. To test this, we have run the experiments again, this time plotting the value of $N$ as a function of $p = \Loss_S^\theta(f)$. That is, we notice that for two voting classifiers produced by AdaBoost and LightGBM, respectively, the only thing that varies in~\eqref{eq:newupper} when choosing the $(pm)$'th smallest margin, i.e. $p = \Loss_S^\theta(f)$, is the value of $N$. Thus smaller values of $N$ imply better generalization according to the theory. Figure~\ref{fig:new_gen_error} shows the result of the experiment. Quite remarkably, the relative ordering of AdaBoost and LightGBM match the observed test errors from Figure~\ref{fig:simple_test} much better, i.e. LightGBM slightly outperforms AdaBoost. We have repeated the same experiment on more data sets and summarized the results in Table~\ref{tab:large_trees}. The parameters for the experiments are shown in Table~\ref{tab:sets}

\begin{table}[htb]
    \caption{Data sets, all freely available, and parameters considered in the experiments. LR means learning rate as used in LightGBM.
    For each experiment we randomly split the data set in half to get a training set and a test set of equal size.
    For the Higgs dataset of size 11 million, we sample a subset of 2 million data points that we randomly split evenly into train and test set.
    For Forest Cover only the first two classes are used to make it into a binary classification problem.
  }
  \centering
  \begin{tabular}{llllll}
    \toprule
    Data Set &  Data Size & Tree Size & LR  & Stumps LR & Runs \\
    \midrule
    Diabetes & 768 & 5 & 0.1 & 0.1 &  100 \\
    \midrule
    Boone & 65032 & 96 & 0.2 & 0.6 & 5 \\
    \midrule
    Forest Cover & 495141 & 256 & 0.3 & 0.3 & 5\\
    \midrule
    Higgs & 2000000 & 512 & 0.3 & 0.3 & 5  \\
    \bottomrule
  \end{tabular}
  \label{tab:sets}
\end{table}

In all experiments, the margin distribution, here represented by the mean margin, is much worse for the LightGBM classifier, while the height of the trees used, both the max height and the mean height, is larger. Still the  LightGBM classifier generalizes at least as well (in fact,  slightly better) than the AdaBoost classifier.
Table \ref{tab:large_trees} also shows that the moment value from our generalization bound is significantly better for the LightGBM classifier. 
When we consider our new generalization bound, the theory nicely matches the observed test errors in the same way as was shown in Figure \ref{fig:new_gen_error} for all data sets.
While not final proof that this is the real or only explanation, it suggests that the success of gradient boosters, despite having poor margins,
may be explained by the many small predictions made by the base learner trees.
The standard deviations of the test statistics are left out since they are extremely small for the three large data sets (and we have run 100 iterations of the small Diabetes data set).
For completeness we have included the same experiment replacing the large trees with stumps and shown the results in Table~\ref{tab:stumps}.
The results for stumps match those from the larger trees, just with a smaller difference in margins and moment values.
\begin{table}[htb]
  \caption{Experiments with stumps as base learners. Same setup as in Table~\ref{tab:large_trees}. }

  \centering
  \begin{tabular}{lllllllll}
    \toprule
    Data Set & Alg. &  Train Err & Test Err &  Mean Margin & Moment\\
    \midrule
    \multirow{2}{*}{Forest}  & ada  &  0.223 & 0.224 & 0.0754 & 0.987 \\
    & lgb  &  0.217 & 0.218 & 0.0225 & 0.0986 \\
    \midrule
  \multirow{2}{*}{Boone} & ada  &  0.0781 & 0.0817 & 0.138 & 0.975 \\
  & lgb  &  0.0669 & 0.0744 & 0.0422 &  0.239 \\
  \midrule
  \multirow{2}{*}{Higgs}   & ada  &  0.309 & 0.31 & 0.059  & 0.986 \\
  & lgb  &  0.301 & 0.302 & 0.0309 &  0.329 \\
  \midrule
  \multirow{2}{*}{Diabetes}  & ada  &  0.161 & 0.246 & 0.108 & 0.976 \\
    & lgb  &  0.176 & 0.238 & 0.138  & 0.299 \\
  \bottomrule
  \end{tabular}
\label{tab:stumps}
\end{table}

\section{Generalization Bound Proof} \label{sec:upperProof}
This section is devoted to the proof of our refined margin based generalization bound for voting classifiers, presented hereafter as Theorem~\ref{th:upperMain}. First we recollect some notation. Let $\X$ be some ground set, ${\cal D}$ a distribution over $\X \times [-1,1]$, ${\cal H} \subseteq \X \to [-1,1]$, and $\C=\C(\H)$ be the convex hull of $\H$.
Fix a voting classifier $f$, then there exists a sequence $\left\langle \alpha_h \right\rangle_{h \in {\H}} \in \mathbb{R}_+^{\H}$ such that $\sum_{h \in {\cal H}}{\alpha_h}=1$ and $f = \sum_{h \in {\cal H}}{\alpha_h \cdot h}$. Thus $f$ implicitly defines a distribution $\Q = \Q(f)$ over ${\cal H}$, where $\Pr_{h \sim \Q}[h=h'] = \alpha_{h'}$ for all $h' \in {\cal H}$. Finally, let $\Delta : \X \times \H \to \mathbb{R}$ be defined by $\Delta(x,h) \eqdef |f(x) - h(x)|$ for every $x \in \X$, $h \in \H$. We show the following. 

\begin{theorem}\label{th:upperMain}
Let $\D$ be a distribution over $\X\times\{-1,1\}$ where $\X$ is some ground set, and let ${\cal H} \subseteq \X \to [-1,1]$. For every $\delta > 0$, it holds with probability at least $1-\delta$ over a set of $m$ samples $S \sim {\cal D}^m$, that for every voting classifier $f \in \C(\H)$ and every margin $\theta > 0$, we have
\begin{equation}
\Loss_{\D}(f) \le \Loss_S^{\theta}(f) + O\left(\frac{N\lg|\H| + \lg(1/\delta)}{m} + \sqrt{\frac{N\lg|\H|+\lg(1/\delta)}{m}\Loss_S^{\theta}(f)}\right)\;,
\label{eq:upperMain}
\end{equation}
where $N = O\left(\max\{\theta^{-2}\cdot \left(\E_{(x,y)\sim S}\left[\E_{h \sim \Q(f)}\left[\Delta(x,h)^2\right]^{(\lg(16m))/2}\right]\right)^{2/(\lg(16m)} , \theta^{-1}\}\lg m\right)$.
\end{theorem}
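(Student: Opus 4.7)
The plan is to follow the classical Schapire--Freund--Bartlett--Lee (SFBL) template, as refined by Gao--Zhou, of approximating the voting classifier $f$ by a random $n$-average $f_n \eqdef \tfrac{1}{n}\sum_{i=1}^{n} h_i$ with $h_i \sim \Q(f)$ i.i.d., and then invoking a uniform convergence bound on the small class $\mathcal{F}_n \subseteq \C(\H)$ of $n$-fold averages from $\H$. Since $|\mathcal{F}_n|\le |\H|^n$, such a uniform convergence costs $O(n\lg|\H|)$ in the generalization term, so the whole game is to choose $n$ as small as possible while still keeping $f_n$ a faithful approximation of $f$ on a large fraction of the sample. The classical SFBL argument invokes Hoeffding and pays $n=\Theta(\lg m/\theta^2)$; the novelty here is to replace Hoeffding by a $2k$-th moment inequality with $k=\lceil\lg(16m)/2\rceil$, which is exactly what allows $n$ to shrink to $\Theta(N)$ in the regime where most base learners make small predictions on most training points.

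The main steps in order are as follows. First, the elementary implication that $|f_n(x)-f(x)|\le\theta/2$ together with $yf_n(x)\ge\theta/2$ forces $yf(x)\ge 0$ gives the two pointwise inequalities
\[\mathbf{1}[yf(x)\le 0]\le \mathbf{1}[yf_n(x)\le \theta/2]+\mathbf{1}[|f_n(x)-f(x)|>\theta/2]\]
and, symmetrically, $\mathbf{1}[yf_n(x)<\theta/2]\le \mathbf{1}[yf(x)<\theta]+\mathbf{1}[|f_n-f|>\theta/2]$. Second (the heart of the refinement), for fixed $x$ the variables $X_i \eqdef h_i(x)-f(x)$ are i.i.d., mean-zero, bounded by $2$, with variance $\E_{h\sim\Q(f)}[\Delta(x,h)^2]$, so by a Rosenthal/Marcinkiewicz--Zygmund $2k$-th moment inequality
\[\E_{h_1,\ldots,h_n}\bigl[(f_n(x)-f(x))^{2k}\bigr] \;\le\; O\!\Bigl(\tfrac{k\,\E_h[\Delta(x,h)^2]}{n}\Bigr)^{\!k}+O\!\Bigl(\tfrac{k}{n}\Bigr)^{\!2k}\!.\]
Applying Markov at level $\theta/2$ and averaging over $(x,y)\sim S$ then gives $\E_{(x,y)\sim S}\Pr_{f_n}[|f_n(x)-f(x)|>\theta/2]\le 1/(16m)$ whenever $n$ is a sufficiently large constant multiple of $N$; the peculiar $2/\lg(16m)$-th root in the definition of $N$ is precisely $1/k$, the exponent converting $\E_S[\E_h[\Delta^2]^k]$ into the $L^k$-mean that Markov naturally produces, while the $\theta^{-2}$ branch of $N$ comes from the variance term and the $\theta^{-1}$ branch from the $L^\infty$ term. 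Third, a Bennett--Bernstein tail bound together with a union bound over all $|\H|^n$ elements $g \in \mathcal{F}_n$ and a dyadic grid of margins $\theta'$ (losing only an absorbable $\lg m$ factor) yields, with probability $\ge 1-\delta/2$ over $S$, for every such $g$ and $\theta'$:
\[\Loss_{\D}^{\theta'}(g)\le \Loss_S^{\theta'}(g)+O\!\left(\tfrac{n\lg|\H|+\lg(1/\delta)}{m}+\sqrt{\tfrac{(n\lg|\H|+\lg(1/\delta))\,\Loss_S^{\theta'}(g)}{m}}\right).\]

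Taking expectation over $f_n$ in the pointwise inequalities and applying Fubini yields $\Loss_{\D}(f)\le \E_{f_n}[\Loss_{\D}^{\theta/2}(f_n)]+\E_{(x,y)\sim\D}\Pr_{f_n}[|f_n-f|>\theta/2]$ and, symmetrically, $\E_{f_n}[\Loss_S^{\theta/2}(f_n)]\le \Loss_S^{\theta}(f)+\E_{(x,y)\sim S}\Pr_{f_n}[|f_n-f|>\theta/2]$; inserting the Bennett--Bernstein bound on $\mathcal{F}_n$, using Jensen to pull $\E_{f_n}$ inside the concave $\sqrt{\cdot}$ factor, and plugging in the sample-side moment bound from the second step delivers the stated inequality --- \emph{except} for the residual term $\E_{(x,y)\sim\D}\Pr_{f_n}[|f_n-f|>\theta/2]$. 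This is the principal obstacle: the natural Markov/moment estimate on it produces a $\D$-moment of $\Delta^2$, whereas $N$ is defined through the $S$-moment. I would close the gap by a second, lightweight uniform-convergence layer on the auxiliary function $x\mapsto \E_h[\Delta(x,h)^2]^k$, which via the independence identity $\E_h[X_h]^k=\E_{h_1,\ldots,h_k}[\prod_i X_{h_i}]$ is an average over $\Q(f)^k$ of products $\prod_i(f(x)-h_i(x))^2$; the combinatorial complexity of the resulting class is $O(k\lg|\H|)=O(\lg m\cdot\lg|\H|)$, already absorbed into the $n\lg|\H|$ term above. A Bernstein argument on this class transfers the bound from $S$ to $\D$ up to a constant factor whenever the $S$-moment is non-negligible, and shows it is $O(1/m)$ outright otherwise, so the $\D$-side bad-event probability is $O(1/m)$ in both regimes. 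Executing this moment transfer cleanly --- so that no extraneous $\log$ factors or VC-type terms leak into $N$ --- is where most of the technical bookkeeping of the proof lives.
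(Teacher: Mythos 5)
Your high-level plan matches the paper's: approximate $f$ by a random $n$-average from $\Q(f)^n$, replace Hoeffding on $f_n(x)-f(x)$ by a $2k$-th moment inequality with $k=\Theta(\lg m)$, and close with a Bernstein-type tail plus a union bound over the $|\H|^n$ net elements. Your reading of where the data-dependent $N$ comes from (the variance term of the moment inequality for the $\theta^{-2}$ branch, the $L^\infty$ cap for the $\theta^{-1}$ branch, and $2/\lg(16m)$ as the $1/k$-th root) is correct. The paper does not invoke Rosenthal or Marcinkiewicz--Zygmund as a black box: Lemma~\ref{l:gridApproxWHP} derives the needed $r$-th moment estimate from scratch by expanding $\Delta(x,g)^r$, counting the index patterns $\mathcal{T}_d$, and invoking Lyapunov log-convexity. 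This hands-on derivation is precisely what lets the two branches of $N$ and the $\lg(16m)$ scaling be tracked explicitly rather than hidden in a cited constant.

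The genuine divergence is in how you propose to control the $\D$-side bad-event probability $\Pr_{(x,y)\sim\D,\,g\sim\Q^N}[\,|f(x)-g(x)|>\theta/2\,]$. You correctly flag this as the obstacle --- a naive Markov step produces the $\D$-moment of $\Delta^2$ while $N$ is defined through the $S$-moment --- and propose a second uniform-convergence layer on $x\mapsto\E_{h\sim\Q}[\Delta(x,h)^2]^{k}$. The cardinality count you give for the expanded class is, however, too optimistic: each factor $(f(x)-h_i(x))^2$ in $\prod_i(f(x)-h_i(x))^2$ still contains the arbitrary voting classifier $f$, so the class is not of size $|\H|^{k}$; to obtain a finite net you would have to further expand $f(x)=\E_{h'\sim\Q}[h'(x)]$ inside every factor, landing on $|\H|^{O(k)}$ deterministic products, a step you do not carry out. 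The paper sidesteps the moment-transfer problem entirely. Its moment lemma is stated only for $(x,y)\sim S$, and the event $\mathcal{E}_N$ carries a \emph{second direct Chernoff bound on the $\{0,1\}$-valued indicator} $\mathbbm{1}[\,|f(x)-g(x)|>\ell/N\,]$, one for each $g\in\C_N$ and each grid point $\ell/N$ (re-using the same union bound over $N(N+1)|\H|^N$ choices already paid for), yielding $\Pr_{\D}[\cdots]\le 2\Pr_{S}[\cdots]+O(N\lg|\H|/m)$; the $S$-side probability is then annihilated by the moment lemma at cost $1/m^2$. This is shorter and tighter than estimating and transferring a $\D$-moment, and it stays on the arithmetic $\{\ell/N\}$ grid rather than a dyadic one so no extra $\lg m$ leaks into the bound.
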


Denote by ${\cal E}= {\cal E}(\delta)$ the event that for every voting classifier $f$ and every margin $\theta > 0$, the bound in \eqref{eq:upperMain} holds with $N$ as defined in Theorem \ref{th:upperMain}.
In these notations we prove that $\Pr_{S \sim \D^m}[{\cal E}] \ge 1-\delta$.

\textbf{Proof overview.} Inspired by techniques presented by Schapire \etal \cite{SFBL98} and employed by Gao and Zhou \cite{GZ13}, our proof incorporates a discretization of the set of all voting classifiers over $\H$ to a discrete {\em net} of classifiers, such that, loosely speaking, every voting classifier over $\H$ can be approximated by a classifier that belongs to the net, and in addition, the size of the net is not too big, and thus union bounding over the net yields the desired probability bounds. Thus, intuitively speaking, by randomly rounding every voting classifier $f$ to the net we get an upper bound on the out of sample error for $f$.
More specifically, $N \in \mathbb{N}^+$ be some positive integer. We define a net ${\cal C}_N$ of voting classifier by 
${\cal C}_N \eqdef \left\{\frac{1}{N}\sum_{j \in [N]}{h_j} : \left\langle h_j \right\rangle_{j \in [N]} \in {\cal H}^N\right\}$.
For every voting classifier $f$ over $\H$, we then give a randomized rounding scheme that essentially associates a random net element $g \in \C_N$ with $f$, and show that with high probability the out of sample error with respect to $g$ well-approximates that of $f$. By choosing $N$ carefully and union bounding over $\C_N$ we get an upper bound on the out of sample error for all voting classifiers $f$.
The crux of the proof lies in carefully choosing the size of the net, namely $N$. Loosely speaking, the net size $N$ has to be large enough, so that the net is rich enough to approximate every voting classifier well, but on the other hand small enough, so that union bounding over the net does not incur too large a cost for the probability bound. By subtly choosing $N$ and proving refined bounds on the rounding scheme we get the bound in Theorem~\ref{th:upperMain}.

Formally we define for every $N \in \mathbb{N}^+$, the event ${\cal E}_N$ to be the set of all samples $S \in (\X\times\{-1,1\})^m$ satisfying that for all voting classifiers $g \in C_N$ and integer $\ell \in [0,N]$ it holds that

\begin{equation*}
\Loss_\D^{\ell/N}(g) \le \Loss_S^{\ell/N}(g) + \frac{8\ln(2\delta^{-1}N(N+1)^2|\H|^N)}{m} + 4\sqrt{\frac{\ln(N(N+1)^2|\H|^N/\delta)}{m}\Loss_{S}^{\ell/N}(g)} \; ;
\end{equation*} 
and
\begin{equation*}
\Pr_{(x,y)\sim\D}[\;|f(x)-g(x)| > \ell/N ] \le 2\Pr_{(x,y)\sim S}[\;|f(x)-g(x)| > \ell/N ] + \frac{8\ln(4\delta^{-1}N(N+1)^2|\H|^N)}{m} \;.
\end{equation*}
Intuitively speaking, for $S \in {\cal E}_N$, the first bound ensures a good generalization bound for every voting classifier $g$ in the net, whereas the second bound shows that $g$ approximates $f$ over $\D$ almost as well as it approximates $f$ over $S$. In turn these two bounds imply that the behavior of $f,g$ over $S$ predicts their behavior over $\D$.
As $\sum_{N=1}^\infty{\frac{1}{N(N+1)}}=1$, the following lemma implies Theorem~\ref{th:upperMain} by applying a union bound.
\begin{lemma}\label{l:mainTechnical}
For every $N \in \mathbb{N}^+$ we have $\Pr\limits_{S \sim\D^m}[{\cal E}_N] \ge 1 - \frac{\delta}{N(N+1)}$, and moreover, $\bigcap\limits_{N \in \mathbb{N}^+}{\cal E}_N \subseteq {\cal E}$.
\end{lemma}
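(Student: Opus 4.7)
The proof has two parts: the probability bound on $\mathcal{E}_N$, and the deterministic inclusion $\bigcap_N\mathcal{E}_N\subseteq\mathcal{E}$. For the probability bound I would fix $g\in\mathcal{C}_N$ and $\ell\in\{0,\ldots,N\}$ and invoke a multiplicative Chernoff/Bernstein inequality separately on the two Bernoulli events $\{yg(x)<\ell/N\}$ and $\{|f(x)-g(x)|>\ell/N\}$. For the first I would use the standard form $p\le\hat p+O(\ln(1/\beta)/m)+O(\sqrt{\ln(1/\beta)\,\hat p/m})$, and for the second the coarser multiplicative form $p\le 2\hat p+O(\ln(1/\beta)/m)$; choosing $\beta$ on the order of $\delta/(N(N+1)^2|\H|^N)$ in each case matches the constants inside the logs. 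A union bound over the at most $|\H|^N$ choices of $g\in\mathcal{C}_N$ and the $N+1$ choices of $\ell$ then caps each sub-event's failure probability at $\delta/(2N(N+1))$, giving $\Pr[\mathcal{E}_N^c]\le\delta/(N(N+1))$.

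For the deterministic inclusion, fix $S\in\bigcap_N\mathcal{E}_N$, a voting classifier $f\in\C(\H)$, and a margin $\theta>0$, choose $N$ as in Theorem~\ref{th:upperMain}, and set $\ell:=\lfloor N\theta/2\rfloor$. I would then employ a randomised rounding, sampling $h_1,\ldots,h_N$ i.i.d.\ from $\Q(f)$ and setting $g:=\tfrac{1}{N}\sum_{j=1}^{N} h_j\in\mathcal{C}_N$. The pointwise inclusion of events
\[
\{yf(x)<0\}\subseteq\{yg(x)<\ell/N\}\cup\{|f(x)-g(x)|>\ell/N\},
\]
together with its empirical mirror $\Loss_S^{\ell/N}(g)\le\Loss_S^\theta(f)+\Pr_{(x,y)\sim S}[|f(x)-g(x)|>\theta-\ell/N]$, chain the two bounds guaranteed by $\mathcal{E}_N$ into an inequality of the desired shape, modulo two residual empirical rounding errors of the form $\Pr_{(x,y)\sim S}[|f(x)-g(x)|>r]$ with $r=\Theta(\theta)$.

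To kill these rounding errors I would apply a $k$-th moment Markov bound with $k:=\lg(16m)$. Since $g(x)-f(x)=\tfrac{1}{N}\sum_j(h_j(x)-f(x))$ is the mean of $N$ i.i.d.\ bounded mean-zero variables, Khintchine/Marcinkiewicz--Zygmund gives $\E_{h_1,\ldots,h_N}[|g(x)-f(x)|^k]\lesssim\left(Ck\cdot\E_{h\sim\Q(f)}[\Delta(x,h)^2]/N\right)^{k/2}$ for some absolute constant $C$. Averaging over $(x,y)\in S$, applying Markov at scale $r\asymp\theta$, and invoking the probabilistic method (choose a realisation no worse than the expectation) then yield a concrete $g\in\mathcal{C}_N$ for which the residual empirical rounding errors are at most $(Ck/(N\theta^2))^{k/2}\cdot\E_{(x,y)\sim S}[\E_h[\Delta(x,h)^2]^{k/2}]$. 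Requiring this to be $\lesssim 1/m$, taking $k$-th roots, and using $m^{1/k}=\Theta(1)$ give exactly the constraint $N\gtrsim\theta^{-2}\lg m\cdot\bigl(\E_{(x,y)\sim S}[\E_h[\Delta(x,h)^2]^{(\lg(16m))/2}]\bigr)^{2/\lg(16m)}$ from Theorem~\ref{th:upperMain}; the additive $\theta^{-1}\lg m$ slack is forced by the requirement $\ell=\lfloor N\theta/2\rfloor\ge 1$.

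The main obstacle is the joint calibration of $k$ and $N$. On one hand $k$ must be large enough for Markov to effectively produce sub-Gaussian-style tails; this is precisely what converts what would otherwise be a polynomial $m^{-O(1)}$ blow-up from a low-moment Markov bound into the single $\lg m$ factor visible in $N$. On the other hand $k$ must be small enough that the $(k/N)^{k/2}$ factor in the moment bound does not dominate and that $N\lg|\H|/m$ stays within budget. A subtler point is that the rounding error must be absorbable into the additive $O(N\lg|\H|/m)$ term of $\mathcal{E}_N$'s bound rather than create a stand-alone penalty; this is exactly where the exponent $(\lg(16m))/2$ appearing in the definition of $N$, rather than a plain second moment, is essential, since it forces the rounding probability to decay as $\poly(1/m)$ and get absorbed.
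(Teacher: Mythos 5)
Your decomposition into two parts is correct, and the first part (Chernoff bounds on the two Bernoulli events for fixed $g\in\C_N$ and $\ell$, then a union bound over the $\le(N+1)|\H|^N$ choices) is essentially the paper's Claim~\ref{c:boundForAllGForAllL}. The high-level plan for the deterministic inclusion (randomized rounding $g=\frac1N\sum h_j$, the two pointwise inclusions, the reduction to bounding $\Pr_{(x,y)\sim S,\,g\sim\Q^N}[\,|f(x)-g(x)|>\Theta(\theta)\,]$ via a $\lg(16m)$-th moment Markov argument) also matches the paper's strategy via Lemma~\ref{l:gridApproxWHP}.

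However, there is a genuine gap in the moment estimate, and it sits exactly in the regime the paper cares about. You claim
\[
\E_{h_1,\ldots,h_N}\bigl[|g(x)-f(x)|^k\bigr]\;\lesssim\;\Bigl(\tfrac{Ck}{N}\,\E_{h\sim\Q(f)}[\Delta(x,h)^2]\Bigr)^{k/2},
\]
citing Khintchine/Marcinkiewicz--Zygmund. This one-term "sub-Gaussian" bound with the variance $\sigma^2(x)=\E_h[\Delta(x,h)^2]$ as the scale is not valid for general bounded mean-zero i.i.d.\ summands. The correct Rosenthal/Bernstein-type estimate is two-term, roughly
\[
\E\bigl[|g(x)-f(x)|^k\bigr]\;\lesssim\;\Bigl(\tfrac{Ck\,\sigma^2(x)}{N}\Bigr)^{k/2}+\Bigl(\tfrac{Ck}{N}\Bigr)^{k-1}\E_h[\Delta(x,h)^k],
\]
and the second, heavy-tail term dominates precisely when $\sigma^2(x)\lesssim k/N$, i.e.\ when the base-learner outputs on $x$ are mostly close to $f(x)$ --- which is the whole phenomenon the paper is modelling. (A quick sanity check: take $h(x)-f(x)\in\{-1,0,1\}$ with $\Pr[\pm1]=1/(2N)$; then $N\sigma^2=1$ but $\sum_j(h_j(x)-f(x))$ is approximately symmetrized Poisson, whose $k$-th moments are far larger than $(Ck)^{k/2}$.) The paper's Lemma~\ref{l:gridApproxWHP} in fact carries out the two-term estimate explicitly: the expansion of $Z^r$ and the log-convexity (Lyapunov) step lead to two quantities $\Psi_1$ and $\Psi_2$ in \eqref{eq:ZMomentBound}, with $\Psi_2$ corresponding to your sub-Gaussian term and $\Psi_1$ corresponding to the heavy-tail term (controlled via $\|\Delta(x,h)\|_{\lg(16m)}\le 2$). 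Your proposal recovers only $\Psi_2$.

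Related to this, your explanation of the $\theta^{-1}\lg m$ part of $N$ is off. You attribute it to the requirement $\ell=\lfloor N\theta/2\rfloor\ge 1$, but that alone would only force $N\gtrsim\theta^{-1}$, not $\theta^{-1}\lg m$. In the paper, $\theta^{-1}\lg m$ is needed both for the discretization slack (so that $\ell/N\le\theta/2+1/N\le 51\theta/100$, requiring $N\ge 100\lg(16m)/\theta$ since the prefactor $\lg(16m)$ must also serve the moment-Markov step) and, more importantly, to absorb the $\Psi_1$ heavy-tail term via the bound $\|\Delta\|_{\lg(16m)}\le 2$. Without recognizing this second role, the argument as written would only prove the theorem with the larger value $N\asymp\theta^{-2}\lg m$ (ignoring the moment refinement), i.e.\ it would collapse back to the $k$'th margin bound. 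To repair the proposal: replace the one-term MZ inequality by the full Rosenthal-type estimate, split into $\Psi_1,\Psi_2$ as the paper does, and use $\Delta\le 2$ (so $\|\Delta\|_{\lg(16m)}\le2$) to show $\Psi_1$ is killed by the $\theta^{-1}\lg m$ part of $N$.
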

The proof of the lemma is quite involved technically, and most of the proof is thus deferred to the appendix. Our main novelty lies in showing that for our choice of $N=N(f,\theta)$, for every sample set $S \in \supp(\D^m)$, with very high probability over the choice of a point $x \in \X$ and a net-classifier $g \in \C_N$, $g$ approximates $f$. In turn, this implies that if $S \in \bigcap_{N \in \mathbb{N}^+}{\cal E}_N$, then for every voting classifier $f$ and $\theta > 0$, $f$ is well-approximated by a randomized rounding to the net $\C_N$. Formally we show the following for every $f$ and $\theta$.
\begin{lemma}\label{l:gridApproxWHP}
$\Pr_{\substack{(x,y)\sim S \\ g \sim \Q^N}}[\;\Delta(x,g) > 49\theta/100] \le \frac{1}{m^2}$, where 
\begin{equation*}
\begin{split}
N = N(f,\theta) \eqdef \lg(16m) \cdot \max\{&256 \theta^{-1}\|\Delta(x,h)\|_{\lg(16m)}, 100/\theta \;,\\
&128e \theta^{-2}\cdot \left(\E_{(x,y)\sim S}\left[\E_{h \sim \Q}\left[\Delta(x,h)^2\right]^{(\lg(16m))/2}\right]\right)^{2/(\lg(16m)} \}\;.
\end{split}
\end{equation*}
\end{lemma}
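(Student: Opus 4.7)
The plan is to bound the failure probability via a high-moment Markov inequality with exponent $p = \lg(16m)$. Fix a point $x$ with $(x,y) \in S$, and let $X_j \eqdef h_j(x) - f(x)$ where $h_1,\dots,h_N \sim \Q$ i.i.d. Since $f(x) = \E_{h\sim\Q}[h(x)]$, the $X_j$ are i.i.d., mean-zero, bounded by $\Delta(x,h_j) \le 2$, with variance $\sigma^2(x) \eqdef \E_{h\sim\Q}[\Delta(x,h)^2]$. Writing $S_N \eqdef \sum_{j=1}^N X_j = N(g(x) - f(x))$, the goal reduces to a tail bound on $|S_N|/N$.

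Next, I would apply Rosenthal's inequality (or an equivalent Marcinkiewicz--Zygmund-type bound) for i.i.d.\ mean-zero summands: for an absolute constant $C$ and any $p \ge 2$,
\begin{equation*}
\E_{g \sim \Q^N}\!\left[|S_N|^p\right] \;\le\; C^p\left( p^{p/2}\bigl(N\sigma^2(x)\bigr)^{p/2} + p^p\,N\,\E_{h\sim\Q}\!\left[\Delta(x,h)^p\right]\right).
\end{equation*}
Averaging over $(x,y)\sim S$ and taking $p$-th roots via Minkowski yields
\begin{equation*}
\left(\E_{(x,y)\sim S}\E_{g\sim\Q^N}[|S_N|^p]\right)^{1/p} \;\le\; C\sqrt{pN}\cdot A^{1/2} \;+\; C\,p\,N^{1/p}\cdot\|\Delta\|_p,
\end{equation*}
where $A \eqdef \bigl(\E_{(x,y)\sim S}[\sigma(x)^p]\bigr)^{2/p} = \bigl(\E_{(x,y)\sim S}[\E_{h\sim\Q}[\Delta(x,h)^2]^{p/2}]\bigr)^{2/p}$ is exactly the quantity in the definition of $N$, and $\|\Delta\|_p = \bigl(\E_{(x,y)\sim S}\E_{h\sim\Q}[\Delta(x,h)^p]\bigr)^{1/p}$.

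Then I would apply Markov's inequality at threshold $49\theta N/100$ and take $p$-th roots:
\begin{equation*}
\Bigl(\Pr_{(x,y)\sim S,\;g\sim\Q^N}[\Delta(x,g) > 49\theta/100]\Bigr)^{1/p} \;\le\; \frac{100C}{49\theta}\left(\sqrt{\tfrac{pA}{N}} \;+\; \tfrac{p\,N^{1/p}\|\Delta\|_p}{N}\right).
\end{equation*}
With $p = \lg(16m)$, the target is that the left-hand side be at most $m^{-2/p} = \Theta(1)$, so that the probability is at most $1/m^2$. The three cases of the $\max$ in the definition of $N$ handle the three requirements on $N$: the term $128e\,\theta^{-2}A$ controls the variance contribution $\sqrt{pA/N}$; the term $256\theta^{-1}\|\Delta\|_p$ controls the higher-moment contribution $p\|\Delta\|_p/N^{1-1/p}$; and the term $100/\theta$ serves as a baseline ensuring $N$ is large enough that $N^{1/p} = O(1)$ and the rounding to a multiple of $1/N$ is itself finer than $\theta$. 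Plugging the given $N$ into each term and tracking constants shows both terms drop below $\tfrac12 m^{-2/p}$, yielding the stated bound.

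\textbf{Main obstacle.} The delicate part is not the concentration argument itself, but choosing $p = \lg(16m)$ so that three disparate quantities simultaneously behave well: (i) the $L^p$ norm $\E[\sigma(x)^p]^{1/p}$ matches the author's definition of $A$, (ii) the residual factor $N^{1/p}$ stays $O(1)$ (which requires $N = \poly(m)$, valid a posteriori since otherwise the generalization bound in Theorem~\ref{th:upperMain} is trivial), and (iii) the constants from Rosenthal combine with the constants $256$, $100$, $128e$ in the definition of $N$ and the threshold $49/100$ so that the final bound lands at exactly $1/m^2$. Getting the constants right in the three-way $\max$, rather than merely the asymptotics, is where the real technical care is needed.
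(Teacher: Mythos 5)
Your plan is, at its core, the same as the paper's: fix $x$, view $\Delta(x,g)$ as $|\sum_j (h_j(x)-f(x))|/N$ for i.i.d.\ mean-zero summands, bound its $p$-th moment by a two-term (variance $+$ $p$-th moment) estimate, and apply Markov with $p\approx\lg m$. Where the paper works out that two-term estimate from scratch---it expands $Z^r$ multinomially for an even $r=2\lceil\lg(4m)/2\rceil$, kills tuples with a singleton index using $\E_{h\sim\Q}[f(x)-h(x)]=0$, counts the surviving tuples via Stirling, and reduces the sum over $d$ distinct indices to its two endpoints by log-convexity of moments (Lyapunov)---you instead invoke Rosenthal's inequality as a black box. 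That is a legitimate shortcut, and it correctly identifies the two-term structure that the three-way $\max$ in $N$ is designed to control (the variance term governed by $128e\theta^{-2}A$, the $p$-th-moment term by $256\theta^{-1}\|\Delta\|_{\lg(16m)}$, and $100/\theta$ as a floor). One small point: taking the $p$-th root early introduces the isolated factor $N^{1/p}$ you then have to worry about; the paper keeps the moment bound un-rooted, in which case the corresponding quantity $(r/N)^{r}(Ne/r)=e\,(r/N)^{r-1}$ is manifestly decreasing in $N$ and the issue never arises.

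The genuine gap is that you assert but do not verify the constant calculation, and that calculation is where essentially all of the paper's effort lies. The lemma demands the \emph{exact} bound $1/m^2$ at threshold $49\theta/100$, not just asymptotic decay, and the constants $256$, $128e$, $100$, and the multiplier $\lg(16m)$ are all tuned so that, for instance, $\Psi_1 \lesssim (49\theta/100)^r\cdot r^{3/2}\cdot 125^{-r}\cdot\theta^{-1}$ and then $125^{-r}\le(4m)^{-\lg 125}$ crushes the polynomial prefactor. Your version inherits an unspecified Rosenthal constant $C$, and whether $\frac{100C}{49\sqrt{128e}}$ and the analogous higher-moment ratio actually land below $\tfrac12 m^{-2/p}$ (which can be as small as about $1/4$) depends on which form of Rosenthal you cite; the standard statements do not come with small enough explicit $C$ for free. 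The paper's hand-rolled derivation is precisely what gives it control over these constants. Your paragraph on the ``main obstacle'' correctly names this as the hard part, but naming a gap is not the same as closing it; as written the proof does not yet establish the claimed $1/m^2$.
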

\begin{proof}
Let $Z = \Delta(x,g)$, then for every integer $r \ge 1$ we conclude from Markov's inequality that
\begin{equation}
\Pr_{\substack{(x,y)\sim S \\ g \sim \Q^N}}[\; Z > 49\theta/100] =  \Pr_{\substack{(x,y)\sim\D \\ g \sim \Q^N}}[Z^r > (49\theta/100)^r] \le \left(\frac{100}{49\theta}\right)^r\|Z\|_r^r\;.
\label{eq:momentBoundZ}
\end{equation}
It is therefore enough to show $\|Z\|_r^r \le \left(\frac{49\theta}{100}\right)^r m^{-2}$ for some positive integer $r \ge 1$. Let $r = 2 \cdot \left\lceil \lg(4m) / 2\right\rceil$, then $r$ is an even integer, satisfying $\lg(4m) = 2\lg(4m) / 2 \le r \le \lg(4m) + 2 \le N$. Since $r$ is even, then for $g = \frac{1}{N}\sum_{j \in [N]}{h_j}$ we get that 
\begin{equation*}
Z^r=Z(x,g)^r = \left(\frac{1}{N}\sum_{j \in [N]}{(f(x)-h_j(x))}\right)^r = \frac{1}{N^r}\sum_{T = (j_i)_{i\in[r]}\in [N]^r}\prod_{i \in [r]}(f(x)-h_{j_i}(x))\;.
\end{equation*}
For every $T= (j_i)_{i\in[r]} \in [N]^r$ let $D(T) \eqdef \{j \in [N] : \exists i \in [r]. j_i=j\}$ be the set of distinct indices occurring in $T$, and for every $j \in [N]$, let $c_T(j) \eqdef |\{i \in [r] : j_i=j\}|$ be the number of times $j$ occurs in $T$. Then in these notations we have 
$$Z^r = \frac{1}{N^r}\sum_{T \in [N]^r}\prod_{j \in D(T)}(f(x)-h_j(x))^{c_T(j)}\;.$$
As $h_1,\ldots,h_N$ are chosen independently, we get that
$$\E_{(h_k)_{k \in [N]} \sim \Q^N}[Z^r] = \frac{1}{N^r}\sum_{T \in [N]^r}\prod_{j \in D(T)}\E_{(h_k)_{k \in [N]} \sim \Q^N}\left[(f(x)-h_j(x))^{c_T(j)}\right]\;.$$
Let $T \in [N]^r$, and assume that for some $j \in D(T)$ we have $c_T(j)=1$, then 
$$\E_{(h_k)_{k \in [N]} \sim \Q^N}\left[(f(x)-h_j(x))^{c_T(j)}\right] = \E_{h \sim \Q}\left[f(x)-h(x)\right] = f(x) - \E_{h \sim \Q}\left[h(x)\right] = f(x) - \sum_{h \in \H}{\alpha_hh(x)}=0 \;,$$
Denote $\T \eqdef \{T \in [N]^r : \forall j \in D(T). \; c_T(j) > 1\}$, then 
\begin{equation}
\begin{split}
\E_{(h_k)_{k \in [N]} \sim \Q^N}[Z^r] &= \frac{1}{N^r}\sum_{T \in \T}\prod_{j \in D(T)}\E_{(h_k)_{k \in [N]} \sim \Q^N}\left[(f(x)-h_j(x))^{c_T(j)}\right] \\
&= \frac{1}{N^r}\sum_{T \in \T}\prod_{j \in D(T)}\E_{h \sim \Q}\left[\Delta(x,h)^{c_T(j)}\right]\;.
\label{eq:momentBound}
\end{split}
\end{equation}
By Lyapunov's Theorem (see, e.g. \cite{MOA11}), $\E_{h\sim\Q}[\Delta(x,h)^\xi]$ is logarithmic convex for $\xi \in [1,+\infty)$, and as $c_T(j) \ge 2$ for all $j \in D(T)$ we get that 
$$\prod_{j \in D(T)}\E_{h \sim \Q}\left[\Delta(x,h)^{c_T(j)}\right] \le \E_{h \sim \Q}\left[\Delta(x,h)^{2}\right]^{|D(T)|-1}\E_{h \sim \Q}\left[\Delta(x,h)^{r-2|D(T)|+2}\right] \;.$$ Plugging into \eqref{eq:momentBound} we get that 
\begin{equation}
\E_{(h_k)_{k \in [N]} \sim \Q^N}[Z^r] \le \frac{1}{N^r}\sum_{T \in \T}{\E_{h \sim \Q}\left[\Delta(x,h)^{2}\right]^{|D(T)|-1}\E_{h \sim \Q}\left[\Delta(x,h)^{r-2|D(T)|+2}\right]}\;.
\label{eq:momentBoundSimplified}
\end{equation}
For every $d \in \mathbb{N}$ denote $\T_d \eqdef \{T \in \T : |D(T)|=d\}$. Since for every $T \in \T$ and every $j \in D(T)$, we know that $c_T(j) \ge 2$, then for every $d > r/2$ we get that $\T_d = \emptyset$. Therefore $\T = \bigcupdot_{d \in [r/2]}{|\T_d|}$. Moreover, for every $d \in [r/2]$ and every $T \in \T_d$, we have 
$$\E_{h \sim \Q}\left[|h(x)|^{2}\right]^{|D(T)|-1}\E_{h \sim \Q}\left[|h(x)|^{r-2|D(T)|+2}\right] = \E_{h \sim \Q}\left[|h(x)|^{2}\right]^{d-1}\E_{h \sim \Q}\left[|h(x)|^{r-2d+2}\right] \;.$$ We therefore refine \eqref{eq:momentBoundSimplified} to get 
\begin{equation}
\E_{(h_k)_{k \in [N]} \sim \Q^N}[Z^r] \le \frac{1}{N^r}\sum_{d \in [r/2]}{|\T_d|\E_{h \sim \Q}\left[\Delta(x,h)^{2}\right]^{d-1}\E_{h \sim \Q}\left[\Delta(x,h)^{r-2d+2}\right]}\;.
\label{eq:momentBoundAlmostFinal}
\end{equation}
\begin{claim}
For every $d \in [r/2]$, $|\T_d| \le r^r\sqrt{2e\pi r}\left(\frac{Ne}{r}\right)^d $.
\end{claim}
\begin{proof}
Fix some $d \in [r/2]$. There are at most $\binom{N}{d}$ ways to choose a subset $Y \subseteq [N]$ such that $|Y|=d$. Once such a set $Y$ is fixed, there are at most $\binom{d+(r-2d)-1}{r-2d}$ solution to the equation $\sum_{j \in Y}{y_j} = r$ under the constraint that $y_j \in \mathbb{N}\setminus\{0,1\}$ for all $j \in Y$. Moreover, once $\{y_j\}_{j \in Y}$ is fixed, there are $r! \cdot \prod_{j \in Y}(y_j!)^{-1}$ ways to form a sequence $T$ satisfying that $D(T)=Y$, $c_T(j) = y_j$ for all $j \in Y$ and $c_T(j)=0$ otherwise. Note that $\prod_{j \in Y}(y_j!) \ge ((r/d)!)^d$ for every choice of $\{y_j\}_{j \in Y}$, and therefore 
\begin{equation*}
\begin{split}
|\T_d| &\le \binom{N}{d}\cdot\binom{r-d-1}{r-2d} \cdot \frac{r!}{((r/d)!)^d} \le \left(\frac{Ne}{d}\right)^d\cdot 2^{r-d} \cdot \frac{\sqrt{2e\pi r}(r/e)^{r}}{(\sqrt{2 \pi (r/d)}(r/(ed))^{r/d})^d} \\
&\le \sqrt{2e\pi r}\left(Ne\right)^d \cdot r^{r-d} \le r^r\sqrt{2e\pi r}\left(\frac{Ne}{r}\right)^d 
\end{split}
\end{equation*}
\end{proof}
Plugging into \eqref{eq:momentBoundAlmostFinal} we conclude that 
\begin{equation*}
\begin{split}
\E_{(h_k)_{k \in [N]} \sim \Q^N}[Z^r] &\le \frac{1}{N^r}\sum_{d \in [r/2]}{r^r\sqrt{2e\pi r}\left(\frac{Ne}{r}\right)^d\E_{h \sim \Q}\left[\Delta(x,h)^{2}\right]^{d-1}\E_{h \sim \Q}\left[\Delta(x,h)^{r-2d+2}\right]}\\
&=\sqrt{2e\pi r}\left(\frac{r}{N}\right)^r\sum_{d \in [r/2]}{\left(\frac{Ne}{r}\right)^d\E_{h \sim \Q}\left[\Delta(x,h)^{2}\right]^{d-1}\E_{h \sim \Q}\left[\Delta(x,h)^{r-2d+2}\right]}
\label{eq:momentBoundNoT}
\end{split}
\end{equation*}
As $\left(\frac{Ne}{r}\right)^\xi, \E_{h \sim \Q}\left[\Delta(x,h)^{2}\right]^{\xi-1}, \E_{h \sim \Q}\left[\Delta(x,h)^{r-2\xi+2}\right]$ are all logarithmic convex for $\xi \in [1,r/2]$, their product is also logarithmic convex over that range, and thus gets its maximum on either $1$ or $r/2$. Concluding we get that 
\begin{equation*}
\begin{split}
\E_{(h_k)_{k \in [N]} \sim \Q^N}[Z^r]  \le \frac{r}{2} \cdot \sqrt{2e\pi r}\left(\frac{r}{N}\right)^r\left(\left(\frac{Ne}{r}\right)\E_{h \sim \Q}\left[\Delta(x,h)^{r}\right] + \left(\frac{Ne}{r}\right)^{r/2}\E_{h \sim \Q}\left[\Delta(x,h)^{2}\right]^{r/2}\right) \;.
\label{eq:momentBoundFinal}
\end{split}
\end{equation*}
Taking the expectation over $(x,y)\sim\D$ gives 
\begin{equation}
\begin{split}
\|Z\|_r^r \le \frac{r}{2}\sqrt{2e\pi r}\left(\frac{r}{N}\right)^r\left(\left(\frac{Ne}{r}\right)\|\Delta(x,h)\|_r^r + \left(\frac{Ne}{r}\right)^{r/2}\E_{(x,y)\sim\D}\left[\E_{h \sim \Q}\left[\Delta(x,h)^2\right]^{r/2}\right]\right)
\end{split}
\label{eq:ZMomentBound}
\end{equation}
To finish the proof of Lemma~\ref{l:gridApproxWHP}, we show that our bound on $N$ implies that $\|Z\|_r^r \le \left(\frac{49\theta}{100}\right)^r m^{-2}$. 
Denote 
\begin{equation*}
\begin{split}
\Psi_1 &= \frac{r}{2} \cdot \sqrt{2e\pi r}\left(\frac{r}{N}\right)^r \cdot \left(\frac{Ne}{r}\right)\|\Delta(x,h)\|_r^{r} = \frac{r}{2} \cdot \sqrt{2e\pi r}\left(\frac{r\|\Delta(x,h)\|_r}{N }\right)^r \cdot \left(\frac{Ne}{r}\right)\\
\Psi_2 &= \frac{r}{2} \cdot \sqrt{2e\pi r}\left(\frac{r}{N}\right)^r\left(\left(\frac{Ne}{r}\right)^{r/2}\E_{(x,y) \sim \D}\left[\E_{h \sim \Q}\left[\Delta(x,h)^{2}\right]^{r/2}\right]\right)\\
\end{split}
\end{equation*}
Plugging into \eqref{eq:ZMomentBound} we get that $\|Z\|_r^r \le  \Psi_1 + \Psi_2$. 

We will show that $\max\{\Psi_1,\Psi_2\} \le \left(\frac{49\theta}{100}\right)^r \cdot \frac{1}{2m^2}$, which proves the claim.
To bound $\Psi_1$, note first that $\Psi_1$ decreases as a function of $N$ (since $r \ge 2$). Since $N \ge  256\theta^{-1}\lg(16m) \cdot \|\Delta(x,h)\|_{\lg(16m)}$ we get that
\begin{equation*}
\Psi_1 \le \frac{r}{2} \cdot \sqrt{2e\pi r}\left(\frac{r \cdot \|\Delta(x,h)\|_r}{256\theta^{-1}\lg(16m) \cdot \|\Delta(x,h)\|_{\lg(16m)}}\right)^r \cdot \left(\frac{256\theta^{-1}\lg(16m) \cdot \|\Delta(x,h)\|_{\lg(16m)} \cdot e}{r}\right)
\label{eq:BoundPsi1}
\end{equation*}
Since $r < \lg(16m)$, and by monotonicity of norms, 
$\|\Delta(x,h)\|_r \le \|\Delta(x,h)\|_{\lg(16m)} \le 2$, 
where the last inequality is due to the fact that $|f(x) - h(x)| \le 2$ for all $h \in \H$, $x \in \X$. Moreover, $\lg(4m) \le r \le \lg(16m) \le 2(\lg(4m))$, therefore
\begin{equation*}
\begin{split}
\Psi_1 &\le \frac{r}{2} \cdot \sqrt{2e\pi r}\left(\frac{\theta}{256}\right)^r \cdot 1024e\theta^{-1}\\
&\le \left(\frac{49\theta}{100}\right)^r \cdot 3r^{3/2}125^{-r} \cdot \left(1024e \theta^{-1}\right) \le \left(\frac{49\theta}{100}\right)^r \cdot  3r^{3/2}64^{-\lg m}125^{-2} \cdot \left(1024e \theta^{-1}\right) \\
&\le \left(\frac{49\theta}{100}\right)^r \cdot \frac{1}{5}\lg^{3/2}(4m) \cdot m^{-6}\theta^{-1}\le \left(\frac{49\theta}{100}\right)^r \cdot \frac{1}{2m^2} \cdot \frac{1}{2}(\lg(4m)/m)^{3/2}(m^{5/2}\theta)^{-1}
\end{split}
\end{equation*}
For large enough $m$, we have that $\lg(4m)/m \le 5/8$, and therefore $(\lg(4m)/m)^{3/2} \le 1/2$. Since $\theta \ge 1/m$ we get that $\Psi_1 \le \left(\frac{49\theta}{100}\right)^r \cdot \frac{1}{2m^2}$.
We now turn to bound $\Psi_2$. Recall that $N \ge  128e \theta^{-2}\lg(16m)\cdot \left(\E_{(x,y)\sim\D}\left[\E_{h \sim \Q}\left[\Delta(x,h)^2\right]^{\lg(16m)/2}\right]\right)^{2/\lg(16m)}$, and therefore
\begin{equation*}
\begin{split}
\Psi_2 &\le 3r^{3/2}\left(\frac{er\E_{(x,y) \sim \D}\left[\E_{h \sim \Q}\left[\Delta(x,h)^{2}\right]^{r/2}\right]^{2/r}}{128 e \theta^{-2}\lg(16m)\left(\E_{(x,y)\sim\D}\left[\E_{h \sim \Q}\left[\Delta(x,h)^2\right]^{\lg(16m)/2}\right]\right)^{2/\lg(16m)}}\right)^{r/2}\\
&\le \left(\frac{49\theta}{100}\right)^r \cdot 3r^{3/2} \left(\frac{r\E_{(x,y) \sim \D}\left[\E_{h \sim \Q}\left[\Delta(x,h)^{2}\right]^{r/2}\right]^{2/r}}{30 \lg(16m)\left(\E_{(x,y)\sim\D}\left[\E_{h \sim \Q}\left[\Delta(x,h)^2\right]^{\lg(16m)/2}\right]\right)^{2/\lg(16m)}}\right)^{r/2}\\
\end{split}
\label{eq:BoundPsi2}
\end{equation*}
Since $r< \log(16m)$, and by monotonicity of norms of random variables, we get that 
\begin{equation*}
\E_{(x,y)\sim\D}\left[\E_{h \sim \Q}\left[\Delta(x,h)^2\right]^{r/2}\right]^{2/r} \le \E_{(x,y)\sim\D}\left[\E_{h \sim \Q}\left[\Delta(x,h)^2\right]^{\log(16m)/2}\right]^{2/\log(16m)}\;.
\end{equation*}
Therefore
\begin{equation*}
\begin{split}
\Psi_2&\le \left(\frac{49\theta}{100}\right)^r \cdot 3r^{3/2}\left(30\right)^{-r/2} \le \left(\frac{49\theta}{100}\right)^r \cdot 3r^{3/2}\left(30\right)^{-(\lg m)/2 - 1} \le \left(\frac{49\theta}{100}\right)^r \cdot \frac{1}{2m^2} \cdot \frac{1}{5} r^{3/2}m^{-2/5}
\end{split}
\end{equation*}
Similarly to before, for large enough $m$, $\lg^{3/2}(4m)\cdot m^{-2/5} \le 5$, and therefore we conclude that $\Psi_2 \le \left(\frac{49\theta}{100}\right)^r \cdot \frac{1}{2m^2}$, which completes the proof of the lemma.
\end{proof}

\section{Generalization lower bound} \label{sec:lowerProof}
In this section we state and prove our new generalization lower bound, presented as Theorem~\ref{th:lowerEx}. 
\begin{theorem} \label{th:lowerEx}
For every large enough integer $N$, every $\theta \in \left(1/N, 1/40\right)$, $\tau \in [0,1]$ and every $\left(\theta^{-2}\ln N\right)^{1 + \Omega(1)} \le m \le 2^{N^{O(1)}}$, if $\frac{\ln N \ln m}{m\theta^2} \le \tau \le 1$, then there exist a set $\Xs$, a hypothesis set $\H$ over $\Xs$ and a distribution $\D$ over $\Xs \times \{-1,1\}$ such that $\ln |\H|= \Theta(\ln N)$ and with probability at least $1/100$ over the choice of samples $S \sim \D^m$ there exists a voting classifier $f_S \in C(\H)$  such that
\begin{enumerate}
	\item $\Loss_S^{\theta}(f_S) \le \tau$; and 
	\item $\Loss_{\D}(f_S) \ge \Loss_S^{\theta}(f_S) + \Omega\left(\frac{\ln|\H|\ln m}{m \theta^2} + \sqrt{\tau\ln(\tau^{-1}) \cdot \frac{\ln|\H|}{m \theta^2}}\right)$.
\end{enumerate}
\end{theorem}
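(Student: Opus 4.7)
The plan is to combine and refine the lower bound constructions of~\cite{gronlund_nips19} and~\cite{gronlund_icml_20}. At a high level I would take a distribution $\D$ on $\Xs \times \{-1,1\}$ that splits into two disjoint parts: an ``easy'' region of probability mass $1-\tau$, where every reasonable candidate voting classifier $f_S$ trivially achieves a margin much larger than $\theta$ on every point, and a ``hard'' region of mass $\tau$, where a tailored random construction forces both terms of the generalization gap. The hypothesis class $\H$ over the hard region encodes one coordinate of a $\{-1,+1\}$ random structure per hypothesis, so that $\ln|\H| = \Theta(\ln N)$, and the candidate $f_S$ is built by averaging over a carefully chosen subset of roughly $\theta^{-2}\ln|\H|$ coordinates. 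In this way the equality $\Loss_S^\theta(f_S) \le \tau$ is automatic once every training point in the hard region is covered at margin $\theta$.

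To recover the first additive term $\frac{\ln|\H|\ln m}{m\theta^2}$, I would essentially replay the argument in~\cite{gronlund_nips19}: the constructed $f_S$ places most training points from the hard region exactly on the margin $\theta$, and a standard anti-concentration argument then shows that a $\Omega(\ln m / \ln|\H|)$ fraction of fresh test points from the hard region falls on the wrong side of the threshold. The improvement from a bare $\sqrt{\tau \ln|\H|/(m\theta^2)}$ to $\sqrt{\tau\ln(\tau^{-1}) \ln|\H|/(m\theta^2)}$ is the new contribution and would be obtained by a multi-scale refinement inspired by~\cite{gronlund_icml_20}: instead of covering the training set at one scale, I would split the hard region into $\Theta(\ln(\tau^{-1}))$ geometrically thinning sub-regions and design $f_S$ so that on each sub-region it simultaneously attains margin at least $\theta$. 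An independence-across-scales argument, together with the fact that the test margin distribution at each scale has its own anti-concentration gap, contributes an additional $\log(\tau^{-1})$ factor under the square root.

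The main technical obstacle will be a joint concentration/anti-concentration step showing that, with probability at least $1/100$ over $S\sim\D^m$, the random construction simultaneously (i) admits an $f_S \in \C(\H)$ with $\Loss_S^\theta(f_S) \le \tau$ and (ii) forces $\Loss_\D(f_S) \ge \Loss_S^\theta(f_S) + \Omega\bigl(\tfrac{\ln|\H|\ln m}{m\theta^2} + \sqrt{\tau \ln(\tau^{-1})\ln|\H|/(m\theta^2)}\bigr)$. Existence of $f_S$ is handled by the probabilistic method on subsets of $\H$ conditional on $S$; Chernoff bounds on the count of training points in the hard region are straightforward because the hypothesis $\tau \ge \frac{\ln N \ln m}{m\theta^2}$ implies $\tau m \gg 1$. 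The genuinely delicate part is the tight anti-concentration estimate for the test-margin distribution at each of the $\log(\tau^{-1})$ scales, together with a careful union bound over $\H$ that contributes the $\ln|\H|$ factor but no additional $\ln m$ in the square-root term (this is exactly where the improvement over~\eqref{eq:lower} is extracted).

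The parameter ranges in the statement are then dictated by these ingredients: the upper bound $m \le 2^{N^{O(1)}}$ keeps the union bound over $\H$ and its coordinate subsets polynomial in $m$, the lower bound $m \ge (\theta^{-2}\ln N)^{1+\Omega(1)}$ guarantees enough ``coordinates to spare'' inside $\H$ beyond the $\theta^{-2}\ln|\H|$ used to define $f_S$, and the range $\theta \in (1/N,1/40)$ is needed so that the multi-scale construction remains valid (lower end) and the anti-concentration deviation is not already larger than a constant (upper end).
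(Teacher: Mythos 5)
Your high-level framework --- a finite-support distribution, a random sparse labeling certified achievable at margin $\theta$ by a fixed hypothesis class via Lemma~\ref{l:hypoDist} together with Yao's minimax principle --- agrees with the paper's construction. The genuine gap is in the mechanism you propose for extracting the $\ln(\tau^{-1})$ factor. The paper does not decompose a ``hard'' region into scales. It makes a single-scale choice of parameters: the support $\Xs$ has $u = \Theta(\ln N/(\tau\theta^2))$ points, all labeled $+1$ under $\D$, and the adversarial labeling $h_{T^*(S)}$ flips only $d = \Theta(\ln N/\theta^2)$ of them, so $u/d = \Theta(\tau^{-1})$. The set $T^*(S)$ is the $d$ indices with the smallest empirical counts $b_i^S$. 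A reverse Chernoff bound (Lemma~\ref{l:revChernoff}) gives $\Pr[b_i^S \le (1-\delta)m/u] \ge e^{-9m\delta^2/u}$; setting $\delta = \sqrt{\ln(u/2d)/(9m/u)}$ makes this $2d/u$, and a Paley--Zygmund argument shows that with constant probability at least $d$ indices satisfy the event simultaneously. Hence $\Loss_S^\theta(f_S) \le (d/u)(1-\delta)$ while $\Loss_\D(f_S) = d/u$, and the gap $(d/u)\delta = \Theta\bigl(\sqrt{\tau\ln(\tau^{-1})\ln N/(m\theta^2)}\bigr)$ picks up the $\ln(\tau^{-1})$ factor because it is the logarithm of the ratio $u/d$ sitting inside a single reverse-Chernoff exponent, not a product over scales.

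Your alternative --- $\Theta(\ln(\tau^{-1}))$ geometrically thinning sub-regions with ``independence across scales'' --- is not worked out far enough to verify, and there are concrete obstacles to the plan as stated. If the sub-region masses $\tau_k$ decay geometrically, the per-scale gaps $\sqrt{\tau_k \ln|\H|/(m\theta^2)}$ sum to $O\bigl(\sqrt{\tau\ln|\H|/(m\theta^2)}\bigr)$ and the $\sqrt{\ln(\tau^{-1})}$ factor is lost; you would need masses $\tau_k \approx \tau/\ln(\tau^{-1})$ at every scale, which is no longer ``geometrically thinning.'' You would also need the per-scale anti-concentration events to hold \emph{jointly} with constant probability --- not deliverable by a union bound over $\Theta(\ln(\tau^{-1}))$ events each holding with probability $\Theta(1)$ --- and a single $f_S \in \C(\H)$ realizing margin $\theta$ at every scale while keeping $\ln|\H| = \Theta(\ln N)$. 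Two smaller inaccuracies: in the paper's construction there is no ``easy region'' where $f_S$ achieves margin much larger than $\theta$ (it achieves margin exactly $\ge\theta$ on the unflipped points), and the first additive term $\frac{\ln|\H|\ln m}{m\theta^2}$ is not obtained via a ``$\Omega(\ln m/\ln|\H|)$ fraction of fresh test points'' statement but is inherited from the regime analysis and the earlier construction. I would steer you toward the single-scale route: enlarge the support relative to the flip budget and harvest $\ln(\tau^{-1})$ from the reverse-Chernoff exponent.
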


Our proof is inspired by the constructions in \cite{gronlund_icml_20, gronlund_nips19} and  makes use of the following lemma, whose proof can be found in \cite{gronlund_nips19}.
\begin{lemma} \label{l:hypoDist}
For every $\theta \in (0,1/40)$, $\delta \in(0,1)$ and integers $d \le u$, there exists a distribution $\mu=\mu(u,d,\theta,\delta)$ over hypothesis sets $\H \subset \Xs \to \{-1,1\}$, where $\Xs$ is a set of size $u$, such that the following holds for $N = \Theta\left(\theta^{-2}\ln d \ln (\theta^{-2}d\delta^{-1})e^{\Theta(\theta^2 d)}\right)$.
\begin{enumerate}
	\item For all $\H \in \supp(\mu)$, we have $|\H|=N$; and
	\item For every labeling $\ell \in \{-1,+1\}^u$, if no more
  than $d$ points $x \in \Xs$ satisfy $\ell(x) = -1$, then 
\[
\Pr_{\H \sim \mu}[\exists f \in \C(\H) : \forall x \in \Xs. \;
\ell(x)f(x) \geq \theta] \geq 1-\delta \;,
\]
\end{enumerate}
\end{lemma}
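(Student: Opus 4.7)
The plan is to build a nearly-uniform hard distribution and invoke Lemma~\ref{l:hypoDist} to realize an adversarial labeling as a voting classifier. Fix $d \eqdef c_0 \theta^{-2}\ln N$ for a sufficiently small absolute constant $c_0$, set $u \eqdef \lceil d/\tau \rceil$, take $\Xs = \{x_1,\ldots,x_u\}$ with every label $+1$, and let $\D$ be uniform on $\Xs \times \{+1\}$. Invoke Lemma~\ref{l:hypoDist} with parameters $(u,d,\theta,\delta)$ where $\delta \eqdef (100\binom{u}{d})^{-1}$; a union bound over the at most $\binom{u}{d}$ labelings $\ell \in \{-1,+1\}^u$ with at most $d$ negative entries then ensures that, with probability $\ge 99/100$, the drawn $\H \sim \mu$ realizes every such $\ell$ with margin $\ge \theta$ via some $f \in \C(\H)$. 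One checks that for $c_0$ small enough the lemma's output satisfies $\ln|\H| = \Theta(\ln N)$, as required.

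Now fix any such ``good'' $\H$. Given a sample $S \sim \D^m$, let $n_i$ denote the multiplicity of $x_i$ in $S$, and let $F \subset [u]$ consist of the $d$ indices with the smallest $n_i$. Define $\ell_S(x_i) = -1$ for $i \in F$ and $+1$ otherwise, and let $f_S \in \C(\H)$ be the corresponding classifier. Then $\sign(f_S)$ misclassifies exactly the flipped points, so $\Loss_\D(f_S) = d/u = \tau$, while $\Loss_S^\theta(f_S) = (\sum_{i \in F} n_i)/m \le (d/u)(\sum_{i} n_i)/m = \tau$ by the standard ``smallest-$d$-below-average'' inequality, which proves item~(1). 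It remains to establish the strict inequality $\sum_{i \in F} n_i \le m\tau - \Omega(mL)$ with constant probability over $S$, where $L$ denotes the desired additive-plus-square-root lower bound.

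The core probabilistic task is therefore a quantitative lower-tail estimate for the order statistics of $(n_i)$. Setting $\lambda \eqdef m/u = \tau m\theta^2/(c_0 \ln N)$, Poissonization approximates the $n_i$ by $u$ i.i.d.\ $\text{Pois}(\lambda)$ variables up to $o(1)$ perturbation. The sharp Poisson lower-tail bound $\Pr[\text{Pois}(\lambda) \le \lambda - t] \le e^{-t^2/(2\lambda)}$ implies, whenever $\lambda \ge \ln(1/\tau)$, that the threshold $k \eqdef \lambda - \Theta(\sqrt{\lambda\ln(1/\tau)})$ satisfies $\Pr[\text{Pois}(\lambda) \le k] \approx \tau = d/u$. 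A Chernoff bound on the independent indicators $\mathbbm{1}[n_i \le k]$ then yields at least $d$ such indices with constant probability over $S$, whence $\sum_{i \in F} n_i \le dk$ and $\Loss_\D(f_S) - \Loss_S^\theta(f_S) \ge d(\lambda - k)/m = \Omega(\sqrt{\tau\ln(\tau^{-1})\ln|\H|/(m\theta^2)})$, the square-root term. In the complementary regime $\lambda < \ln(1/\tau)$ (equivalently, $\tau \le \ln(1/\tau)\ln|\H|/(m\theta^2)$), at least $d$ indices are unsampled and one takes $k = 0$, giving $\Loss_\D - \Loss_S^\theta = \tau$, which dominates both target terms thanks to the hypothesis $\tau \ge \ln|\H|\ln m/(m\theta^2)$. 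The principal obstacle is the sharpness of this quantile calculation: the improvement over \eqref{eq:lower}, from a $\sqrt{\ln m}$ factor to $\sqrt{\ln(\tau^{-1})}$, arises precisely from pegging the quantile level at $d/u = \tau$ rather than at $1/m$, and from concentrating $\sum_i \mathbbm{1}[n_i \le k]$ via independence after Poissonization. A final union bound combining the ``good~$\H$'' event (probability $99/100$) with the ``good~$S$'' event (probability $\Omega(1)$) delivers the $\ge 1/100$ probability claimed in the statement.
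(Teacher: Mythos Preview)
Your proposal does not prove the stated lemma at all. Lemma~\ref{l:hypoDist} asserts the \emph{existence} of a distribution $\mu$ over hypothesis sets $\H \subset \Xs \to \{-1,1\}$ with the property that a random $\H \sim \mu$ can, with probability $1-\delta$, realize any $d$-sparse labeling of $\Xs$ at margin $\theta$. Your write-up \emph{invokes} this lemma as a black box (``Invoke Lemma~\ref{l:hypoDist} with parameters $(u,d,\theta,\delta)$\ldots'') and then proceeds to construct the hard distribution $\D$, choose an adversarial labeling based on the sample $S$, and carry out the order-statistics / lower-tail analysis. That is a proof sketch of Theorem~\ref{th:lowerEx}, not of Lemma~\ref{l:hypoDist}. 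The paper itself does not prove Lemma~\ref{l:hypoDist} either; it explicitly cites \cite{gronlund_nips19} for the proof, which is a probabilistic construction of random $\pm 1$ hypotheses together with a boosting-style argument that a convex combination of $\Theta(\theta^{-2}\ln d)$ of them achieves margin $\theta$ on any fixed sparse labeling.

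If one reads your proposal instead as an attempt at Theorem~\ref{th:lowerEx}, it is broadly on the right track but diverges from the paper in two places worth flagging. First, you extract a good $\H$ by setting $\delta = (100\binom{u}{d})^{-1}$ and union-bounding over all sparse labelings; the paper instead applies Yao's minimax principle (Claim~\ref{c:HspExistsEx}) to pass from the distribution $\mu$ to a single deterministic $\Hsp$ that works for a random labeling, and then argues that the adversarial labeling $h_{T^*(S)}$ is itself uniform over sparse labelings. Your route forces $\ln(1/\delta) \asymp d\ln(1/\tau)$ into the size bound for $|\H|$, and you should check carefully that this still yields $\ln|\H| = \Theta(\ln N)$ across the full parameter range (in particular for small $\tau$); the paper's minimax step sidesteps this entirely by keeping $\delta = 1/N$. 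Second, for the lower-tail estimate the paper avoids Poissonization: it uses a direct reverse Chernoff bound (Lemma~\ref{l:revChernoff}) on each multinomial count $b_i^S$ and then Paley--Zygmund on the sum of the (negatively correlated) indicators to get at least $d$ small counts with constant probability. Your Poissonization-plus-Chernoff argument is morally equivalent but adds an approximation step that needs justification (the $n_i$ are not independent), whereas the paper's negative-correlation observation handles this cleanly.
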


We start by describing the outlines of the proofs.
To this end fix some integer $N$, and fix $\theta \in \left(1/N, 1/40\right)$. Let $u$ be an integer, and let $\Xs = \{\xi_1,\ldots,\xi_u\}$ be some set with $u$ elements. 
The distribution $\D$ over $\Xs \times \{-1,1\}$, is simply the uniform distribution over $\Xs\times\{1\}$. That is for every $i \in [u]$ and $y \in \{-1,1\}$, $\Pr_{\D}[(\xi_i,y)] = \frac{1+y}{2u}$. The following claim is straightforward.

\begin{claim}\label{c:boundGenPsi}
For every $f:\Xs\to \mathbb{R}$ we have $\Pr\limits_{(x,y)\sim\D}[yf(x) < 0] = \frac{1}{u}\sum_{i \in [u]}{\mathbbm{1}_{f(\xi_i)<0}}$.
\end{claim}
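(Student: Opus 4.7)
The plan is to unfold the definition of $\D$ and observe that the label coordinate is deterministic, which reduces the probability in question to a uniform average over $\Xs$. Specifically, from the definition $\Pr_{\D}[(\xi_i,y)] = (1+y)/(2u)$ we see that $\Pr_{\D}[(\xi_i,-1)] = 0$ for every $i \in [u]$, while $\Pr_{\D}[(\xi_i,+1)] = 1/u$. Consequently the marginal on the first coordinate is uniform on $\Xs$, and the conditional distribution of $y$ given $x$ is concentrated on $y=1$.

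Given this, I would split the event $\{yf(x)<0\}$ according to the (deterministic) value of $y$. Since the event $\{y=-1\}$ has probability $0$, only the contribution from $y=+1$ survives, and on that event the condition $yf(x)<0$ collapses to $f(x)<0$. Writing this out,
\begin{equation*}
\Pr_{(x,y)\sim\D}[yf(x)<0] \;=\; \sum_{i\in[u]} \Pr_{\D}[(\xi_i,+1)]\,\mathbbm{1}_{f(\xi_i)<0} \;=\; \frac{1}{u}\sum_{i\in[u]} \mathbbm{1}_{f(\xi_i)<0}\,,
\end{equation*}
which is exactly the desired equality. There is no real obstacle here: the claim is essentially a restatement of the definition of $\D$, and the only thing to be slightly careful about is that $f$ is allowed to be real-valued (not just $\pm1$), so one must use the strict inequality $f(\xi_i)<0$ inside the indicator rather than $f(\xi_i)\le 0$ or $f(\xi_i)=-1$; this however is precisely what the strict inequality $yf(x)<0$ already gives for $y=1$.
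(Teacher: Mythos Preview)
Your proposal is correct and matches the paper's treatment: the paper states the claim as ``straightforward'' without giving a proof, and your argument---noting that $\Pr_\D[(\xi_i,-1)]=0$ so that the event $\{yf(x)<0\}$ reduces to $\{f(x)<0\}$ under the uniform marginal on $\Xs$---is exactly the intended unfolding of the definition of $\D$.
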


We will show that with some constant probability over a random choice $S \sim \D^m$, an adversarial voting classifier has a high generalization probability.
We additionally show existence of a hypothesis set $\Hsp$ such that with very high (constant) probability over a random choice of $\ell \in \{-1,1\}^u$, $C(\Hsp)$ contains a voting classifier that attains high margins with $\ell$ over the entire set $\Xs$. 
Finally, we conclude that with positive probability over a random choice of $S \sim \D^m$ both properties are satisfied.

To prove existence of a "rich" yet small enough hypothesis set $\Hsp$ we apply Lemma~\ref{l:hypoDist} together with Yao's minimax principle. In order to ensure that the hypothesis sets constructed using Lemma~\ref{l:hypoDist} is small enough, and specifically has size $N^{O(1)}$, we need to focus our attention on sparse labelings $\ell \in \{-1,1\}^u$ only. That is, the labelings cannot contain more than $\frac{\ln N}{\theta^2}$ entries equal to $-1$. To this end we will focus on $d$-sparse vectors. More formally, we define a set of labelings of interest $\L(u,d)$ as follows. 
\begin{equation}
\L(u,d) \eqdef \{\ell \in \{-1,1\}^u : |\{i \in [u]: \ell_i = -1\}|\le d\}\;.
\label{eq:labelingSet}
\end{equation}
We next show that there exists a small enough (with respect to $N$) hypothesis set $\Hsp$ that is rich enough. That is, with high probability over $\ell \in \L(u,d)$, there exists a voting classifier $f \in C(\Hsp)$ that attains high minimum margin with $\ell$ over the entire set $\Xs$. Note that the following result, similarly to Lemma~\ref{l:hypoDist} does not depend on the size of $\Xs$, but only on the sparsity of the labelings in question.
\begin{claim} \label{c:HspExistsEx}
If $u \le 2^{N^{O(1)}}$ and $d \le \frac{\ln N}{\theta^2}$ then there exists a hypothesis set $\Hsp$ such that $\ln|\Hsp| = \Theta\left(\ln N\right)$ and 
\[\Pr_{\ell \in_R \L(u,d)}[\exists f \in \C(\Hsp) : \forall i \in [u]. \; \ell_if(\xi_i) \geq \theta] \geq 1-1/N \;.\] 
\end{claim}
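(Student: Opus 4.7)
The plan is to invoke Lemma~\ref{l:hypoDist} to obtain a distribution $\mu$ over sufficiently rich hypothesis sets, and then derandomize it via an averaging (Fubini / Yao minimax) argument. Lemma~\ref{l:hypoDist} provides, for every \emph{fixed} sparse labeling, a random $\H$ that succeeds with high probability; what the claim requires is the symmetric statement that a single \emph{fixed} $\Hsp$ succeeds for most randomly chosen sparse labelings. Exchanging the order of the two sources of randomness is exactly the standard averaging step.

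Concretely, set $d^\star \eqdef \lfloor \ln N / \theta^2 \rfloor$ and $\delta \eqdef 1/N$, and apply Lemma~\ref{l:hypoDist} with the given $u$, with sparsity bound $d^\star$, margin $\theta$, and confidence $\delta$. Every $\ell \in \L(u,d) \subseteq \L(u,d^\star)$ labels at most $d^\star$ points negatively, so the lemma yields, for each such $\ell$,
\[
\Pr_{\H \sim \mu}\bigl[\exists f \in \C(\H) : \forall i \in [u],\; \ell_i f(\xi_i) \ge \theta \bigr] \ge 1 - 1/N .
\]
Taking the expectation over $\ell \in_R \L(u,d)$ of both sides and swapping the order of expectation gives
\[
\E_{\H \sim \mu}\Pr_{\ell \in_R \L(u,d)}\bigl[\exists f \in \C(\H) : \forall i,\; \ell_i f(\xi_i) \ge \theta\bigr] \ge 1 - 1/N,
\]
so at least one $\Hsp \in \supp(\mu)$ satisfies the desired bound; fix such an $\Hsp$.

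It remains to verify $\ln |\Hsp| = \Theta(\ln N)$. By Lemma~\ref{l:hypoDist}, every set in $\supp(\mu)$ has cardinality
\[
N' = \Theta\Bigl(\theta^{-2}\ln d^\star \cdot \ln (\theta^{-2} d^\star \delta^{-1}) \cdot e^{\Theta(\theta^2 d^\star)}\Bigr).
\]
Our choice makes $\theta^2 d^\star = \Theta(\ln N)$, so the exponential factor equals $N^{\Theta(1)}$, which gives the lower bound $\ln N' = \Omega(\ln N)$. For the matching upper bound, $\theta \in (1/N, 1/40)$ gives $\theta^{-2} \le N^2$, $d^\star \le N^2 \ln N$, and $\delta^{-1} = N$, so the polylogarithmic factors $\ln d^\star$ and $\ln(\theta^{-2} d^\star \delta^{-1})$ are both $O(\ln N)$; multiplying everything yields $N' = N^{O(1)}$ and hence $\ln N' = O(\ln N)$. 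The hypothesis $u \le 2^{N^{O(1)}}$ plays no role here — Lemma~\ref{l:hypoDist} is insensitive to $|\Xs|$ — it enters only elsewhere in the overall lower-bound construction.

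The only real obstacle is the parameter bookkeeping; the conceptual ingredient is the routine averaging exchange. The one subtlety worth noting is that the exponential factor in the size expression governs \emph{both} the upper and the lower bound on $\ln|\Hsp|$, so $d^\star$ must be chosen on the order of $\ln N/\theta^2$ (not merely bounded by it) to land exactly at $\ln|\Hsp| = \Theta(\ln N)$; the maximal choice $\lfloor \ln N/\theta^2\rfloor$ is the cleanest way to simultaneously satisfy the sparsity inclusion $\L(u,d) \subseteq \L(u,d^\star)$ and pin down the target size.
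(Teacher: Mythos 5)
Your proof follows the same high-level strategy as the paper — invoke Lemma~\ref{l:hypoDist} and then derandomize via an averaging/Yao step — but with a deliberate and genuinely useful refinement: applying the lemma at sparsity $d^\star = \lfloor \ln N/\theta^2 \rfloor$ rather than the given $d$. The paper feeds the given $d$ directly into $\mu(u,d,\theta,1/N)$ and then asserts $\ln|\Hsp| = \Theta(\ln N)$ from ``$d \le \ln N/\theta^2$, and thus $e^{\theta^2 d} = N$''; taken literally, the stated hypothesis $d \le \ln N/\theta^2$ yields only $e^{\Theta(\theta^2 d)} = N^{O(1)}$ and thus only the upper bound $\ln|\Hsp| = O(\ln N)$. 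The $\Omega(\ln N)$ direction silently uses that in the actual construction $d = \ln N/(16e^{28}\theta^2) = \Theta(\ln N/\theta^2)$. You spotted exactly this — ``the exponential factor governs both the upper and lower bound'' — and your maximal choice of $d^\star$ pins down $\theta^2 d^\star = \Theta(\ln N)$ unconditionally while preserving the inclusion $\L(u,d) \subseteq \L(u,d^\star)$, so the averaging step still delivers the probability bound over $\L(u,d)$. This is a slightly tighter argument than the paper's.

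One small bookkeeping point you should add: Lemma~\ref{l:hypoDist} requires $d^\star \le u$. With $d^\star = \lfloor \ln N/\theta^2\rfloor$ this is not automatic — in the subsequent application $u = \ln N/(16\tau\theta^2)$, and for $\tau$ close to $1$ one has $u < d^\star$. Replace $d^\star$ by $\min\{u, \lfloor \ln N/\theta^2\rfloor\}$; since $u \ge \ln N/(16\theta^2)$ in that regime, this still gives $\theta^2 d^\star = \Omega(\ln N)$, so the lower bound $\ln|\Hsp| = \Omega(\ln N)$ survives. Also note the paper's in-line size formula $\Theta(\theta^{-2}\ln u \cdot \ln(N\theta^{-2}\ln u)\cdot e^{\Theta(\theta^2 d)})$ does not match the $\ln d$ version quoted in Lemma~\ref{l:hypoDist}; you correctly used the version as stated in the lemma. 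If the true dependence is on $\ln u$, the hypothesis $u \le 2^{N^{O(1)}}$ is then doing real work (it bounds $\ln u \le N^{O(1)}$), so I would soften the remark that $u$ ``plays no role here'' to ``plays no role under the size formula as stated in Lemma~\ref{l:hypoDist}.''
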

\begin{proof}
Let $\mu = \mu(u,d,\theta, 1/N)$, be the distribution whose existence is guaranteed in Lemma~\ref{l:hypoDist}. Then for every labeling $\ell \in \L(u,d)$, with probability at least $99/100$ over $\H \sim \mu$, there exists a voting classifier $f \in C(\H)$ that has minimal margin of $\theta$. That is, for every $i \in [u]$, $\ell_i f(\xi_i) \ge \theta$. By Yao's minimax principle, there exists a hypothesis set $\Hsp \in \supp(\mu)$ such that 
\begin{equation*}
\Pr_{\ell \in_R \L(u,d)}[\exists f \in \C(\Hsp) : \forall i \in [u]. \; \ell_if(x_i) \geq \theta] \geq 1-1/N \;.
\end{equation*}
Moreover, since $\Hsp \in \supp(\mu)$, then $|\Hsp| = \Theta\left(\theta^{-2}\ln u \cdot \ln(N\theta^{-2}\ln u)\cdot e^{\Theta(\theta^2 d)}\right)$. Since $\theta \ge 1/N$, $\ln u \le N^{O(1)}$, and $d \le \frac{\ln N}{\theta^2}$, and thus $e^{\theta^2 d} = N$ we get that there exists some universal constant $C>0$ such that $|\Hsp| = \Theta(N^C)$, and thus $\ln |\Hsp| = \Theta(\ln N)$.
\end{proof}

Let $u = \frac{\ln N}{16 \tau \theta^2}$, and let $d = \frac{\ln N}{16e^{28}\theta^2}$. We next introduce some notation. With every set $T \subseteq [u]$ we associate the classifier $h_T : \Xs \to \{-1,1\}$ satisfying that for every $x \in \Xs$, $h_T(x)= -1$ if and only if $x \in T$. For every $m$-point sample $S \in (\Xs\times\{1\})^m$ and every $i \in [u]$, let $b_i^S$ be the number of times $\xi_i$ is sampled into $S$. If the set $S$ is clear from context, we simply denote $b_i$. In these notations, $\Loss_S(h_T) = \frac{1}{m}\sum_{i \in T}b_i^S$ for every $T \subseteq [u]$. Given a sample set $S$ Let $T^*=T^*(S) \subseteq [u]$ be a random set of size $d$ that minimizes $\Loss_S(h_{T^*(S)}) = \sum_{i \in T^*(S)}{b_i^S}$. We will show the following. 

\begin{lemma} \label{l:existsLabelEx}
With probability at least $1/100$ over the choice of sample $S \sim \D^m$, the following holds.
\begin{enumerate}
	\item There exists a voting classifier $f_S \in C(\Hsp)$ such that $f_S(\xi_i)h_{T^*(S)}(\xi_i) \ge \theta$ for all $i \in [u]$; and 
	\item $ \Loss_S(h_{T^*(S)}) \le \frac{d}{u}\left(1 - \sqrt{\frac{\ln(u/2d)}{9m/u}}\right) \;.$
\end{enumerate}
\end{lemma}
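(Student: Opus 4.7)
I will establish parts (1) and (2) separately, each with probability very close to $1$, and then union-bound.

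\textbf{Part 1, via symmetry.} Since $\D$ is uniform over $\Xs \times \{1\}$, the joint distribution of the sample-count vector $(b_1^S,\ldots,b_u^S)$ is invariant under permutations of $[u]$. Breaking ties in the definition of $T^*$ uniformly at random therefore makes $T^*(S)$ uniformly distributed over $\binom{[u]}{d}$; equivalently, the associated labeling (with $\ell_i = -1$ iff $i \in T^*(S)$) is uniform over the set of labelings of Hamming weight exactly $d$. Rerunning the Yao minimax argument of Claim~\ref{c:HspExistsEx} against this restricted uniform distribution (rather than the uniform distribution on all of $\L(u,d)$) yields a hypothesis set $\Hsp$ with $\ln|\Hsp|=\Theta(\ln N)$ such that, for a uniformly random exactly-$d$-sparse labeling $\ell$, with probability at least $1-1/N$ there exists $f \in \C(\Hsp)$ with $\ell_i f(\xi_i) \ge \theta$ for all $i \in [u]$. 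Combining this with the symmetry gives part (1) with probability at least $1-1/N$ over $S$ and the tie-breaking randomness.

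\textbf{Part 2, order statistic of the counts.} Let $\mu \eqdef m/u$ and $\epsilon \eqdef \sqrt{\ln(u/2d)/(9\mu)}$; the theorem's regime $m \ge (\theta^{-2}\ln N)^{1+\Omega(1)}$ and $\tau \ge \ln N \ln m/(m\theta^2)$ guarantees $\mu$ is large and $\epsilon < 1$. The target bound is
\[
\tfrac{1}{m}\sum_{i \in T^*(S)} b_i^S \;\le\; \tfrac{d\mu(1-\epsilon)}{m},
\]
which holds whenever at least $d$ of the $b_i^S$ are at most $\mu(1-\epsilon)$. Each $b_i^S$ is marginally $\mathrm{Bin}(m,1/u)$, and a reverse-Chernoff / Berry--Esseen lower-tail estimate yields
\[
\Pr\bigl[b_i^S \le \mu(1-\epsilon)\bigr] \;\ge\; 2d/u,
\]
since our choice gives $\mu\epsilon^2 = \tfrac{1}{9}\ln(u/2d)$ while $2d/u = 2e^{-9\mu\epsilon^2}$, leaving generous slack above the nominal Gaussian exponent $1/2$. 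The indicators $X_i \eqdef \mathbbm{1}[b_i^S \le \mu(1-\epsilon)]$ inherit negative association from the multinomial $(b_1^S,\ldots,b_u^S)$, so $\E[\sum_i X_i] \ge 2d$ and a Chernoff bound for NA $\{0,1\}$-variables yields $\Pr[\sum_i X_i < d] \le e^{-d/4}$.

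\textbf{Union bound and main obstacle.} A union bound shows that parts (1) and (2) hold simultaneously with probability at least $1 - 1/N - e^{-d/4}$, which exceeds $1/100$ for $N$ large enough. The technical heart of the argument is the quantitative binomial anti-concentration step: we need a reverse Chernoff whose exponent beats $9\mu\epsilon^2 = \ln(u/2d)$, together with a verification that $\mu = m/u$ is large enough in the assumed parameter regime for a Berry--Esseen-type approximation to hold with a tame polynomial prefactor. The factor $9$ in the definition of $\epsilon$ is chosen precisely to absorb that prefactor and the nominal $1/2$ in the Gaussian tail; carrying out this verification rigorously is where most of the work lies.
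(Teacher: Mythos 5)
Your proof mirrors the paper's high-level architecture almost exactly: part (1) uses the permutation symmetry of $\D$ to conclude that $T^*(S)$ is uniform over $\binom{[u]}{d}$, then invokes the Yao-minimax hypothesis set from Claim~\ref{c:HspExistsEx}; part (2) applies a reverse Chernoff bound to a single $b_i^S$, obtains $\Pr[b_i^S \le (1-\delta)m/u] \ge 2d/u$, and then argues that at least $d$ of the indicators land below the threshold simultaneously. The one step where you genuinely depart from the paper is the last simultaneity argument. You invoke a lower-tail Chernoff bound for negatively associated $\{0,1\}$-variables to get $\Pr[\sum_i X_i < d] \le e^{-d/4}$. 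The paper instead observes that the $B_i^S$ are pairwise negatively correlated, bounds $\E[(B^S)^2] \le 2\E[B^S]^2$, and applies Paley--Zygmund to conclude $\Pr[B^S \ge d] \ge 1/8$. Both routes deliver the needed constant, and both rest on the same negative-dependence structure of the multinomial counts. The paper's Paley--Zygmund route is slightly more elementary, needing only pairwise negative correlation and no transfer of Chernoff bounds to NA families; yours yields an exponentially small failure probability for part (2), but that extra strength is discarded at the union bound since the $1/N$ failure from part (1) dominates. You also flag (correctly, and more carefully than the paper's prose) that $h_{T^*(S)}$ is uniform over the exactly-$d$-sparse labelings rather than over all of $\L(u,d)$, and that the Yao argument of Claim~\ref{c:HspExistsEx} must be run against this restricted distribution; that is a real but benign imprecision in the paper, since the minimax argument works against any distribution supported on $\L(u,d)$.
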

Note that as $\tau \ge \frac{\ln N \ln m}{m \theta^2}$ we know that $u = \frac{\ln N}{16 \tau \theta^2} \le \frac{m}{16 \ln m}$ and therefore $\frac{\ln(u/2d)}{9m/u} \le \frac{u\ln(e^{28}/\tau)}{9m} \le \frac{\ln(e^{28}/\tau)}{144 \ln m} \le \frac{1}{2}$ for large enough $N$, and therefore the bound in the second part of Lemma~\ref{l:existsLabelEx} is meaningful. We first show that the lemma implies Theorem~\ref{th:lowerEx}.
\begin{proof}[Proof of Theorem~\ref{th:lowerEx}]
Fix some $\frac{\ln N \ln m}{m \theta^2} \le \tau \le 1$. From Lemma~\ref{l:existsLabelEx} with probability $1/100$ over the choice of a sample $S \sim \D^m$ there exists a voting classifier $f_S \in C(\Hsp)$ such that $f_S(\xi_i)h_{T^*(S)}(\xi_i) \ge \theta$ for all $i \in [u]$ and moreover $\Loss_S(h_{T^*(S)}) \le \tau$. Consider $f_S$, and note first that 
\[
\Loss_{\D}(f_S) = \frac{1}{u}\sum_{i \in [u]}{\mathbbm{1}_{f_S(\xi_i)<0}} = \frac{1}{u}\sum_{i \in [u]}{\mathbbm{1}_{h_{T^*(S)}(\xi_i)<0}} = \frac{|T^*(S)|}{u} = \frac{d}{u} \;.
\]
Additionally, since for every $i \in [u]$, $f_S(\xi_i) \le 0$ if and only if $f_S(\xi_i) \le \theta$, then
\[
\Loss_S^\theta(f_S) = \Loss_S(f_S) = \Loss_S(h_{T^*(S)}) \le \frac{d}{u}\left(1 - \sqrt{\frac{\ln(u/2d)}{9m/u}}\right) \le \frac{d}{2u} \le \tau\;.
\]
Summing up we get also that 
\[
\Loss_{\D}(f_S) - \Loss_S^\theta(f_S) \ge \frac{d}{u}\sqrt{\frac{\ln(u/2d)}{9m/u}} = \Omega\left (\tau\sqrt{\frac{u\ln(\tau^{-1})}{m}} \right) = \Omega\left (\sqrt{\frac{\ln N \tau \ln(\tau^{-1})}{m\theta^2}} \right) \;.
\]
\end{proof}

For the rest of the section we therefore prove Lemma~\ref{l:existsLabelEx}. 
First note that since $\D$ is uniform over $\Xs \times \{1\}$, and since given $S \sim \D^m$, $T^*$ is sampled uniformly over all subsets $T \in \binom{[u]}{d}$ such that the sum $\sum_{i \in T}{b_i^S}$ is minimized, we get that for every $T \in \binom{[u]}{d}$, $\Pr_{S \sim \D^m}[T^*(S) = T] = \binom{u}{d}^{-1}$. In other words, for every $h \in \L(u,d)$, $\Pr_{S \sim \D^m}[h_{T^*(S)} = h] = \binom{u}{d}^{-1}$. Therefore $h_{T^*(S)}$ is uniformly distributed over $\L(u,d)$. From claim~\ref{c:HspExistsEx} it follows that for large enough $N$, 
the probability over the choice of $S \sim \D^m$ that there exists $f_S \in C(\Hsp)$ such that $f_S(\xi)h_{T^*(S)}(\xi_i) \ge \theta$ for all $i \in [u]$ is at least $99/100$. In order to prove Lemma~\ref{l:existsLabelEx}, it is therefore enough to show that with probability at least $1/50$ over the choice of $S \sim \D^m$, $\Loss_S(h_{T^*(S)}) \le \frac{d}{u}\left(1 - \sqrt{\frac{\ln(u/2d)}{9m/u}}\right)$. We will show that with probability at least $1/50$ over the choice of $S$ there exist $i_1,\ldots,i_d \in [u]$ such that for every $j \in [d]$, $b_{i_j}^S \le \frac{m}{u}\left(1 - \sqrt{\frac{\ln(u/2d)}{9m/u}}\right)$. Since $T^*(S)$ minimizes $\sum_{i \in T^*(S)}b_i^S$, it follows that 
\[
\Loss_S(h_{T^*(S)}) = \frac{1}{m}\sum_{i \in T^*(S)}b_i^S \le \frac{1}{m}\sum_{j \in [d]}b_{i_j}^S \le \frac{d}{u}\left(1 - \sqrt{\frac{\ln(u/2d)}{9m/u}}\right) \;.
\]
To this end, fix some $i \in [u]$. For every $j \in [m]$, let $I_j^S$ be an indicator for the event that the $j$th element selected into $S$ is $(\xi_i,1)$. Then $b_i^S = \sum_{j \in [m]}{I_j^S}$, and as $\D$ is uniform, we get that $\mathbb{E}[b_i^S] = \sum_{j \in [m]}\mathbb{E}[I_j^S] = m/u$. We will use the following reverse Chernoff bound and show that with good enough probability, $b_i^S$ is far from its expectation.

\begin{lemma}\label{l:revChernoff}
Let $m \in \mathbb{N}^+$ and let $I_1,\ldots,I_m$ be independent indicator random variables with success probability $1/u$. Then for every $\sqrt{3/(m/u)} \le \delta \le 1/2$ we have
\[
\Pr\left[\sum_{j \in [m]}{I_j} \le (1-\delta)mp\right] \ge e^{-9m\delta^2/u} \;.
\]
\end{lemma}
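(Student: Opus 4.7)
The plan is to lower bound $\Pr[\sum_j I_j \le (1-\delta)m/u]$ by combining Stirling's approximation on a binomial point mass with a Taylor bound on the KL divergence. Write $p = 1/u$, $\mu = mp = m/u$, and $X = \sum_j I_j \sim \mathrm{Bin}(m,p)$; throughout I implicitly assume $u \ge 2$, so $p \le 1/2$ (the $u=1$ case is not covered by the statement). Set $k^\star = \lfloor (1-\delta)\mu\rfloor$. The hypotheses $\sqrt{3/\mu}\le \delta \le 1/2$ yield $\mu\delta^2 \ge 3$ and $\mu \ge 12$, and writing $k^\star/m = (1-\delta')p$ with $\delta' = 1 - k^\star/(mp)$ one obtains $\delta' \in [\delta, \delta+1/\mu] \subseteq [\delta, 7\delta/6]$ via $1/\mu \le \delta^2/3$.

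The two key calculations I would carry out are as follows. First, Stirling's formula $n! = \sqrt{2\pi n}(n/e)^n e^{\Theta(1/n)}$ applied to the three factorials in $\binom{m}{k^\star}$, together with the trivial bound $k^\star(m-k^\star)/m \le \mu$, yields $\Pr[X=k^\star] \ge (c_0/\sqrt{\mu})\exp(-m D(k^\star/m \,\|\, p))$ for an explicit universal constant $c_0>0$. Second, I would bound $D$ directly: applying $\ln(1-x)\le -x$ and $\ln(1+x)\le x$ to the two summands of $D((1-\delta')p \,\|\, p)$ and telescoping the resulting expression, the cross terms cancel and one is left with the clean inequality $D \le p(\delta')^2/(1-p)$. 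Combining $\delta' \le 7\delta/6$ with $p \le 1/2$ then gives $m D \le 3\mu\delta^2$, and hence $\Pr[X=k^\star] \ge (c_0/\sqrt{\mu})\exp(-3\mu\delta^2)$.

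The main obstacle will be the residual $1/\sqrt{\mu}$ prefactor: when $\mu$ is very large (so that $\delta$ is correspondingly small), the single-term bound $\Pr[X\le k^\star] \ge \Pr[X=k^\star]$ is not sharp enough to reach $\exp(-9\mu\delta^2)$. I would overcome this by summing several neighboring point masses. The exact recursion $\Pr[X=k^\star-t]/\Pr[X=k^\star-t+1] = (k^\star-t+1)(1-p)/((m-k^\star+t)p)$ can be shown, for every $t \in \{0,\ldots,\lceil 1/\delta\rceil\}$, to be at least $1 - O(\delta)$ by using $\mu\delta^2 \ge 3$ to guarantee $k^\star - t \ge \mu/3$ throughout this range, so that the ratio is bounded below by a function of $\delta$ alone. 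Telescoping gives $\Pr[X=k^\star-t] \ge (1-O(\delta))^t \Pr[X=k^\star]$, and the geometric series contributes a factor $\Omega(1/\delta)$, upgrading the bound to $\Pr[X\le k^\star] \ge c_1(\mu\delta^2)^{-1/2}\exp(-3\mu\delta^2)$ for a universal constant $c_1>0$. Since $\mu\delta^2 \ge 3$, the elementary inequality $6z \ge \tfrac{1}{2}\ln z + |\ln c_1|$ for $z \ge 3$ (immediate from checking at $z=3$ and noting the left side grows linearly while the right side grows only logarithmically) shows $c_1(\mu\delta^2)^{-1/2} \ge \exp(-6\mu\delta^2)$, which converts the bound into the desired $\Pr[X\le (1-\delta)m/u] \ge \exp(-9m\delta^2/u)$.
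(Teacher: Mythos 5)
The paper states Lemma~\ref{l:revChernoff} without providing or citing a proof; it is treated as a known reverse-Chernoff estimate, so there is no in-paper argument to compare yours against. Evaluated on its own terms, your plan is a standard and sound route: a Stirling lower bound on the binomial point mass giving $\Pr[X=k^\star]\ge (c_0/\sqrt{\mu})e^{-mD(k^\star/m\,\|\,p)}$, the exact simplification $D((1-\delta')p\,\|\,p)\le p(\delta')^2/(1-p)$ via $\ln(1-x)\le -x$ and $\ln(1+x)\le x$ (your telescoping of the cross terms is correct; the bracketed factor collapses to $\delta'/(1-p)$), and then summing $\Theta(1/\delta)$ neighboring masses to convert the $1/\sqrt{\mu}$ prefactor into $1/\sqrt{\mu\delta^2}$, which can be absorbed into the exponent once $\mu\delta^2\ge 3$. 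The final arithmetic works out: $mD\le \tfrac{49}{18}\mu\delta^2<3\mu\delta^2$ (using $\delta'\le 7\delta/6$ and $p\le 1/2$), so you have $9-\tfrac{49}{18}>6$ of slack in the exponent to pay for the $c_1(\mu\delta^2)^{-1/2}$ prefactor, and the ``$6z\ge\tfrac12\ln z+|\ln c_1|$ for $z\ge 3$'' check is correct provided $c_1\ge e^{-17}$ or so, which it is.

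Two small points you should tighten when writing this up. First, the claim ``$k^\star-t\ge\mu/3$'' is slightly too strong as stated: with $t\le\lceil 1/\delta\rceil\le\mu\delta/3+1\le\mu/6+1$ and $k^\star\ge(1-\delta)\mu-1\ge\mu/2-1$, you only directly get $k^\star-t\ge\mu/3-2$; this is fine for the argument (you only need a constant fraction of $\mu$, and positivity), but the constant should be stated correctly. Second, the ratio bound ``$\ge 1-O(\delta)$'' with, say, constant $c$ in the $O(\cdot)$ only yields a geometric sum $\Omega(1/\delta)$ directly when $\delta<1/c$; for $\delta$ up to $1/2$ the factor $1-c\delta$ may be nonpositive. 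This is easily repaired—when $\delta\ge 1/c$ the single term $t=0$ already contributes $1\ge(1/\delta)/c=\Omega(1/\delta)$—but as written the sketch does not acknowledge the case split, so a reader might worry. Neither issue affects the validity of the approach; both are routine constant-chasing.
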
 
Denote $\delta \eqdef \sqrt{\frac{\ln(u/2d)}{9m/u}}$. As we have shown earlier, $\delta \le 1/2$. Moreover, since $\frac{u}{2d} \ge e^{27}\tau^{-1} \ge e^{27}$, we get that $\delta \ge \sqrt{\frac{27}{9m/u}} = \sqrt{\frac{3}{m/u}}$. We can therefore conclude from Lemma~\ref{l:revChernoff} that 
\[
\Pr[b_i^S \ge (1-\delta)m/u] \ge e^{-9m\delta^2/u} = e^{-\ln(u/2d)} = \frac{2d}{u} \;.
\]
Let $B_i^S$ be the indicator for the event $b_i^S \ge (1-\delta)m/u$, then $\mathbb{E}[B_i^S] \ge \frac{2d}{u}$. Finally, let $B^S = \sum_{i \in [u]}B_i^S$, then $\mathbb{E}[B^S] \ge 2d$. We will show that with probability at least $1/8 \ge 1/50$ we have $B^S \ge d$. This implies that there exist $i_1,\ldots,i_d$ such that for every $j \in [d]$, $b_{i_j}^S \le \frac{m}{u}\left(1 - \delta\right) = \frac{m}{u}\left(1 - \sqrt{\frac{\ln(u/2d)}{9m/u}}\right)$. To show $B^S \ge d$ with reasonable probability, we use the Paley-Zigmund inequality.
\[
\Pr[B^S \ge d] = \Pr\left[B^S \ge \frac{1}{2}\mathbb{E}[B^S]\right] \ge \frac{\mathbb{E}[B^S]^2}{4\mathbb{E}[(B^S)^2]} \;.
\]
Since $B_1^S,\ldots,B_u^S$ are negatively correlated, we have that $\mathbb{E}[B_i^SB_j^S] \le \mathbb{E}[B_i^S]\mathbb[B_j^S] = \mathbb{E}[B_1^S]^2$ for every $i,j \in [u]$. Moreover, as $B_1^S,\ldots,B_u^S$ are indicators,  $\mathbb{E}[(B_i^S)^2] = \mathbb{E}[B_i^S]$ for all $i \in [u]$. Therefore
\begin{equation*}
\begin{split}
\mathbb{E}[(B^S)^2] &= \sum_{i,j \in [u]}{\mathbb{E}[B_i^SB_j^S]} \le (u^2-u)\mathbb{E}[B_1^S]^2 + u\mathbb{E}[B_i^S] \\
&\le u^2\mathbb{E}[B_1^S]^2 + \mathbb{E}[B^S] = \mathbb{E}[B^S]^2 + \mathbb{E}[B^S] \le 2\mathbb{E}[B^S]^2 \;,
\end{split}
\end{equation*}
where the last inequality is due to the fact that $\mathbb{E}[B^S] \ge 2d \ge 1$. We conclude that 
\[
\Pr[B^S \ge d] \ge \frac{\mathbb{E}[B^S]^2}{4\mathbb{E}[(B^S)^2]} \ge \frac{1}{8}\;.
\]
The proof of the lemma, and therefore of Theorem~\ref{th:lowerEx} is now complete.

\newcommand{\etalchar}[1]{$^{#1}$}

\appendix
\section{Proof of Lemma~\ref{l:mainTechnical}}
We start by handling the first part of the lemma, namely that for every $N \in \mathbb{N}^+$, with high probability over $S \sim \D^m$, $S \in {\cal E}_N$.

\begin{claim} \label{c:boundForAllGForAllL}
For every $N \in \mathbb{N}^+$, $g \in {\cal C}_N$ and $\ell \in [0,N]$, with probability at least $1 - \frac{\delta}{N(N+1)^2|\H|^N}$ over $S \sim \D^m$ we have
\begin{equation}
\Loss_{\D}^{\ell/N}(g) \le \Loss_{S}^{\ell/N}(g) + \frac{8\ln(4\delta^{-1}N(N+1)^2|\H|^N)}{m} + 4\sqrt{\frac{\ln(4N(N+1)^2|\H|^N/\delta)}{m}\Loss_{S}^{\ell/N}} \; ;
\label{eq:netUpper}
\end{equation}
and
\begin{equation}
\Pr_{(x,y)\sim\D}[\;|f(x)-g(x)| > \ell/N ] \le 2\Pr_{(x,y)\sim S}[\;|f(x)-g(x)| > \ell/N ] + \frac{8\ln(4\delta^{-1}N(N+1)^2|\H|^N)}{m} \;.
\label{eq:netUpper2}
\end{equation}
\end{claim}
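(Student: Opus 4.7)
The plan is to prove \eqref{eq:netUpper} and \eqref{eq:netUpper2} for a single fixed pair $(g,\ell) \in \C_N \times [0,N]$ by invoking two standard concentration inequalities for Bernoulli random variables and then taking a union bound over the two corresponding failure events. The union bound across all $g \in \C_N$ and $\ell \in [0,N]$ is postponed to the main proof of Lemma~\ref{l:mainTechnical}, where the factor $|\H|^N \cdot (N+1)$ in the log term will absorb it.

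For \eqref{eq:netUpper}, I would define $X_i := \mathbbm{1}[y_i g(x_i) < \ell/N]$ for each $(x_i,y_i) \in S$. These are i.i.d.\ Bernoulli with mean $p := \Loss_\D^{\ell/N}(g)$, and their empirical mean is exactly $\hat p := \Loss_S^{\ell/N}(g)$. The multiplicative/relative-entropy Chernoff bound, after inverting so that the deviation is expressed in terms of the empirical quantity, yields that for every $t>0$, with probability at least $1-e^{-t}$,
\[
p \;\le\; \hat p \;+\; 4\sqrt{\frac{t\,\hat p}{m}} \;+\; \frac{8 t}{m}\;.
\]
Setting $t = \ln\bigl(4\delta^{-1}N(N+1)^2|\H|^N\bigr)$ recovers \eqref{eq:netUpper} with failure probability at most $\delta/(4N(N+1)^2|\H|^N)$.

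For \eqref{eq:netUpper2}, I would instead take $Y_i := \mathbbm{1}[\,|f(x_i)-g(x_i)| > \ell/N\,]$, which are i.i.d.\ Bernoulli with mean $q := \Pr_{(x,y)\sim\D}[|f(x)-g(x)| > \ell/N]$ and empirical mean $\hat q := \Pr_{(x,y)\sim S}[|f(x)-g(x)| > \ell/N]$. Here the clean shape of \eqref{eq:netUpper2} (no square-root term, factor $2$ in front of $\hat q$) is precisely what the one-sided multiplicative Chernoff inequality $\Pr[\hat q \le q/2] \le \exp(-mq/8)$ gives: after inversion, with probability at least $1-e^{-t}$,
\[
q \;\le\; 2\hat q \;+\; \frac{8t}{m}\;.
\]
Choosing the same $t = \ln\bigl(4\delta^{-1}N(N+1)^2|\H|^N\bigr)$ produces \eqref{eq:netUpper2} with the same failure probability.

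A final union bound over the two events yields both inequalities simultaneously with probability at least $1-\delta/(2N(N+1)^2|\H|^N)$, which is slightly stronger than the claimed $1-\delta/(N(N+1)^2|\H|^N)$. There is no substantive obstacle in this proof: it is a routine application of two standard Bernoulli concentration inequalities. The only genuine design choice is to use the \emph{multiplicative} (relative-entropy) Chernoff form for \eqref{eq:netUpper2} rather than an additive Hoeffding/Bernstein bound, since the relative form is what produces the factor $2$ in front of $\hat q$ with no square-root term, and this $2\hat q$ shape is exactly what is later needed to translate empirical closeness of $g$ to $f$ on $S$ into distributional closeness of $g$ to $f$ under $\D$.
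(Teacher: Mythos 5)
Your proposal is correct and follows essentially the same route as the paper: both inequalities are obtained by Chernoff concentration for the i.i.d.\ Bernoulli indicators, inverted so that the deviation is stated in terms of the empirical quantity, with the degenerate case (distributional probability below $8t/m$) handled trivially. The only presentational difference is that you cite the "inverted" multiplicative Chernoff bounds as black boxes, whereas the paper derives them explicitly: for \eqref{eq:netUpper} it applies a lower-tail Chernoff with $\gamma=\sqrt{2t/(m\Loss_\D^{\ell/N}(g))}$ to get $\Loss_\D^{\ell/N}(g)\le(1-\gamma)^{-1}\Loss_S^{\ell/N}(g)$ and then an upper-tail Chernoff ($\Loss_S^{\ell/N}(g)\le 2\Loss_\D^{\ell/N}(g)$) to re-express $\gamma$ in terms of the empirical quantity, which is precisely the content of your quoted inequality $p\le\hat p+4\sqrt{t\hat p/m}+8t/m$. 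For \eqref{eq:netUpper2}, your use of the lower-tail bound $\Pr[\hat q\le q/2]\le e^{-mq/8}$ is the right one; the paper's displayed inequality at this point ($\Pr[\hat q>2q]\le e^{-mq/3}$) is in the wrong direction and is presumably a typo, since $\hat q\le 2q$ does not imply $q\le 2\hat q$ — your version gives what is actually needed. Your remark that the union bound across $g\in\C_N$ and $\ell\in[0,N]$ is deferred is also exactly how the paper proceeds.
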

We draw the reader's attention to the fact that by union bounding over all $g \in \C_N$ and $\ell \in [0,N]$ we get that $\Pr_{S \sim \D^m}[{\cal E}_N] \ge 1 - \frac{\delta}{N(N+1)}$ for every $N \in \mathbb{N}^+$, which proves the first part of Lemma~\ref{l:mainTechnical}.
\begin{proof}
First note that if $\Loss_{\D}^{\ell/N}(g) \le 8m^{-1}\ln(4\delta^{-1}N(N+1)^2|\H|^N)$ then \eqref{eq:netUpper} holds for all $S$, and thus with probability $1$ over $S \sim \D^m$. Assume therefore that $\Loss_{\D}^{\ell/N}(g) > 8m^{-1}\ln(2\delta^{-1}N(N+1)^2|\H|^N)$. 
Denote $S = \{(x_j,y_j)\}_{j \in [m]}$, then 
\begin{equation*}
\Loss_S^{\ell/N}(g) = \Pr_{(x,y) \sim S}[yg(x) \le \ell/N] = \frac{1}{m}\sum_{j \in [m]}{\mathbbm{1}_{yg(x_j)\le  \ell/N}} \;.
\end{equation*}
Moreover $\E[\mathbbm{1}_{yg(x_j)\le  \ell/N}] = \Loss_\D^{\ell/N}(g)$ for all $j \in [m]$, and therefore $\E[\Loss_S^{\ell/N}(g)] = \Loss_\D^{\ell/N}(g)$. 
Let $\gamma \eqdef \sqrt{\frac{2\ln(4N(N+1)^2|\H|^N/\delta)}{m \Loss_{\D}^{\ell/N}}}$. Then $\gamma \in (0,1/2)$, and therefore a Chernoff bound gives the following two inequalities.
\begin{equation*}
\Pr_{S \sim \D^m}\left[\Loss_S^{\ell/N}(g) < (1-\gamma)\Loss_\D^{\ell/N}(g)\right] \le e^{-\gamma^2m\Loss_\D^{\ell/N}(g)/2} \le \frac{\delta}{4N(N+1)^2|\H|^N}
\label{eq:CherrUp}
\end{equation*}
\begin{equation*}
\Pr_{S \sim \D^m}\left[\Loss_S^{\ell/N}(g) > 2\Loss_\D^{\ell/N}(g)\right] \le e^{-m\Loss_\D^{\ell/N}(g)/3} \le \frac{\delta}{4N(N+1)^2|\H|^N}  \;,
\label{eq:CherrDown}
\end{equation*}
where the last inequality follows from the fact that $\Loss_\D^{\ell/N}(g) \ge 8m^{-1}\ln(2\delta^{-1}N(N+1)^2|\H|^N)$. Therefore with probability at least $1 - \delta/(2N(N+1)^2|\H|^N)$ we get that 
\begin{equation}
\Loss_{\D}^{\ell/N}(g) \le (1-\gamma)^{-1}\Loss_{S}^{\ell/N}(g) \le (1+2\gamma)\Loss_{S}^{\ell/N}(g) \le (1+2\gamma)\Loss_{S}^{\ell/N}(g) + \frac{8\ln(2\delta^{-1}N(N+1)^2|\H|^N)}{m} \;,
\label{eq:upperBoundingLossD}
\end{equation}
and moreover 
\begin{equation}
\gamma = \sqrt{\frac{2\ln(N(N+1)^2|\H|^N/\delta)}{m \Loss_{\D}^{\ell/N}(g)}} \le \sqrt{\frac{4\ln(N(N+1)^2|\H|^N/\delta)}{m \Loss_{S}^{\ell/N}(g)}}
\label{eq:upperBoundingGamma}
\end{equation}
Plugging \eqref{eq:upperBoundingGamma} into \eqref{eq:upperBoundingLossD} and summing up we get 
\begin{equation*}
\Loss_{\D}^{\ell/N}(g) \le \Loss_{S}^{\ell/N}(g) + \frac{8\ln(2\delta^{-1}N(N+1)^2|\H|^N)}{m} + 4\sqrt{\frac{\ln(N(N+1)^2|\H|^N/\delta)}{m}\Loss_{S}^{\ell/N}(g)} \;.
\label{eq:upperBoundingLossD2}
\end{equation*}
Next note once again that if $\Pr_{(x,y)\sim\D}[\;|f(x)-g(x)| > \ell/N ] \le 8m^{-1}\ln(4\delta^{-1}N(N+1)^2|\H|^N)$ then \eqref{eq:netUpper2} holds for all $S$, and thus with probability $1$ over $S \sim \D^m$. Assume therefore that $\Pr_{(x,y)\sim\D}[\;|f(x)-g(x)| > \ell/N ] > 8m^{-1}\ln(4\delta^{-1}N(N+1)^2|\H|^N) $. 
Similarly to the first part of the proof a Chernoff bound gives the following inequality.
\begin{equation*}
\begin{split}
\Pr_{S \sim \D^m}&\left[\Pr_{(x,y)\sim S}[\;|f(x)-g(x)| > \ell/N] > 2\Pr_{(x,y)\sim\D}[\;|f(x)-g(x)| > \ell/N]\right] \\
&\le e^{-m\Pr_{(x,y)\sim\D}[\;|f(x)-g(x)| > \ell/N]/3} \le \frac{\delta}{4N(N+1)^2|\H|^N}  \;,
\label{eq:CherrDownSimple}
\end{split}
\end{equation*}
where the last inequality follows from the fact that $\Pr_{(x,y)\sim\D}[\;|f(x)-g(x)| > \ell/N] \ge 8m^{-1}\ln(2\delta^{-1}N(N+1)^2|\H|^N)$. 
Therefore with probability at least $1 - \delta/(2N(N+1)^2|\H|^N)$ we get \eqref{eq:netUpper2}.
Union bounding we get that with probability with probability at least $1 - \delta/(N(N+1)^2|\H|^N)$ over the choice of $S \sim \D^m$ we have both \eqref{eq:netUpper} and \eqref{eq:netUpper2}.
\end{proof}

We turn now to prove the second part of Lemma~\ref{l:mainTechnical}, namely that $\bigcap_{N \in \mathbb{N}^+}{\cal E}_N \subseteq {\cal E}$. To this end, let $S \in \bigcap_{N \in \mathbb{N}^+}{\cal E}_N$. Let $f$ be some voting classifier and let $\theta > 0$.
As $f$ is a voting classifier, then there exists a sequence $\left\langle \alpha_h \right\rangle_{h \in {\H}} \in \mathbb{R}_+^{\H}$ such that $\sum_{h \in {\cal H}}{\alpha_h}=1$ and $f = \sum_{h \in {\cal H}}{\alpha_h \cdot h}$. Thus $f$ implicitly defines a distribution $\Q = \Q(f)$ over ${\cal H}$, where $\Pr_{h \sim \Q}[h=h'] = \alpha_{h'}$ for all $h' \in {\cal H}$. Recall that $\Delta : \X \times \H \to \mathbb{R}$ is defined by $\Delta(x,h) \eqdef |f(x) - h(x)|$ for every $x \in \X$, $h \in \H$. 
\begin{definition}
Let $X$ be a random variable, and let $r \in \mathbb{N}$, then the {\em $r$th moment of $X$} is defined by $\|X\|_r^r \eqdef \E[X^r]$. The {\em $r$th norm of $X$} is defined by $\|X\|_r \eqdef \sqrt[r]{\E[X^r]}$.
\end{definition}
Set hereafter
\begin{equation*}
\begin{split}
N \eqdef \lg(16m) \cdot \max\{&256 \theta^{-1}\|\Delta(x,h)\|_{\lg(16m)}, 100/\theta \;,\\
&128e \theta^{-2}\cdot \left(\E_{(x,y)\sim S}\left[\E_{h \sim \Q}\left[\Delta(x,h)^2\right]^{(\lg(16m))/2}\right]\right)^{2/(\lg(16m)} \}
\end{split}
\end{equation*}

The product distribution $\Q^{N}$ defines a distribution over ${\cal H}^{N}$. By identifying an $N$-tuple $h_1,\ldots,h_N \in \H$ with the corresponding classifier $\frac{1}{N}\sum_{j \in [N]}h_j$ we can think of $\Q^N$ also as a distribution over ${\cal C}_{N}$.  
We first observe that 
\begin{equation}
\begin{split}
\Loss_{\D}(f)
&\le \Pr_{(x,y)\sim\D, g \sim \Q^N}[yf(x)\le 0 \wedge yg(x)\le \theta/2] + \Pr_{(x,y)\sim\D, g \sim \Q^N}[yf(x)\le 0 \wedge yg(x)>\theta/2] \\
&\le \Pr_{(x,y)\sim\D, g \sim \Q^N}[yg(x)\le \theta/2] + \Pr_{(x,y)\sim\D, g \sim \Q^N}[\; |f(x) - g(x) |>\theta/2]
\label{eq:intoGrid}
\end{split}
\end{equation}

To bound the first summand, let $\ell \in [0,N]$ be the smallest integer such that $\theta/2 \le \ell/N$. Such $\ell$ clearly exists as $\theta \in [0,1]$. Moreover we know that $\theta/2 \le \ell/N \le \theta/2 + 1/N \le 51\theta/100$. Since $S \in {\cal E}_N$ we get that 
\begin{equation*}
\begin{split}
\Pr&_{\substack{(x,y)\sim\D \\ g \sim \Q^N}}[yg(x)\le \theta/2] \le \Pr_{\substack{(x,y)\sim\D \\ g \sim \Q^N}}[yg(x)\le \ell/N] = \E_{g \sim \Q^N}\left[\Pr_{(x,y)\sim\D}[yg(x)\le \ell/N]\right] \\
&\le \E_{g \sim \Q^N}\left[\Pr_{(x,y)\sim S}[yg(x)\le \ell/N] + \varepsilon_N(g)\right] \le \Pr_{\substack{(x,y)\sim S \\ g \sim \Q^N}}[yg(x)\le 51\theta/100] + \E_{g \sim \Q^N}[\varepsilon_N(g)] \;,
\end{split}
\end{equation*}
where $\varepsilon_N(g) = \frac{8\ln(2\delta^{-1}N(N+1)^2|\H|^N)}{m} + 4\sqrt{\frac{\ln(N(N+1)^2|\H|^N/\delta)}{m}\Loss_{S}^{\ell/N}(g)}$.
Similarly to \eqref{eq:intoGrid} we get that
\begin{equation*}
\Pr_{\substack{(x,y)\sim S \\ g \sim \Q^N}}[yg(x)\le 51\theta/100] \le \Pr_{\substack{(x,y)\sim S}}[yf(x)\le \theta] + \Pr_{\substack{(x,y)\sim S \\ g \sim \Q^N}}[\;|f(x) - g(x) | > 49\theta/100]\;,
\end{equation*}
and therefore
\begin{equation}
\Pr_{\substack{(x,y)\sim\D \\ g \sim \Q^N}}[yg(x)\le \theta/2] \le \Pr_{\substack{(x,y)\sim S}}[yf(x)\le \theta] + \Pr_{\substack{(x,y)\sim S \\ g \sim \Q^N}}[\;|f(x) - g(x) | > 49\theta/100] + \E_{g \sim \Q^N}[\varepsilon_N(g)] \;.
\label{eq:firstSummand}
\end{equation}
Moreover, since $S \in {\cal E}_N$ we get the following bound over the second summand in \eqref{eq:intoGrid}.
\begin{equation}
\begin{split}
\Pr_{\substack{(x,y)\sim\D \\ g \sim \Q^N}}&[\; |f(x) - g(x) |>\theta/2] \le \Pr_{\substack{(x,y)\sim\D \\ g \sim \Q^N}}[\; |f(x) - g(x) |>(\ell-1)/N] \\ &\le 2\Pr_{\substack{(x,y)\sim S \\ g \sim \Q^N}}[\; |f(x) - g(x) |>(\ell-1)/N] + \frac{8\ln(2\delta^{-1}N(N+1)^2|\H|^N)}{m} \\ &\le 2\Pr_{\substack{(x,y)\sim S \\ g \sim \Q^N}}[\; |f(x) - g(x) |> 49\theta/100] + \frac{8\ln(2\delta^{-1}N(N+1)^2|\H|^N)}{m}
\end{split} 
\label{eq:secondSummand}
\end{equation} 
Plugging \eqref{eq:firstSummand} and \eqref{eq:secondSummand} into \eqref{eq:intoGrid} we get that
\begin{equation}
\begin{split}
\Loss_{\D}(f) &\le \Pr_{\substack{(x,y)\sim S}}[yf(x)\le \theta] + 3\Pr_{\substack{(x,y)\sim S \\ g \sim \Q^N}}[\;|f(x) - g(x) | > 49\theta/100] \\
&+ \frac{16\ln(2\delta^{-1}N(N+1)^2|\H|^N)}{m} +\E_{g \sim \Q^N}\left[\sqrt{\frac{\ln(N(N+1)^2|\H|^N/\delta)}{m}\Loss_{S}^{\ell/N}(g)} \right]
\label{eq:DToS}
\end{split}
\end{equation}
From Lemma~\ref{l:gridApproxWHP} we get that by Jensen's inequality and sub-additivity of square root
\begin{equation}
\begin{split}
\E_{g \sim \Q^N}\left[ \sqrt{\frac{\ln(N(N+1)^2|\H|^N/\delta)}{m}\Loss_{S}^{\ell/N}(g)} \right] &\le  \sqrt{\frac{\ln(N(N+1)^2|\H|^N/\delta)}{m}\E_{g \sim \Q^N}\left[\Loss_{S}^{51\theta/100}(g)\right]}\\
&\le \sqrt{\frac{\ln(N(N+1)^2|\H|^N/\delta)}{m}\left(\Loss_{S}^{\theta}(f) + \frac{1}{m^2}\right)}\\
&\le \frac{1}{m} +  \sqrt{\frac{\ln(N(N+1)^2|\H|^N/\delta)}{m}\Loss_{S}^{\theta}(f)} \;,
\end{split}
\end{equation}
and therefore
\begin{equation*}
\Loss_{\D}(f) \le \Loss_S^{\theta}(f) + O\left(\frac{N\lg|H| + \lg(1/\delta)}{m} + \sqrt{\frac{N\lg|H| + \lg(1/\delta)}{m}\Loss_S^{\theta}(f)}\right) \;,
\end{equation*}
which concludes the proof of Theorem~\ref{th:upperMain}.

\clearpage

\section{Experiments Details}
In this section we visually analyze the  extra data sets and results shown in Table \ref{tab:large_trees} in the same way as was done for the Forest Cover data set in the main test. 
\subsection*{Higgs}
In Figure \ref{fig:higgs}, \ref{fig:higgs_exp}, \ref{fig:higgs_histogram} we see the result of our new refined margin analysis on the Higgs data set, trained with a learning rate 0.3 for LightGBM and a max tree size of 512, following the analysis in the the main text.
As the plots show, the results are in perfect agreement with the results seen for the Forest Cover data set in the main text. 
Figure \ref{fig:simple_margin_higgs}, shows that the LightGBM model has much worse margins while Figure \ref{fig:simple_test_higgs} show that the LightGBM classifier generalizes  better.
The comparison between the k'th margin generalization bound and our new refined margin generalization bound is shown in Figure \ref{fig:thetasquare_higgs} and \ref{fig:N_higgs}.
While the existing k'th margin bound shows that AdaBoost should generalize better, which it does not,  our new generalization fits the observed performance of the two classifiers.
We have shown the histogram of all tree predictions for all data points for the LightGBM classifier in Figure \ref{fig:higgs_histogram}, explaining why
our new  generalization bound is able to explain the results.
\begin{figure}[h]
  \begin{subfigure}[t]{0.5\textwidth}
    \centering
    \includegraphics[width=.9\linewidth]{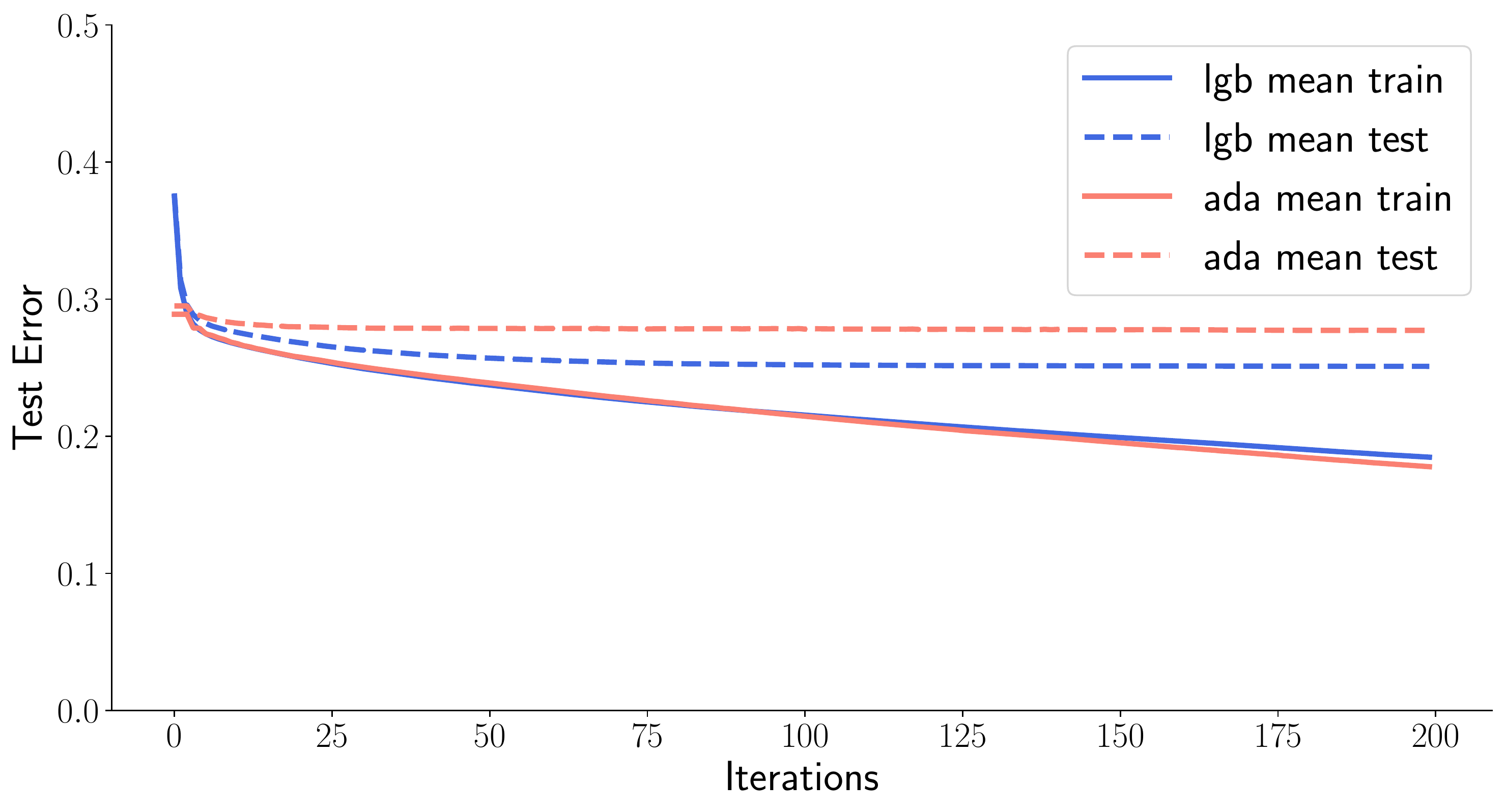}
    \caption{Mean training and test error over five runs. The std. deviation of the test error at iteration 200 is approx. 0.0006 for both classifiers}
    \label{fig:simple_test_higgs}
  \end{subfigure}%
  $\quad$
  \begin{subfigure}[t]{0.5\textwidth}
    \centering
    \includegraphics[width=.9\linewidth]{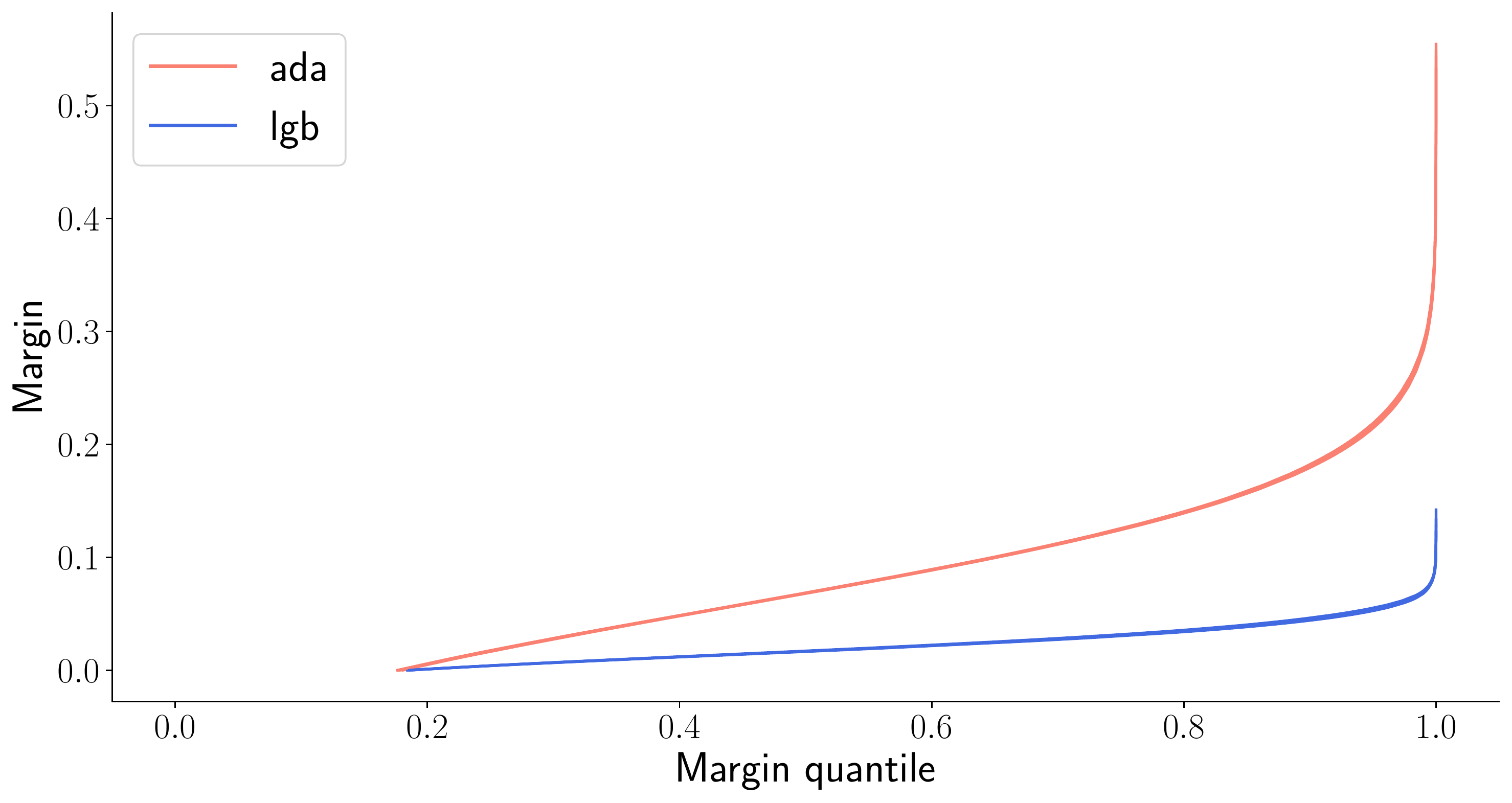}
    \caption{Sorted margin values.}
      \label{fig:simple_margin_higgs}
  \end{subfigure}
  \caption{Accuracy and margin plots for AdaBoost and LightGBM on the Higgs data set}
  \label{fig:higgs}
\end{figure}

\begin{figure}[h]
  \begin{subfigure}[t]{0.5\textwidth}
    \centering
    \includegraphics[width=.9\linewidth]{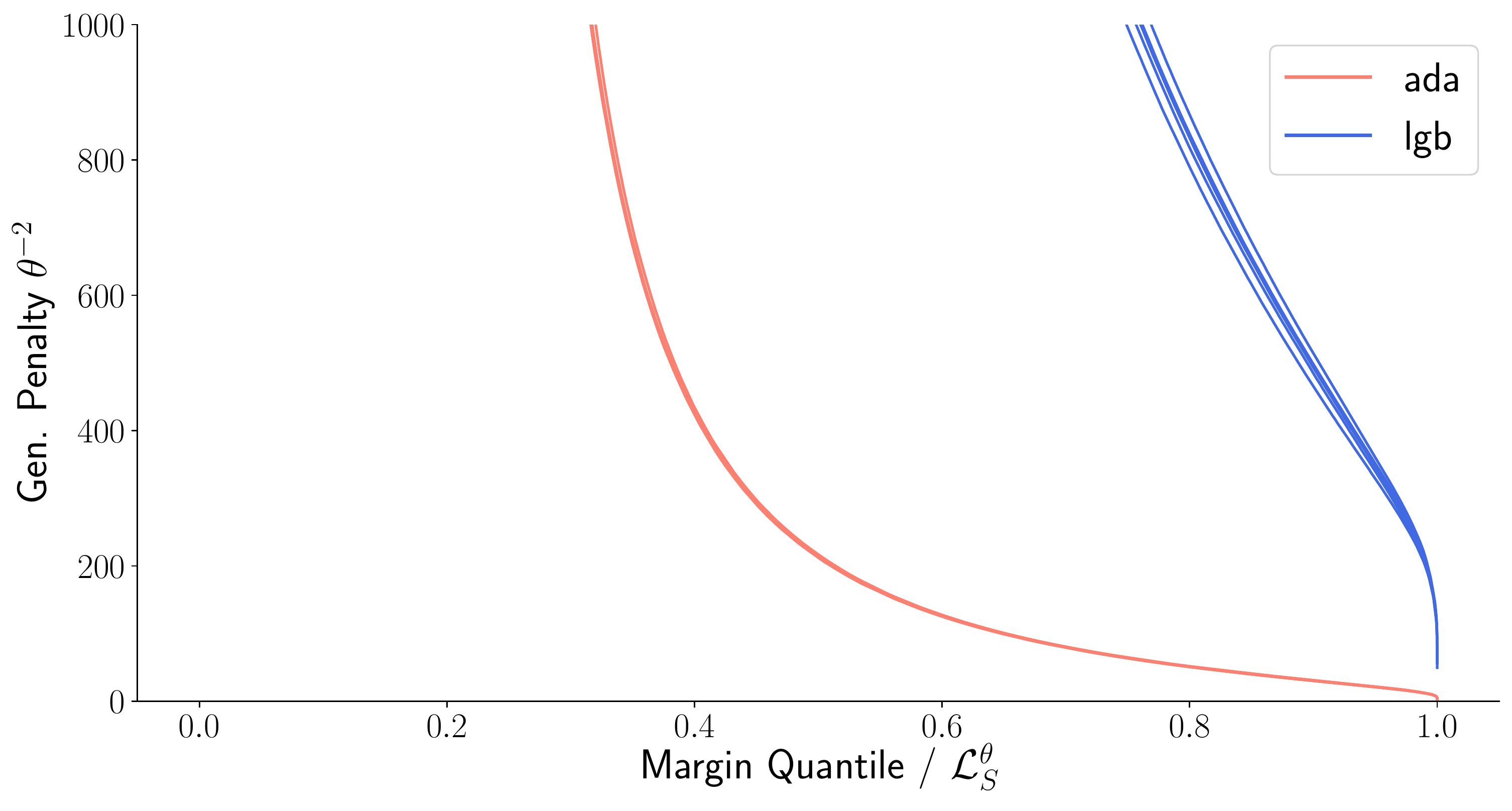}
    \caption{Plot of $\theta^{-2}$ when choosing $\theta$ as the $(pm)$'th smallest margin for $p \in [0,1]$.
      The margins are those also shown in Figure~\ref{fig:simple_margin_higgs}.}
         \label{fig:thetasquare_higgs}
  \end{subfigure}
    $\quad$
    \begin{subfigure}[t]{0.5\textwidth}    \centering
    \includegraphics[width=.9\linewidth]{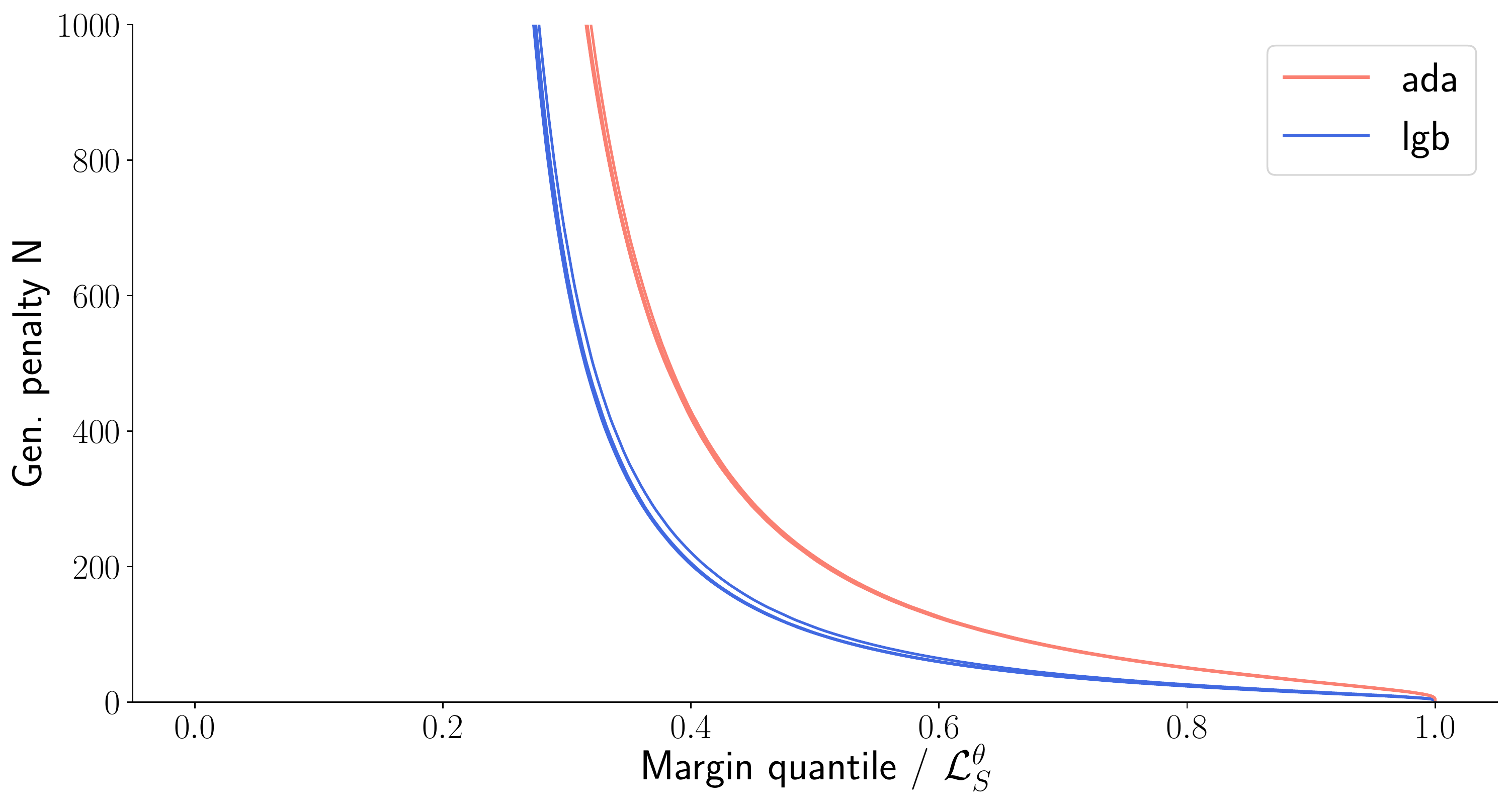}
        \caption{Generalization penalty $N$ when choosing $\theta$ as the $(pm)$'th smallest margin for $p \in [0,1]$.}
    \label{fig:N_higgs}
  \end{subfigure}%
    \caption{Comparing generalization penalties on the Higgs data set.}
    \label{fig:higgs_exp}
\end{figure}

\begin{figure}[!ht]
    \centering
    \includegraphics[width=.7\linewidth]{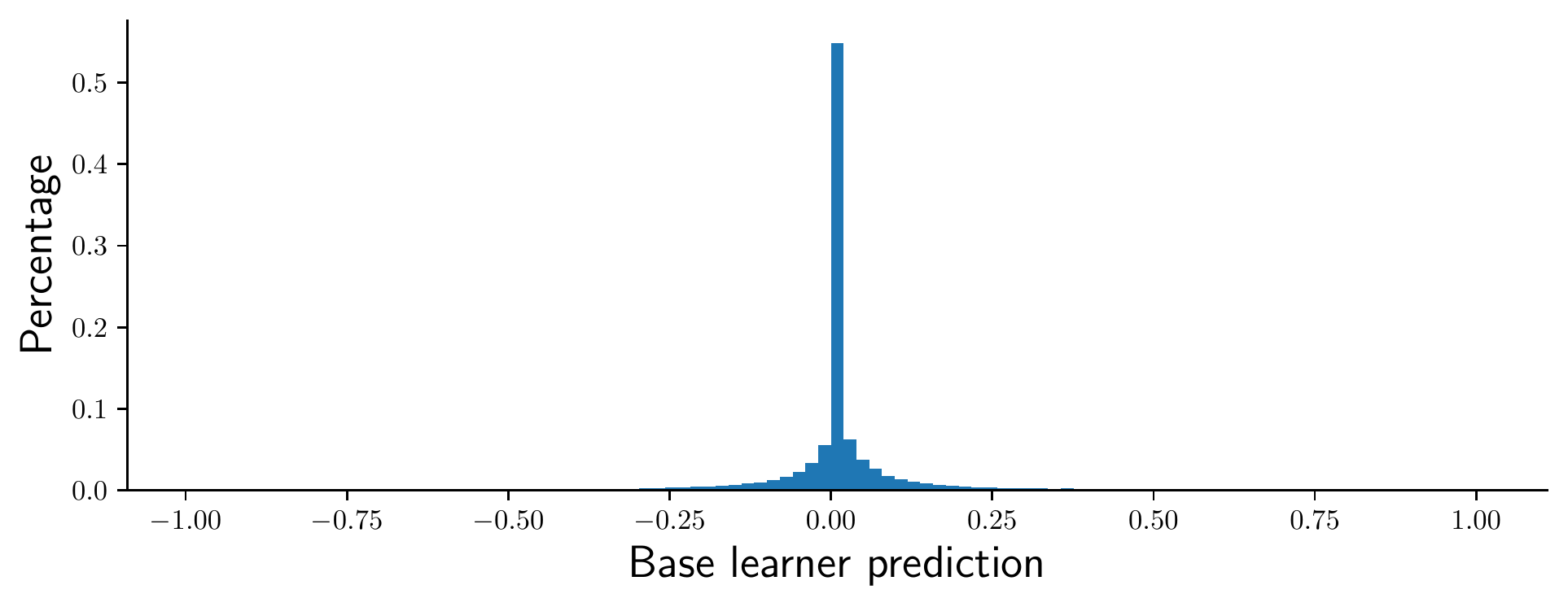}
    \caption{Histogram of base learner predictions for LightGBM on the Higgs data set.
      The number of large predictions in the base learners on the training data ($|h(x) \geq 0.95|$) is less than 1 percent (0.07 percent). }
    \label{fig:higgs_histogram}  
\end{figure}

\subsection*{Boone}
In Figure \ref{fig:boone}, \ref{fig:gen_boone}, \ref{fig:boone_histogram} we see the result of our new refined margin analysis on the Boone data set, trained with a learning rate 0.2 for LightGBM and a max tree size of 96. The results are in perfect agreement with the results shown for Forest Cover and Higgs. 
\begin{figure}[h]
  \begin{subfigure}[t]{0.5\textwidth}    \centering
    \includegraphics[width=.9\linewidth]{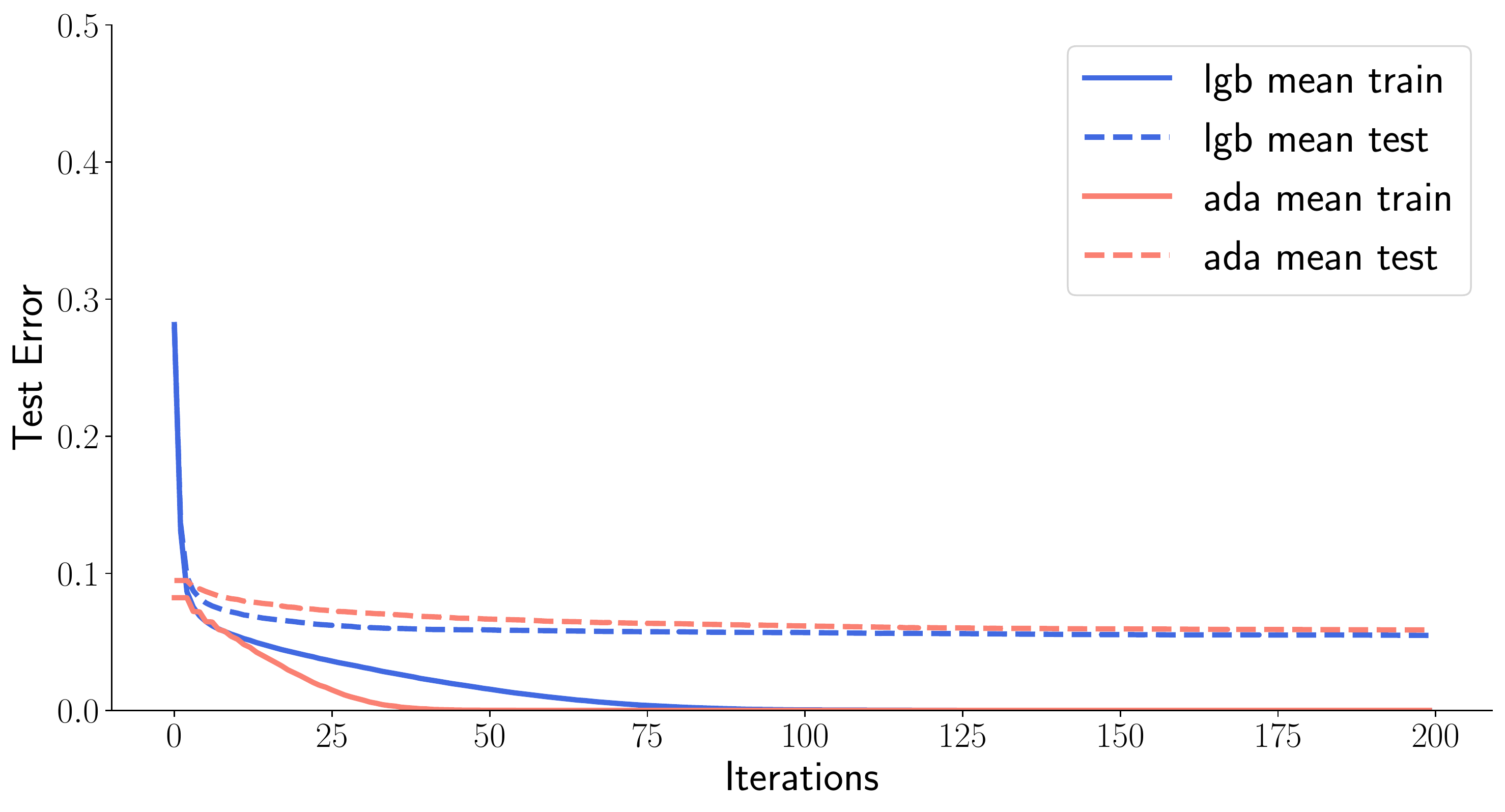}
    \caption{Mean training and test error over five runs.  The std. deviation of the test error after the last iteration is approx. 0.0006 for LightGBM and 0.001 for AdaBoost.}
    \label{fig:simple_test_boone}
  \end{subfigure}%
  $\quad$
  \begin{subfigure}[t]{0.5\textwidth}
    \centering
    \includegraphics[width=.9\linewidth]{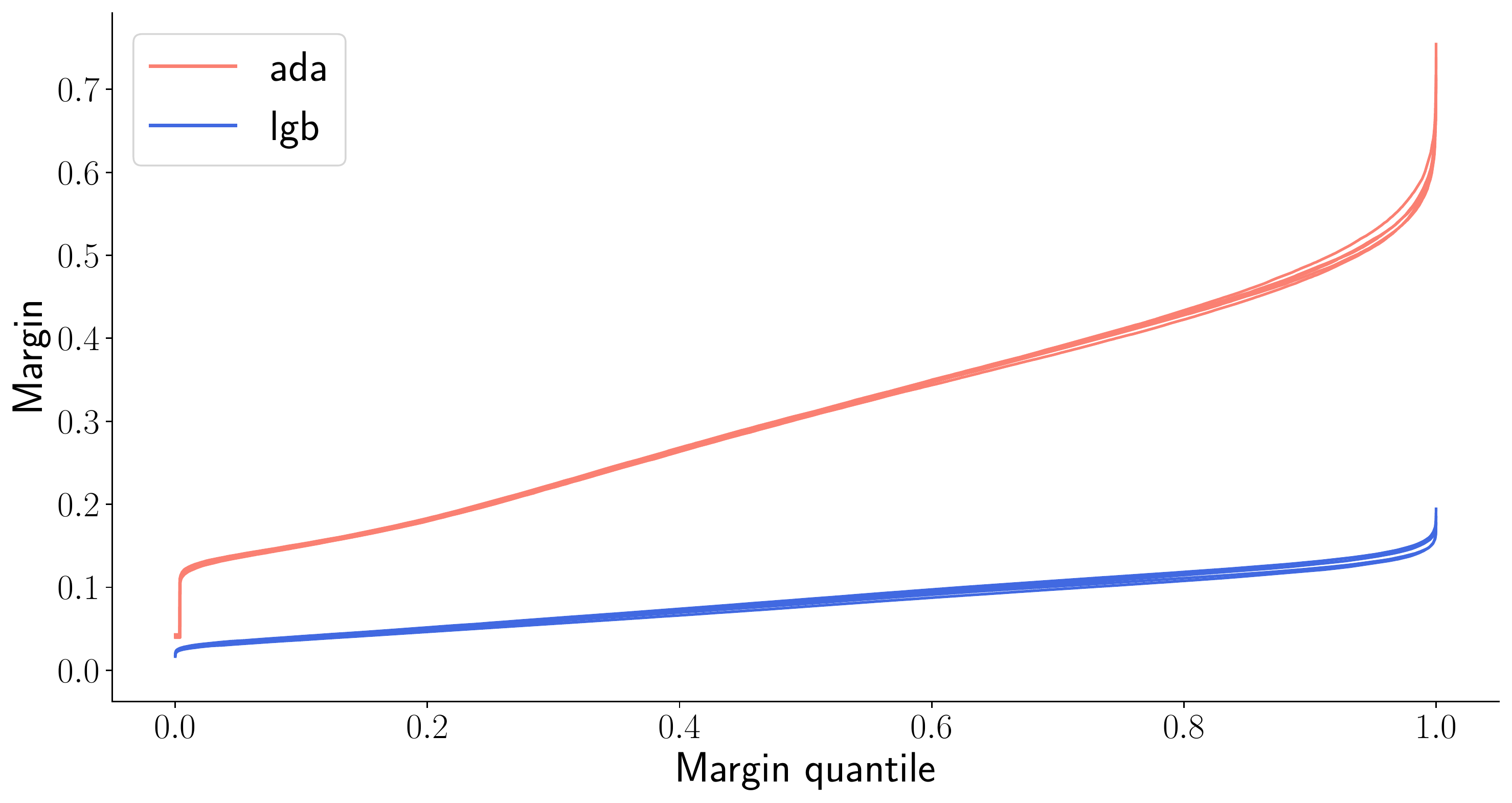}
    \caption{Sorted margin values.}
      \label{fig:simple_margin_boone}
  \end{subfigure}
  \caption{Accuracy and margin plots for AdaBoost and LightGBM on the Boone data set.}
  \label{fig:boone}
\end{figure}

\begin{figure}[H]
  \begin{subfigure}[t]{0.5\textwidth}
    \centering
    \includegraphics[width=.9\linewidth]{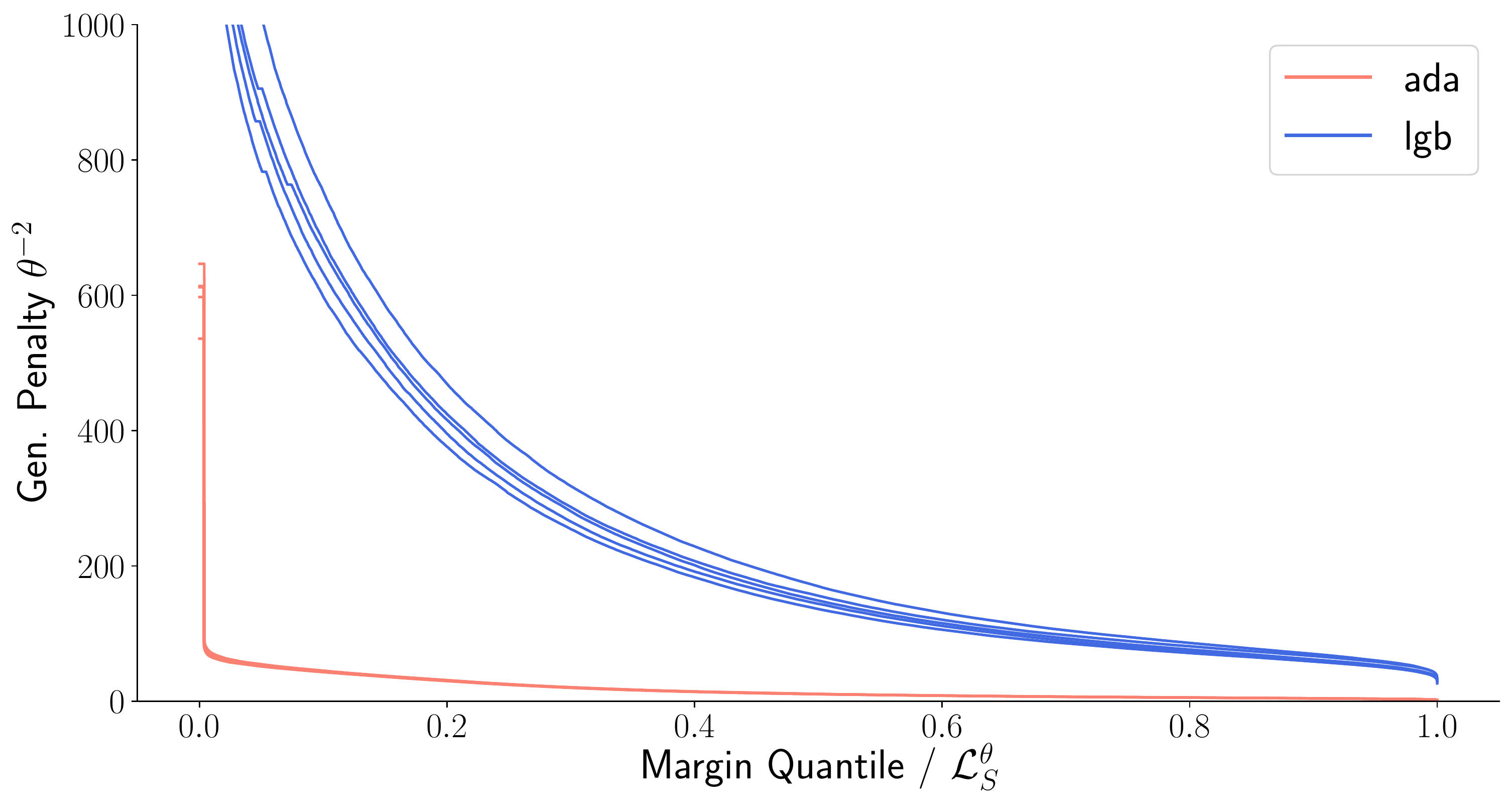}
    \caption{Plot of $\theta^{-2}$ when choosing $\theta$ as the $(pm)$'th smallest margin for $p \in [0,1]$.
      The margins are those also shown in Figure~\ref{fig:simple_margin_boone}.}
         \label{fig:thetasquare_boone}
  \end{subfigure}
    $\quad$
    \begin{subfigure}[t]{0.5\textwidth}
    \centering
    \includegraphics[width=.9\linewidth]{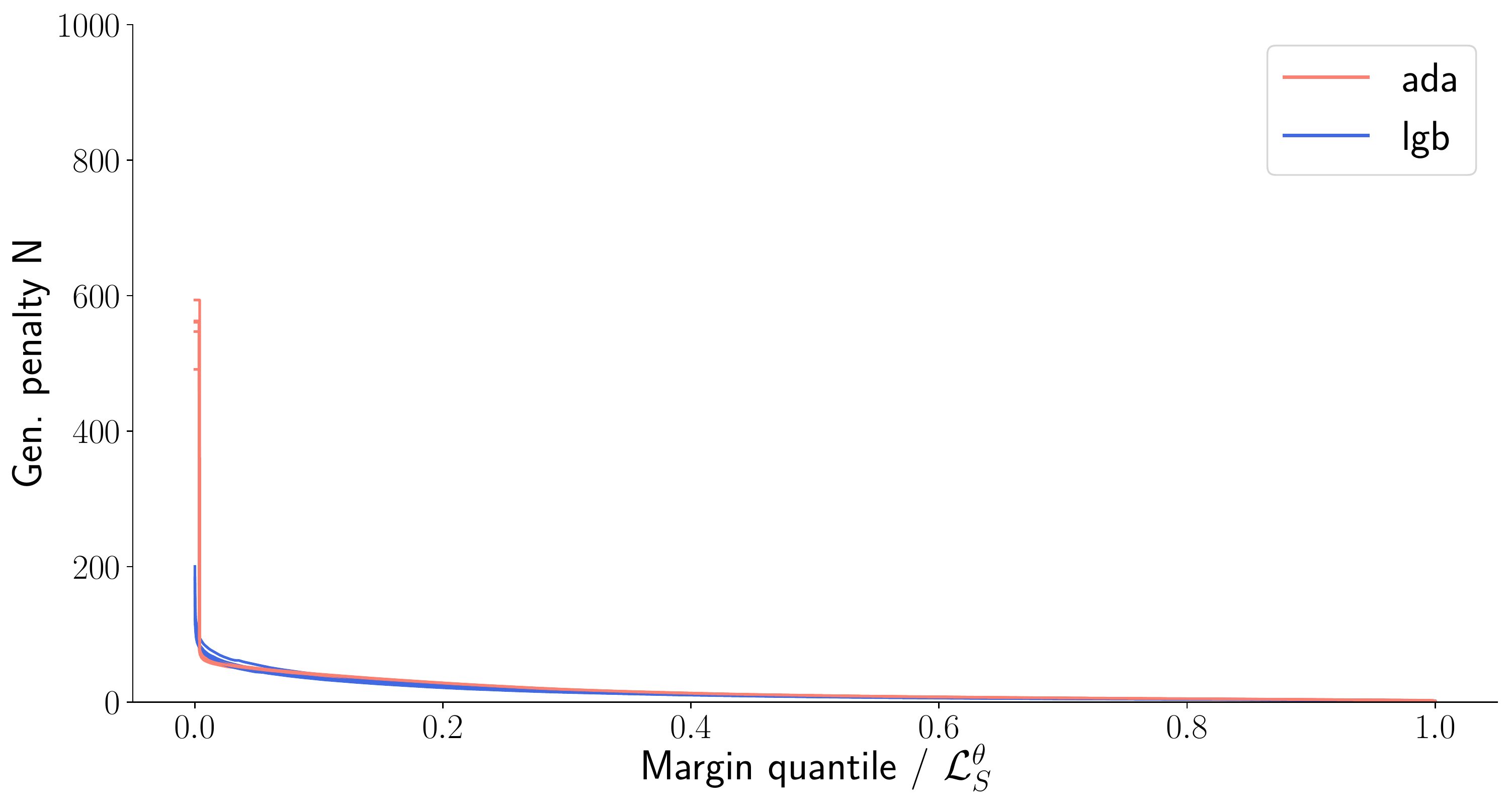}
        \caption{Generalization penalty $N$ when choosing $\theta$ as the $(pm)$'th smallest margin for $p \in [0,1]$.}
    \label{fig:N_boone}
  \end{subfigure}%
    \caption{Comparing Generalization penalties on the Boone data set.}
    \label{fig:gen_boone}
\end{figure}
\begin{figure}[ht]
    \centering
    \includegraphics[width=.7\linewidth]{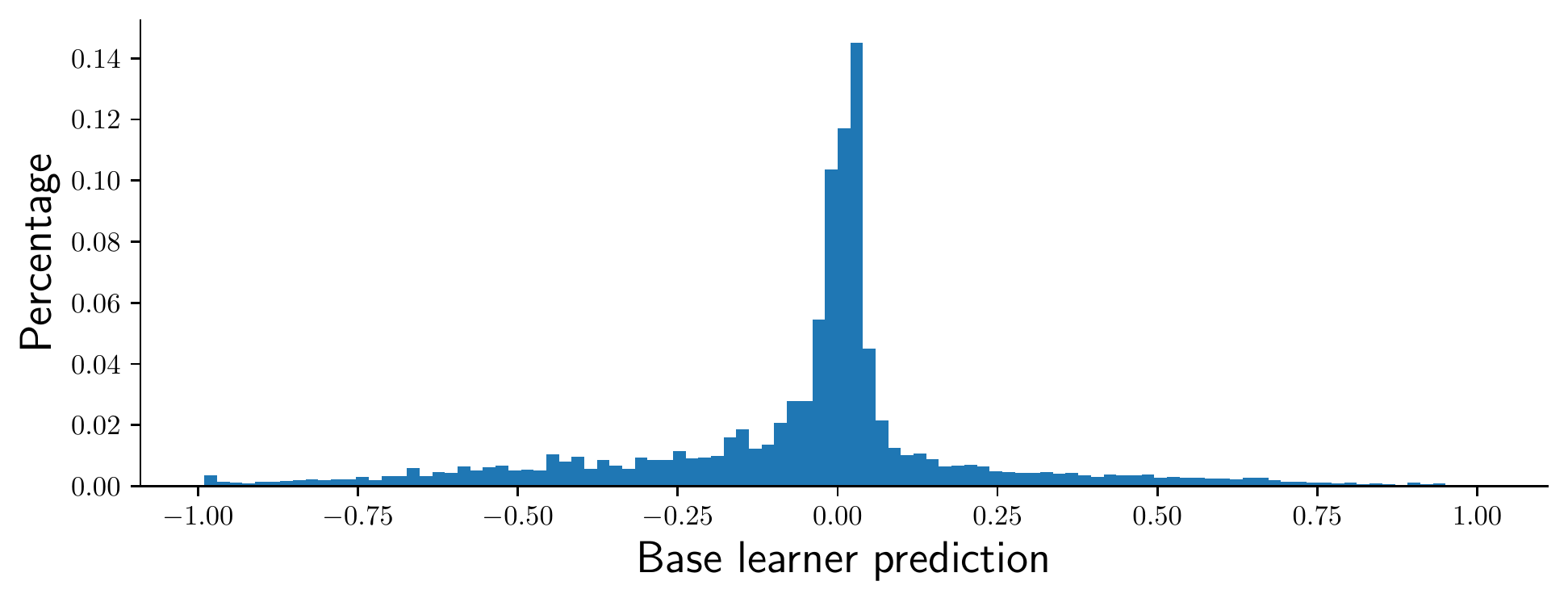}
    \caption{Histogram of base learner predictions for LightGBM on the Boone data set.
      The number of large predictions in the base learners on the training data ($|h(x) \geq 0.95|$) is less than 1 percent (0.67). }
    \label{fig:boone_histogram}  
\end{figure}

\subsection*{Diabetes}
Finally, in Figure \ref{fig:diabetes}, \ref{fig:gen_diabetes}, \ref{fig:diabetes_histogram} we see the result of testing our
new refined margin analysis on the much smaller Diabetes data set.
The LightGBM classifier has a  smaller generalization error when compared to AdaBoost. The margin distributions are harder to compare, but when we look at the
generalization errors in Figure \ref{fig:thetasquare_diabetes} it seems that AdaBoost achieves the better margin distribution.
However, when we consider our new bound in Figure \ref{fig:N_diabetes} we again get a better explanation of the observed performance of the two different methods. 

\begin{figure}[h]
  \begin{subfigure}[t]{0.5\textwidth}
    \centering
    \includegraphics[width=.9\linewidth]{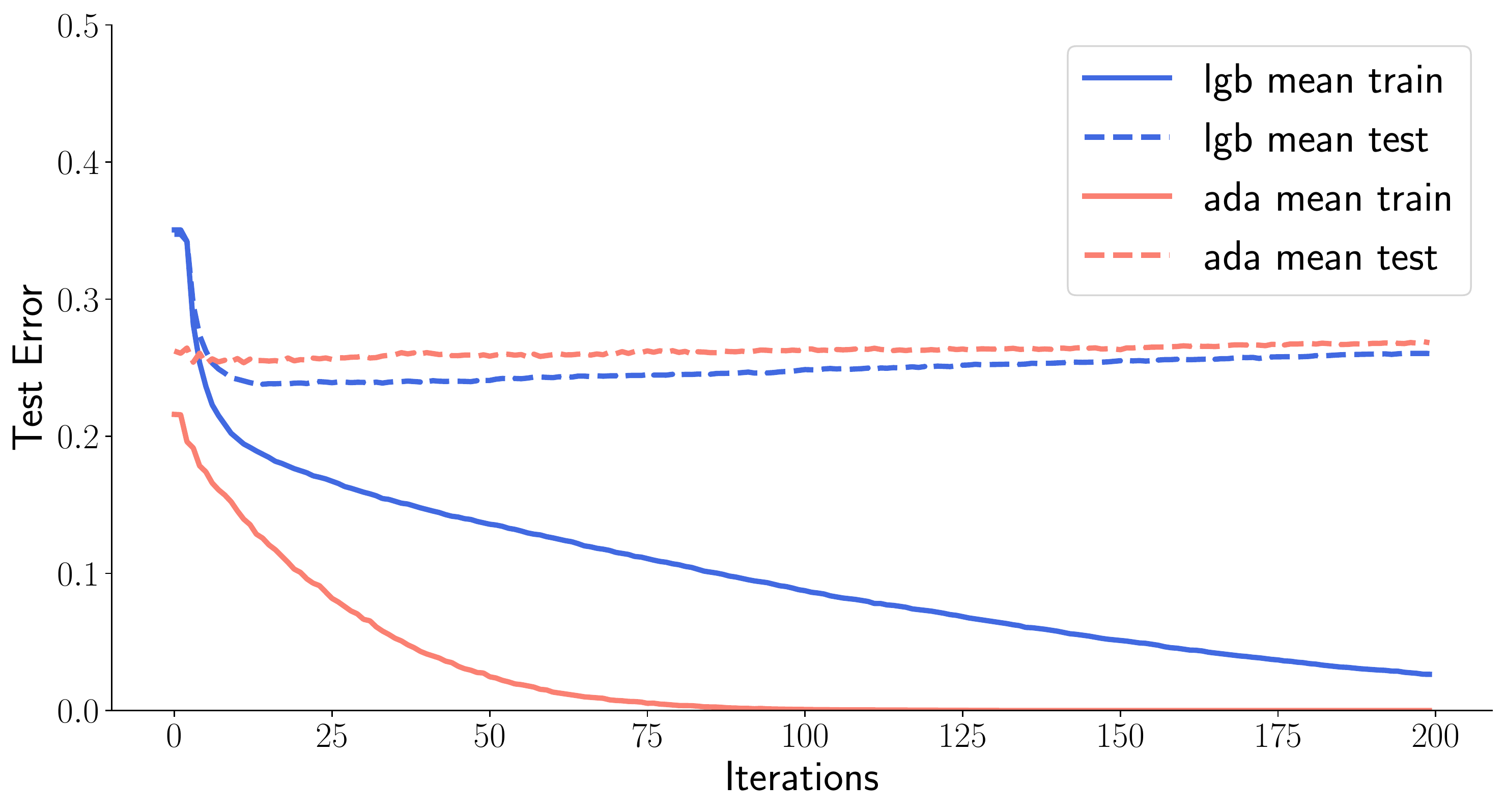}
    \caption{Mean training and test error over 10 runs.}
    \label{fig:simple_test_diabetes}
  \end{subfigure}%
  $\quad$
  \begin{subfigure}[t]{0.5\textwidth}
    \centering
    \includegraphics[width=.9\linewidth]{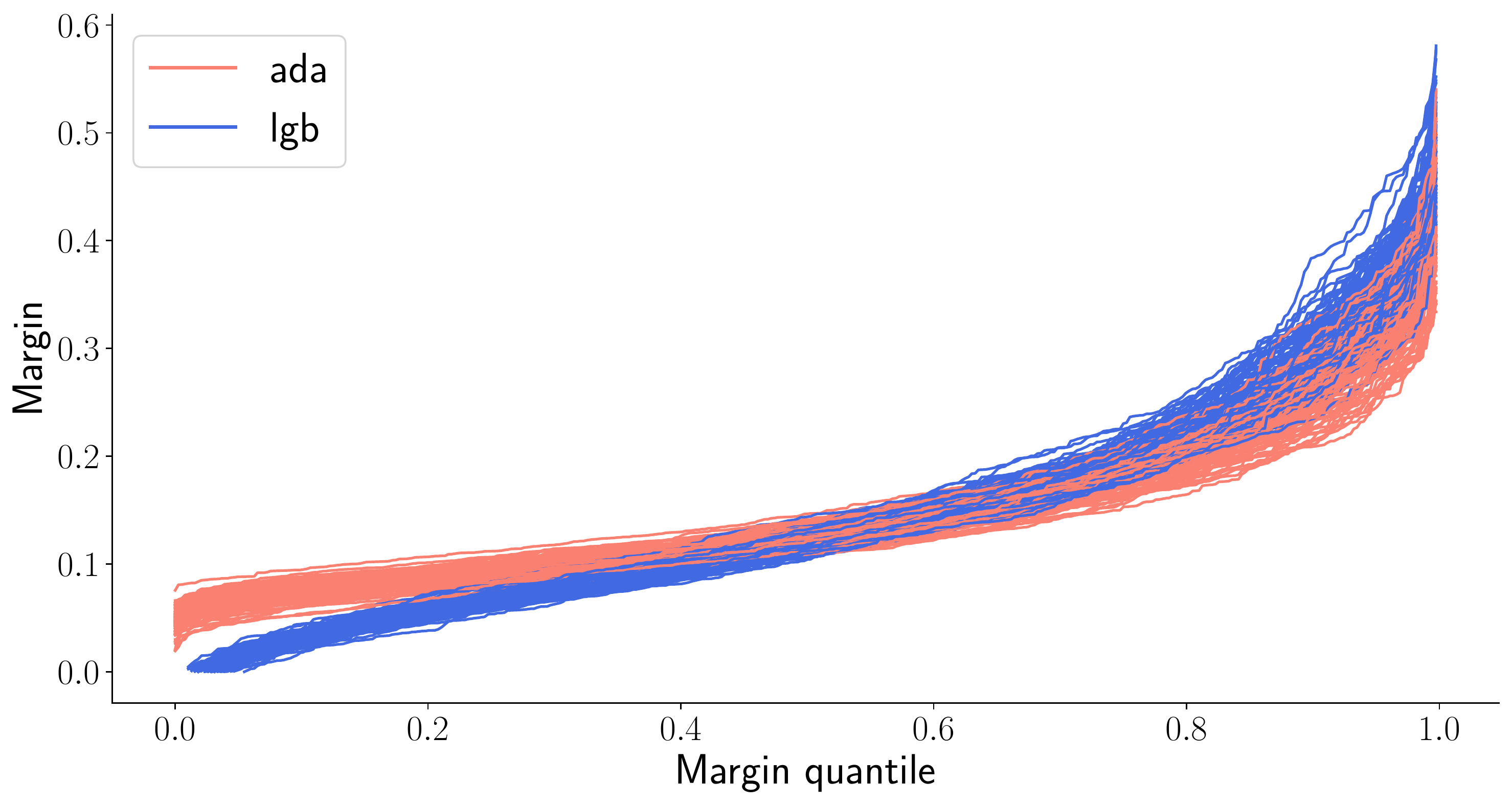}
    \caption{Sorted margin values.}
      \label{fig:simple_margin_diabetes}
  \end{subfigure}
  \caption{Accuracy and margin plots for AdaBoost and LightGBM on the Diabetes data set.}
  \label{fig:diabetes}
\end{figure}

\begin{figure}[H]
  \begin{subfigure}[t]{0.5\textwidth}
    \centering
    \includegraphics[width=.9\linewidth]{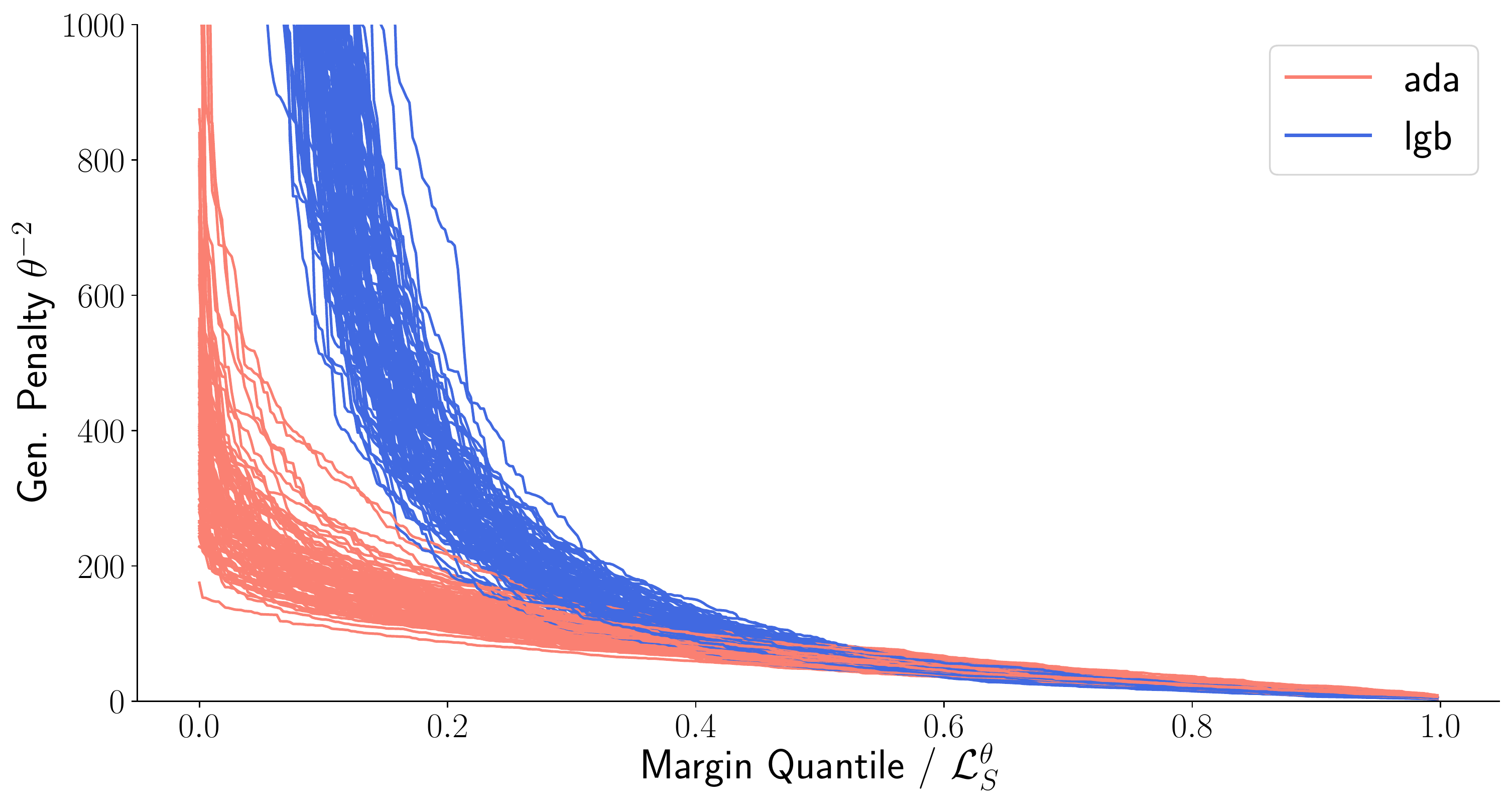}
    \caption{Plot of $\theta^{-2}$ when choosing $\theta$ as the $(pm)$'th smallest margin for $p \in [0,1]$.
      The margins are those also shown in Figure~\ref{fig:simple_margin_diabetes}.}
    \label{fig:thetasquare_diabetes}
  \end{subfigure}
    $\quad$
    \begin{subfigure}[t]{0.5\textwidth}
    \centering
    \includegraphics[width=.9\linewidth]{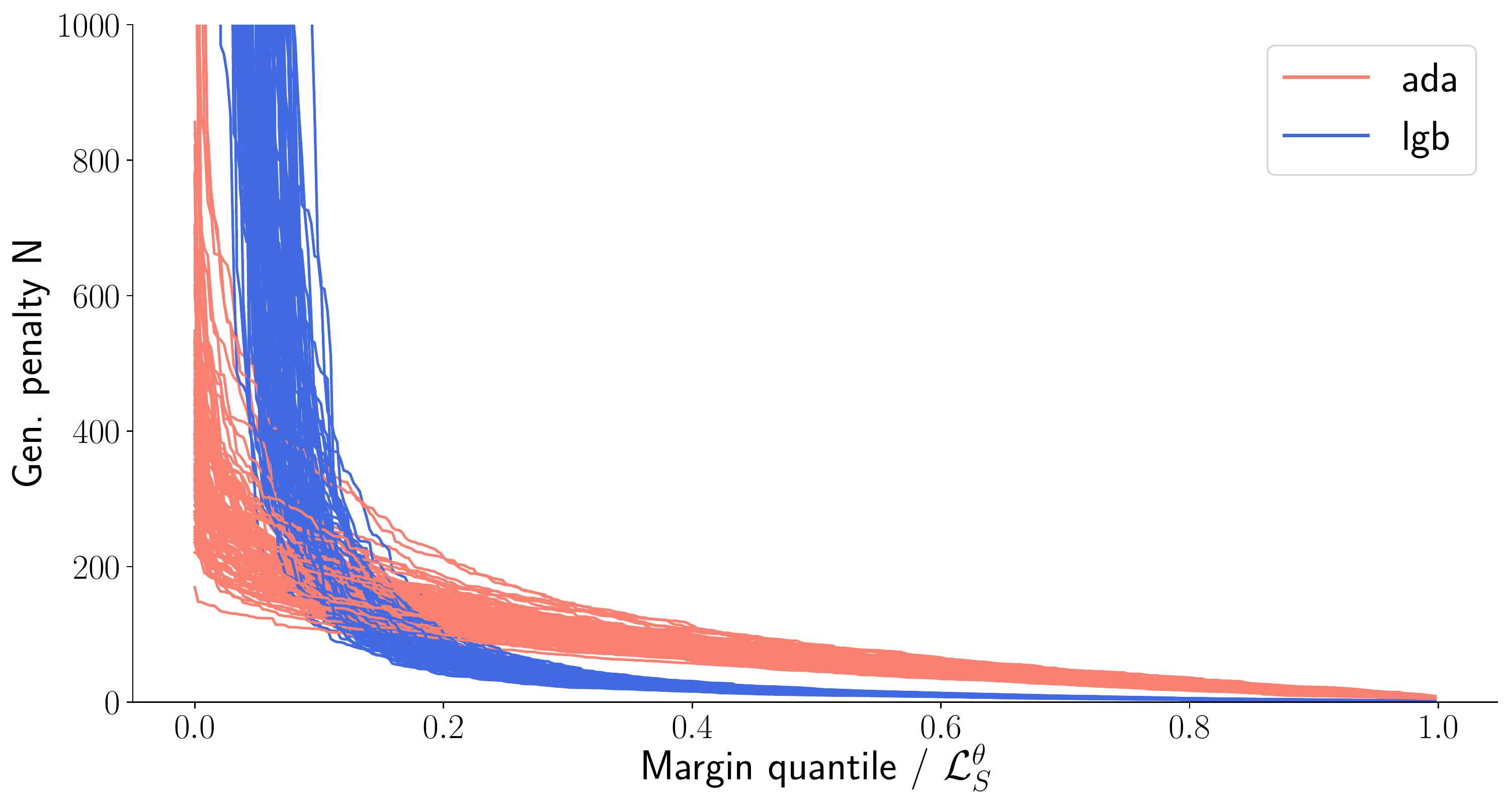}
        \caption{Generalization penalty $N$ on the Boone data set when choosing $\theta$ as the $(pm)$'th smallest margin for $p \in [0,1]$.}
    \label{fig:N_diabetes}
  \end{subfigure}%
    \caption{Comparing Generalization penalties on the Diabetes data set.}
    \label{fig:gen_diabetes}
\end{figure}

\begin{figure}[ht]
  \centering
  \includegraphics[width=.7\linewidth]{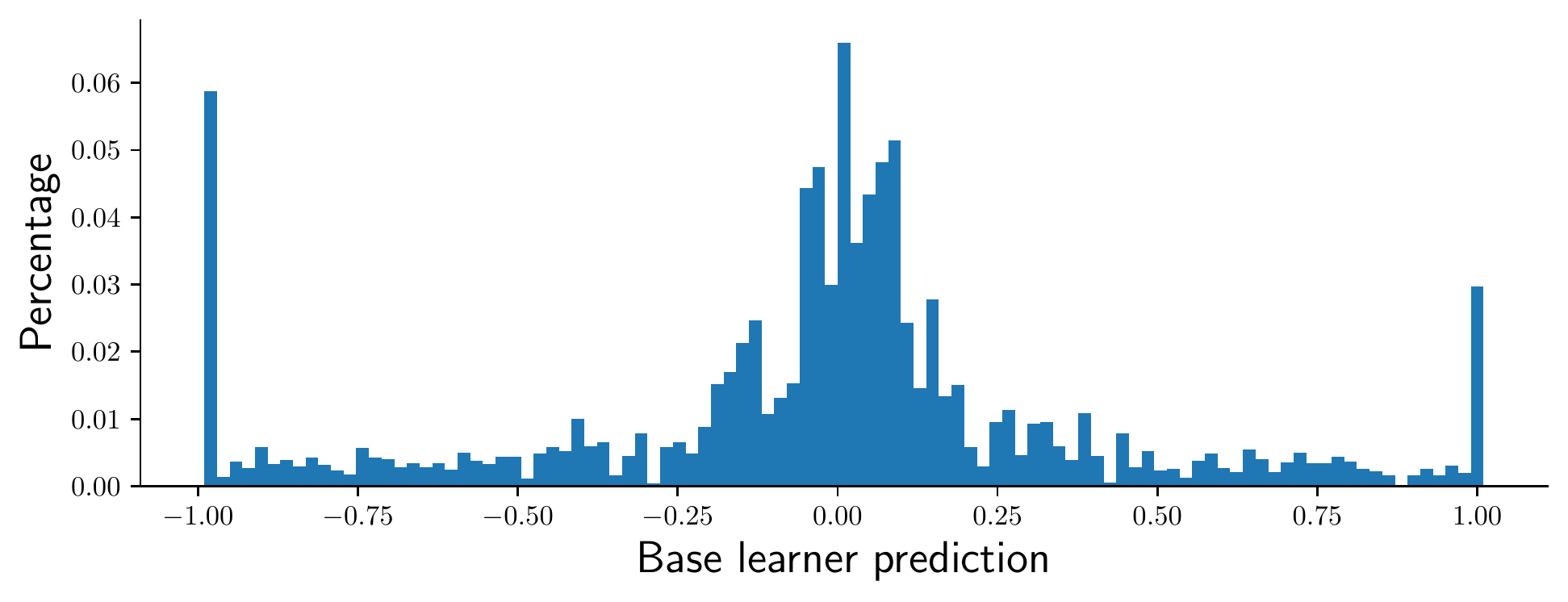}
  \caption{Histogram of base learner predictions for LightGBM on the Diabetes data set. The number of large predictions in the base learners on the training data
    ($|h(x) \geq 0.95|$) is 9.5 percent. }
  \label{fig:diabetes_histogram}  
\end{figure}
\end{document}